\newtheorem{thm}{Theorem}
\newtheorem{lem}{Lemma}
\newtheorem{defi}{Definition}
\newtheorem{ass}{Assumption}
\def\x{{\bf{x}}}
\def\z{{\bf{z}}}
\def\w{{\bf{w}}}
\def\v{{\bf{v}}}
\def \E {\mathbb{E}}
\def \u {{\bf{u}}}
\def \p {{\bf{p}}}
\def \R {{\mathbb{R}}}
\def\R{\mathbb{R}}
\def \R {\mathbb{R}}
\def \t_kt {\widetilde{t_k}}
\def \t_k {\mathbf{t_k}}
\def \w {\mathbf{w}}
\def \v {\mathbf{v}}
\def \t {\mathbf{t}}
\def \x {\mathbf{x}}
\def \x {\mathbf{x}}
\def \p {\mathbf{p}}
\def \b {\mathbf{b}}
\def \d {\mathbf{d}}
\def \z {\mathbf{z}}
\def \u {\mathbf{u}}
\def \t_kh {\widehat t_k}
\def \x {\mathbf{x}}
\def \z {\mathbf{z}}
\def \u {\mathbf{u}}
\def \w {\mathbf{w}}
\def \R {\mathbb{R}}
\def \v {\mathbf{v}}
\def \d {\mathbf{d}}
\def \p {\mathbf{p}}
\def \b {\mathbf{b}}
\def \t {\mathbf{t}}
\def \t_k {\mathcal{t_k}}
\def \prox {\textbf{prox}}
\title{An Online Method for A Class of Distributionally Robust Optimization with Non-Convex Objectives} 
\author{Qi Qi$^{\dagger}\thanks{The first two authors make equal contributions. Correspondence to tianbao-yang@uiowa.edu.}$ , Zhishuai Guo$^{\dagger}$\footnotemark[1] , Yi Xu$^{\ddagger}$, Rong Jin$^{\ddagger}$, Tianbao Yang$^{\dagger}$\\  
$^\dagger$ Department of Computer Science, University of Iowa, Iowa City, IA 52242\\
$^\ddagger$ Machine Intelligence Technology, Alibaba Group\\
\href{mailto:qi-qi@uiowa.edu}{\{qi-qi, zhishuai-guo, tianbao-yang\}@uiowa.edu},\{yixu, jinrong.jr\}@alibaba-inc.com}
\begin{document}

\maketitle

\begin{center}
    First Version: June 17, 2020\footnote{We include more baselines and ablation studies suggested by peer reviewers.}
\end{center}
% It is OKAY to include author information, even for blind 
% submissions: the style file will automatically remove it for you
% unless you've provided the [accepted] option to the icml2020
% package.

% List of affiliations: The first argument should be a (short)
% identifier you will use later to specify author affiliations
% Academic affiliations should list Department, University, City, Region, Country
% Industry affiliations should list Company, City, Region, Country

% You can specify symbols, otherwise they are numbered in order.
% Ideally, you should not use this facility. Affiliations will be numbered
% in order of appearance and this is 

% this must go after the closing bracket ] following \twocolumn[ ...

% This command actually creates the footnote in the first column
% listing the affiliations and the copyright notice.
% The command takes one argument, which is text to display at the start of the footnote.
% The \icmlEqualContribution command is standard text for equal contribution.
% Remove it (just {}) if you do not need this facility.

%\printAffiliationsAndNotice{}  % leave blank if no need to mention equal contribution

\begin{abstract}
In this paper, we propose a practical online method for solving a class of distributionally robust optimization (DRO) with non-convex objectives, which has important applications in machine learning for improving the robustness of neural networks. In the literature, most methods for solving DRO  are based on stochastic primal-dual methods. However, primal-dual methods for DRO  suffer from several drawbacks: (1) manipulating  a high-dimensional dual variable corresponding to the size of data is time expensive; (2) they are not friendly to  online learning where data is coming  sequentially. To address these issues, we consider a class of DRO with an KL divergence regularization on the dual variables, transform the min-max problem into a compositional minimization problem, and propose  practical duality-free online stochastic methods without requiring a large mini-batch size. % for solving deep DRO with KL divergence regularization. %The proposed online stochastic method resembles the practical stochastic gradient method in several perspectives that are widely used for learning deep neural networks.  
We establish the state-of-the-art complexities of the proposed methods with and without  a Polyak-\L ojasiewicz (PL)  condition of the objective. 
%Under a Polyak-\L ojasiewicz (PL)  condition, we prove that the proposed method can enjoy an optimal sample complexity \textbf{without any requirements on a large batch size}. 
 Empirical studies on large-scale deep learning tasks (i) demonstrate that our method can speed up the training by more than 2 times than baseline methods and save days of training time on a large-scale dataset with $\sim$ 265K images,  and (ii) verify the supreme performance of DRO over Empirical Risk Minimization (ERM) on imbalanced datasets. Of independent interest, the proposed method can be also used for solving a family of stochastic compositional problems with state-of-the-art complexities.
\end{abstract}

\section{Introduction}

Distributionally robust optimization (DRO) has received tremendous attention in machine learning due to its capability to handle noisy data, adversarial data and imbalanced classification  data~\cite{rahimian2019distributionally,namkoong2017variance,chen2017robust}. 
Given a set of observed data $\{\z_1, \ldots, \z_n\}$, where $\z_i = (\x_i, y_i)$, a DRO formulation can be written as:
\begin{align}\label{eqn:pd}
\min_{\w\in\R^d}\max_{\p\in\Delta_n}F_\p(\w)= \sum_{i=1}^np_i\ell(\w; \z_i)  - h(\p, \mathbf 1/n) +  r(\w),
\end{align}
where  $\w$ denotes the model parameter, $\Delta_n=\{\p\in\R^n: \sum_ip_i =1, p_i\geq 0\}$ denotes a $n$-dimensional simplex,  $\ell(\w; \z)$ denotes a loss function on  data $\z$, $h(\p, \mathbf 1/n)$ is a divergence measure between $\p$ and uniform probabilities $\mathbf 1/n$,  and $r(\w)$ is convex regularizer of $\w$. When $\ell(\w; \z)$ is a convex function (e.g., for learning a linear model), many stochastic primal-dual methods can be employed for solving the above min-max problem~\cite{nemirovski2009robust,juditsky2011solving,yan2019stochastic,yan2020sharp,namkoong2016stochastic}. When $\ell(\w; \z)$ is a non-convex function (e.g., for learning a deep neural network),  some recent studies also proposed stochastic primal-dual methods~\cite{rafique2018non,yan2020sharp}. 

However, stochastic primal-dual methods for solving DRO problems with a non-convex $\ell (\w;\z)$ loss function  (e.g., the predictive model is a deep neural network) suffer from several drawbacks. First, primal-dual methods  need to maintain and update a high-dimensional dual variable $\p\in\R^n$ for large-scale data, whose memory cost is as high as $O(n)$ per-iteration. %The memory cost is prohibited in GPU when $n$ is huge, {which is not friendly to large-scale deep learning applications}. 
Second, existing primal-dual methods usually need to sample data according to probabilities $\p$ in order to update $\w$, which brings additional costs than random sampling. Although random sampling can be used for computing the stochastic gradient in terms of $\w$, the resulting stochastic gradient could have $n$-times larger variance than using non-uniform sampling according to $\p$ {(please refer to the supplement for a simple illustration)}. 
Third, due to the constraint on $\p\in\Delta_n$, the min-max formulation~(\ref{eqn:pd}) is not friendly to online learning in which the data is received sequentially and $n$ is rarely known in prior.  

%To see this, the variance of stochastic gradient in terms of $\w$ with random sampling is given by $\text{Var}_{r}=1/n\sum_{i=1}^n\|np_i\nabla\ell(\w; \z_i) - \nabla L(\w, \p)\|^2 = \sum_{i=1}^nnp_i^2\|\nabla\ell(\w; \z_i)\|^2 - \|\nabla L(\w,\p)\|^2$, where $L(\w, \p) =\sum_{i=1}^np_i\ell(\w; \z_i)$. In contrast,  the variance of stochastic gradient in terms of $\w$ with non-uniform sampling according to $\p$ is given by $\text{Var}_{n}=\sum_{i=1}^np_i\|\nabla\ell(\w; \z_i) - \nabla L(\w, \p)\|^2 = \sum_{i=1}^np_i\|\nabla\ell(\w; \z_i)\|^2 - \|\nabla L(\w,\p)\|^2$. Let us consider an extreme case when $p_i=1, p_j=0, \forall j\neq i$, we have $\text{Var}_{r} = (n-1)\|\nabla\ell(\w; \z_i)\|^2\gg \text{Var}_{n} = 0$. 

%The primal-dual formulation is also not friendly to deep learning applications, in which data augmentation is widely used. With data augmentation, the training examples are not precomputed but instead are randomly constructed online during the optimization process. 

\ovalbox{\begin{minipage}[t]{0.95\columnwidth}%
\it Can we design an efficient online algorithm to address the DRO formulation~(\ref{eqn:pd}) without dealing with $\p\in\R^n$ {for a non-convex objective that is applicable to deep learning?}%
\end{minipage}}

\noindent
To address this question, we restrict our attention to a family of DRO problems, in which the KL divergence  $h(\p, \mathbf 1/n)=\lambda\sum_i p_i\log(np_i)$ is used for regularizing the dual variables $\p$, where $\lambda>0$ is a regularization parameter. We note that this consideration does not impose strong restriction to the modeling capability. It has been shown that for a family of divergence functions $h(\p, \mathbf 1/n)$, different DRO formulations are statistically equivalent to a certain degree~\cite{duchi2016statistics}. The proposed method is based on an equivalent minimization formulation for $h(\p, \mathbf 1/n)=\lambda\sum_i p_i\log(np_i)$. In particular, by maximizing over $\p$ exactly,~(\ref{eqn:pd}) is equivalent to 
\vspace{-0.1in}
\begin{align}
\label{eqn:cp-fs}
\min_{\w\in\R^d}&\Big\{F_{dro}(\w) = \lambda \log\left(1/n\sum_{i=1}^n\exp(\ell(\w; \z_i)/\lambda))\right)+ r(\w)\Big\}.
\end{align}
In an online learning setting, we can consider a more general formulation: 
\begin{align}\label{eqn:cp}
\min_{\w\in\R^d}\Big\{F_{dro}(\w) = \lambda \log\left(\E_{\z}\exp\left(\ell(\w; \z)/\lambda\right)\right) + r(\w)\Big\}.
\end{align}

The above problem is an instance of stochastic compositional problems of the following form: %whose objective is given by $ F(\w) :=f(\E_{\xi}[g_{\xi}(\w)]) + r(\w)$,
\begin{equation}
\label{equ:p}
    \begin{aligned}
       \min\limits_{\w\in \R^d} F(\w) :=f(\E_{\z}[g_{\z}(\w)]) + r(\w),
    \end{aligned}
\end{equation}
by setting $f(s)=\lambda\log(s), s\geq 1$ and $g_\z(\w) = \exp(\ell(\w;\z)/\lambda)$. Stochastic algorithms have been developed for solving the above compositional problems.~\cite{wang2017stochastic} proposed the first stochastic algorithms for solving~(\ref{equ:p}), which are easy to implement. However, their sample complexities are sub-optimal for solving~(\ref{equ:p}). Recently, a series of works have tried  to improve the convergence rate by using {advanced variance reduction techniques (e.g., SVRG~\cite{johnson2013accelerating}}, SPIDER~\cite{fang2018spider}, SARAH~\cite{nguyen2017sarah}). However, most of them require using {a mega mini-batch size in the order of $O(n)\ \text{or}\  O(1/\epsilon)\footnote{$\epsilon$ is either the objective gap accuracy $F(\w) -\min F(\w)\leq \epsilon$ or the gradient norm square bound $\|\nabla F(\w)\|^2\leq \epsilon$}$} {\bf at every iteration or many iterations}  for updating $\w$, which {hinders their applications on large-scale problems}. % As a result, their {\it round complexity} is not necessarily the best, which is defined as the number of gradient computations divided by the a fixed mini-batch size commonly used in deep learning. 
 In addition, these algorithms usually use a constant step size, which may harm the generalization performance. 

This paper aims to develop more practical stochastic algorithms for solving~(\ref{eqn:cp}) without suffering from the above issues in order to enable practitioners to explore the capability of DRO for deep learning with irregular data (e.g., imbalanced data, noisy data). To this end, we proposed an online stochastic method (COVER) and its restarted variant (RECOVER). We establish a state-of-the-art complexity of COVER for finding an $\epsilon$-stationary solution and a state-of-the-art complexity of RECOVER under a  Polyak-\L ojasiewicz (PL) condition of the problem.  PL condition has been widely explored for developing practical optimization algorithms for deep learning~\cite{yuan2019stagewise}. %\textcolor{red}{PL condition (Assumption~\ref{ass:PL}) has been proved for many deep learning problems~\cite{du2018gradient, xie2016diversity, allen2018convergence} and are also leveraged by practical stochastic algorithms for deep learning~\cite{yuan2019stagewise}. 
%In this paper, by taking the deep learning objective into consideration,  we propose an online algorithm called REstarted Compositional Optimal VariancE Reduction (RECOVER) for solving equation~(\ref{equ:p}) under a PL condition of $F(\w)$,i.e., $\|\nabla F(\w)\|^2\geq 2\mu(F(\w) - \min_{\w}F(\w))$.  
Compared with other stochastic algorithms, the practical advantages of RECOVER are:
\begin{enumerate}
    \item RECOVER is an online duality-free algorithm for  addressing  large-scale KL regularized DRO problem that is independent of the high dimensional dual variable $\p\in \R^n$, which makes it suitable for deep learning applications. % addressing the large-scale KL regularized DRO problem.
%    \item  it does not require any large batch size to achieve the optimal sample complexities (Table \ref{tab:comparision}).
\item RECOVER also enjoys the benefits of stagewise training  similar to existing stochastic methods for deep learning~\cite{yuan2019stagewise}, i.e., the step size is decreased geometrically in a stagewise manner. %, i.e., a constant step size is used for a number of iterations and then is decreased by a constant factor, which yields faster convergence than the constant or the polynomial decreasing step sizes.  ii) the update of $\w$ is based on a combination of Nesterov's momentum method and recursive variance reduction technique inspired by~\cite{cutkosky2019momentum}.
\end{enumerate}
In addition, this paper also makes several important theoretical contributions for stochastic non-convex optimization, including
\begin{enumerate}
    \item We establish a nearly optimal complexity for finding $\epsilon$-stationary point, i.e., $\|\nabla F(\w)\|^{2}\leq \epsilon$,  for a class of two-level compositional problems in the order of $\widetilde O(1/\epsilon^{3/2})$ without a large mini-batch size, which is better than existing results~\cite{wang2017stochastic,wang2017accelerating,ghadimi2020single,chen2020solving}. 
    \item We etablish an optimal complexity for finding $\epsilon$-optimal solution under an $\mu-$PL condition for a class of two-level compositional problems in the order of $O(1/(\mu\epsilon))$ without a large mini-batch size, which is better than existing results~\cite{zhang2019stochastic}.
\end{enumerate}
A theoretical comparison between our results and existing results is shown in Table~\ref{tab:comparision}. Empirical studies vividly demonstrate the effectiveness of RECOVER for deep learning on imbalanced data.%}

%The proposed algorithm achieves an optimal sample complexity (Table~\ref{tab:comparision}) and also includes the advantages of practical stochastic algorithms for deep learning (SGD, stochastic momentum methods) in that (i) it doesn't require large batchsizes that widely required by other variance reduction algrithms; (ii) step size is decreased geometrically  in a stagewise manner, i.e., a constant step size is used for a number of iterations and then is decreased by a constant factor; This stagewise step size scheme yields much faster convergence than polynomially decreasing step sizes;    (iii)  In addition, our algorithm achieves an optimal sample complexity. 

\section{Related Work}
%\vspace*{-0.15in}
DRO has been extensively studied in machine learning~\cite{mohri2019agnostic,deng2020distributionally,qi2020simple}, statistics, and operations research~\cite{rahimian2019distributionally}. In~\cite{namkoong2017variance}, the authors proved that minimizing the DRO formulation with a quadratic regularization in a constraint form is equivalent to minimizing the sum of the empirical loss and a variance regularization defined on itself. Variance regularization can enjoy better generalization error compared with the empirical loss minimization~\cite{namkoong2017variance}, and was also observed to be effective for imbalanced data~\cite{namkoong2017variance,zhu2019robust}. Recently, \cite{duchi2016statistics} also establishes this equivalence for a broader family of regularization function $h(\p, \mathbf 1/n)$ including the KL divergence. 

% Many works have attempted to develop fast stochastic algorithms for solving the min-max DRO problem. 
% \cite{namkoong2016stochastic} applied a classical stochastic primal-dual method for convex loss functions $\ell(\w; \z)$. In some cases of $h(\p, \mathbf 1/n)$(e.g., KL divergence), the per-iteration cost for updating $\p$ can be reduced to $O(\log(n))$ by utilizing a binary tree data structure~\cite{namkoong2016stochastic} but the memory cost for big data is prohibited in GPU. Recently, proposed a stochastic robust method that the per-iteration cost for updating $p$ in a large mini-batch size $O(1/\epsilon)$
% But both of them suffers from a slow convergence rate in the order of $O(1/\sqrt{T})$ with $T$ iterations. When the loss function is convex and smooth, one might use variance reduction techniques to improve the convergence rate for the {min-max problems~\cite{DBLP:conf/nips/PalaniappanB16}}. When the loss function is convex but non-smooth, \cite{yan2020sharp} proposed a stochastic algorithm by leveraging the quadratic growth condition of the primal objective function in order to derive a fast rate of $O(1/T)$ for the primal objective gap. 

%In some cases of $h(\p, \mathbf 1/n)$(e.g., KL divergence), 

Several recent studies have developed stochastic primal-dual methods for solving DRO with a non-convex loss function $\ell(\w; \z)$ assuming it is smooth or weakly convex~\cite{rafique2018non,lin2020gradient,arXiv:2001.03724, yan2020sharp}. \cite{rafique2018non} proposed the first primal-dual methods for solving weakly convex concave problems. For online problems, their algorithms for finding an $\epsilon$-stationary solution whose gradient norm square is less than $\epsilon$ have a sample complexity of  $O(1/\epsilon^2)$ or $O(1/\epsilon^3)$ with or without leveraging the strong concavity of $h(\mathbf p, \mathbf 1/n)$ for finding an $\epsilon$-stationary point. %Their algorithms are based on the proximal point framework and also run in multiple stages with a stagewise decreasing step size. However, their step size is decreased polynomially in a stagewise manner and hence suffers from slow convergence. 
Recently Liu et al.~\cite{liu2019stochastic} proposed to leverage the PL condition of the objective function to improve the convergence for a non-convex min-max formulation of AUC maximization. Then, a PES-SGDA algorithm is proposed to solve a more general class of non-convex min-max problems by leveraging the PL condition \citep{guo2020fast}. Both \cite{liu2019stochastic} and \cite{guo2020fast} have used geometrically decreasing step sizes in a stagewise manner. However, their algorithms' complexity is in the order of $O(1/\mu^2\epsilon)$, which is worse than $O(1/\mu\epsilon)$ achieved in this paper. Similarly, \cite{yang2020global} also leveraged PL conditions to solve non-convex min-max problems and has a sample complexity of $O(1/\mu^2\epsilon)$. Nevertheless, the step size of their algorithm is decreased polynomially in the order of $O(1/t)$, which usually yields poor performance for deep learning. %\cite{huang2020accelerated} proposed zero-order and first-order variance reduction algorithms to solve the non-convex non-concave with constant minibtach size. However, polynomially decayed $O(1/t^{1/3})$ step size is required the same as others.

\begin{table*}[t]
\centering
\caption{Summary of properties of state-of-the-art algorithms for solving our DRO problem. %NCSM refers to Non-Convex Smooth problems. NCSM + PL denotes leveraging an additional PL condition assumption on  NCSM  problems. 
The sample complexity is measured in terms of finding an $\epsilon$-stationary point w/o PL condition, i.e., $\|\nabla F(\w)\|^{2}\leq \epsilon$, or achieving $\epsilon$-objective gap, i.e,  
 $F(\w) - \min_{\w}F(\w) \leq \epsilon$ with PL condition. $\widetilde{O}$ omits a logarithmic dependence over $\epsilon$. $n$ represents the size of datasets for a finite sum problem, $d$ denotes the dimension of $\w$. GDS represents whether the step size is geometrically decreased.  
}\label{tab:comparision}
\scalebox{0.8}{\begin{tabular}{c|c|c|c|c|c|c} \hline
          Settings               & Algorithms                                            & Sample Complexity & batch size & GDS $\eta$&\makecell{Memory\\ Cost} &Style\\ \hline
\multirow{4}{*}{w/o PL}    & PG-SMD2~\cite{rafique2018non}   &          $ O(n/\epsilon + 1/\epsilon^{2})$          &       $O(1) $   &  x      &$O(n+d)$ &  Primal-Dual \\
                         &  ASC-PG~\cite{wang2017accelerating}                                                     &       $ O(1/\epsilon^{2})$              &   $O(1)$        &  x       &$O(d)$  &Compositional\\
                         &   CIVR ~\cite{zhang2019stochastic}                                                    &      $ O(1/\epsilon^{3/2})$               &  $ O(1/\epsilon) $       &  x         &$O(d)$ &Compositional\\
                         &    \textbf{COVER} (This paper)                                                  &        $ \widetilde O(1/\epsilon^{3/2})$          & $ O(1)   $      &      x    & $O(d)$ &Compositional\\ \hline
\multirow{4}{*}{w/ PL} & Stoc-AGDA~\cite{yang2020global} &   $O(1/\mu^{2}\epsilon)$ &$O(1)$   & x   &$O(n+d)$&Primal-Dual \\
                         & PES-SGDA~\cite{guo2020fast}      &   $O(1/\mu^{2}\epsilon)$ &$O(1)$   & \checkmark     &$O(n+d)$& Primal-Dual  \\
                         &  RCIVR~\cite{zhang2019stochastic} &   $\widetilde{O}(1/\mu \epsilon)$ &$O(1/\epsilon)$   & x      &$O(d)$& Compositional   \\
                         & \textbf{RECOVER }(This paper)   &    \textcolor{red}{$O(1/\mu \epsilon)$ }& \textcolor{red}{$O(1)$}   &   \textcolor{red}{\checkmark}  &\textcolor{red}{$O(d)$}&Compositional  \\  \hline      
\end{tabular}}
\end{table*}

All the methods reviewed above require maintaining and updating both the primal variable $\w$ and a high dimensional dual variable $\p\in \R^n$. Recently, Levy et al.~\citep{levy2020large}   considered different formulations of DRO, which includes our considered KL-regularized DRO formulation as a special case. Their assumed that the loss function is convex and proposed a stochastic method with a sample complexity $O(1/\epsilon^3)$ for sovling the KL-regularized DRO formulation. In contrast, we provide a better sample complexity in the order of $O(1/\epsilon)$ under a PL condition without convexity assumption. Additionally, their method requires a large batch size in the order of $O(1/\epsilon)$, while our method only requires a constant batch size which is more practical. %for a special group of $\chi^2$ divergence measure set that the dual variable $\p$ can be approximated using a mini-batch of data in every update. However, it requires a batch size in the level of $O(1/\epsilon)$ which leads to a slow convergence $O(1/\epsilon^3)$. In addition, their theoretical analysis only works for the DRO with convex loss functions.
We also notice that a recent work~\cite{li2020tilted} and its extended version~\cite{li2021tilted} have considered a formulation similar to~(\ref{eqn:cp-fs}) and proposed a stochastic algorithm. However, their algorithm has a slower convergence rate with an $O(1/\epsilon^2)$ complexity for finding an $\epsilon$-stationary point and an $O(1/(\mu^2\epsilon))$ complexity for finding an $\epsilon$-optimal solution under a PL condition. Our work is a concurrent work appearing online earlier than~\cite{li2020tilted}.  To the best of our knowledge, this is the first work trying to solve the non-convex DRO problem with a duality-free stochastic method by formulating the min-max formulation into an equivalent stochastic compositional problem. 

\noindent
There are extensive studies for solving stochastic compositional problems.~\cite{wang2017stochastic} considered a more general family of stochastic compositional problems and proposed two algorithms. When the objective function is non-convex, their algorithm's complexity is $O(1/\epsilon^{7/2})$ for finding an $\epsilon$-stationary solution. This complexity was improved in their later works~\cite{ghadimi2020single}, reducing to $O(1/\epsilon^2)$. When the objective is smooth, several papers proposed to use variance reduction techniques (e.g., SPIDER, SARAH) to improve the complexity for finding a stationary point~\cite{zhang2019stochastic, DBLP:conf/aaai/HuoGLH18,zhou2019momentum,DBLP:journals/corr/abs-1809-02505}. The best sample complexity achieved for online problems is $O(1/\epsilon^{3/2})$~\cite{zhang2019stochastic,zhou2019momentum}. ~\cite{zhang2019stochastic} also considered the PL condition for developing a faster algorithm called restarted CIVR, whose sample complexity is $O(1/\mu\epsilon)$. However, these variance reduction-based methods require using a very large mini-batch size at many iterations, which has  detrimental influence on training deep neural networks~\cite{smith2018don}. To address this issue, ~\cite{cutkosky2019momentum} proposed a new technique called STORM that integrates momentum and the recursive variance reduction technique for solving stochastic smooth non-convex optimization. Their algorithm does not require a large mini-batch size at every iterations and enjoys a sample complexity of $O(\log^{2/3}(1/\epsilon)/\epsilon^{3/2})$ for finding an $\epsilon$-stationary point. However, their algorithm uses a polynomially decreasing step size, which is not practical for deep learning, and is not directly applicable to stochastic composite problems.  
%Table~\ref{tab:comparision} is the summary of different algorithms.

%\section{Warm-up: A Building Block}

\section{Preliminaries}
% \label{sec:COVER}
%\vspace*{-0.1in}
In this section, we provide some definitions and assumptions for next section. For more generality, we consider the stochastic compositional problem~(\ref{equ:p}): 
\begin{equation}
\label{eqn:p}
    \begin{aligned}
       \min\limits_{\w\in \R^d} F(\w) :=f(\E_{\z}[g_{\z}(\w)]) + r(\w)
    \end{aligned}
\end{equation}
where $g_\z(\w): \R^d\rightarrow \R^p$. Define $g(\w) = \E_{\z}[g_\z(\w)]$. Let $\|\cdot\|$ denote the Euclidean norm of a vector or the Frobenius norm of a matrix. We make the following standard assumptions regarding the problem~(\ref{eqn:p}).

% \begin{tabular}{|c|c|c|c|c|} \hline
%			Algorithms & NC-SM &PL Condition& Minibatch &  GD $\eta$ \\\hline
%			Stoc-AGDA~\cite{yang2020global} & - &$O(1/\mu^{2}\epsilon)$ &$O(1)$   & x \\
%			PG-SMD2~\cite{rafique2018non}& $O(n/\epsilon+1/\epsilon^{2})$ & - & $ O(n)$ & x \\ 
%		         PE-SGDA~\cite{guo2020fast} &-&$O(1/\mu^{2}\epsilon)$&$O(1)$  & $\checkmark$ \\  \hline
%		        ASC-PG&$O(1/\epsilon^{2}) $&   - & $O(1)$  &x \\
%		        RCIVR & $O(1/\epsilon^{3/2})$ &$\widetilde{O}(1/\mu\epsilon)$ & $O(1/\epsilon)$ &  x\\
%		        \textbf{RECOVER} &\textcolor{red}{$O(1/\epsilon^{3/2})$}& \textcolor{red}{$O(1/\mu\epsilon)$} &  \textcolor{red}{$O(1)$} & \textcolor{red}{ $\checkmark$}\\ \hline
%\end{tabular}

\begin{ass} Let $C_f, L_f, C_g$ and $L_g$ be positive constants. Assume that
\label{ass:1}
\begin{enumerate}
    \item[(a)] $f:\R^p \rightarrow \R$ is a $C_f$-Lipschitz function and its gradient $\nabla f$ is $L_f$-Lipschitz.
    \vspace{-0.05in}
    \item[(b)] $g_\z:\R^d\rightarrow \R^p$ satisfies $\E\|g_\z(\w_1) -g_\z(\w_2)\|^2]\leq C^2_g\|\w_1-\w_2\|^2$ for any $\w_1,\w_2$ and its Jacobian $\nabla g_\z$ satisfies $\E[\|\nabla g_\z(\w_1) - \nabla g_\z(\w_2)\|^2]\leq L_g^2\|\w_1 - \w_2\|^2$.
    \vspace{-0.05in}
    \item[(c)] $r: R^d\rightarrow \R \cup \{\infty\}$ is a convex and lower-semicontinuous function.
    \vspace{-0.05in}
    \item[(d)]  $F_* = \inf_{\w} F(\w) \geq -\infty$ and  $F(\w_1) - F_* \leq \Delta_F $ for the initial solution $\w_1$.
\end{enumerate}
\end{ass}
{\bf Remark:} When $f(s)=s$ is a linear function,  the assumption $\E\|g_\z(\w_1) -g_\z(\w_2)\|^2]\leq C^2_g\|\w_1-\w_2\|^2$ is not needed. To upper bound continuity and smoothness of function $F$, we denote $L = 2\max\{ L_gC_gL_f, C_fC_gL_f, C_f^2,L_gC_f, C_g^2L_f, C_f, C_gL_f$, $C_f^2, C_g^2, C_g^2L_f^2\}$ for simple derivation in the appendix.

\begin{ass}
\label{ass:2}
 Let $\sigma_g$ and $\sigma_{g'}$ be positive constants and  $\sigma^2 = \sigma_g^2 + \sigma_{g'}^2$. Assume that
\begin{equation*}
    \begin{aligned}
    &\E_\z[\| g_\z(\w) - g(\w)\|^2] \leq \sigma_g^2,\ \E_\z[\| \nabla g_\z(\w) - \nabla g(\w)\|^2] \leq \sigma_{g'}^2.
    \end{aligned}
\end{equation*}
% for easy to derivation.
\end{ass}
\vspace*{-0.1in}
{\bf Remark:} We remark how the minimization formulation of DRO problem~(\ref{eqn:cp}) can satisfy Assumption~\ref{ass:1}, in particular Assumption~\ref{ass:1}(a) and (b). In order to satisfy Assumption~\ref{ass:1}(b), we can define a bounded loss function $\ell(\w, \z)\in[0, \ell_{\max}]$ and then use a shifted loss $\ell(\w; \z) - \ell_{\max}$ in~(\ref{eqn:cp}). Then $g_\z(\w) = \exp((\ell(\w; \z) - \ell_{\max})/\lambda)$ is Lipchitz continuous and smooth if $\ell(\w; \z)$ is Lipchitz and smooth. $f(s) = \lambda\log(s)$ is Lipschitz continuous and smooth since $s\geq \exp(-\ell_{\max}/\lambda)$. 

For more generality, we allow for a non-smooth regularizer $r(\cdot)$ in this section. To handle non-smoothness of $r$, we can use the proximal operator of $r$: 
%\begin{align*}
$ \prox^{\eta}_r(\bar\w) = \arg\min_{\w}\frac{1}{2}\|\w - \bar\w\|^2 + \eta r(\w)$.
%\end{align*}
When $r=0$, the above operator reduces to the standard Euclidean projection. Correspondingly, we define the proximal gradient measure for the compositional problem~(\ref{eqn:p}): 
\begin{align*}
\mathcal G_{\eta}(\w) = \frac{1}{\eta}(\w -  \prox^{\eta}_r(\w - \eta\nabla g(\w)^{\top}\nabla f(g(\w))) ).
\end{align*}
When $r=0$, the proximal gradient reduces to the standard gradient measure, i.e., $\mathcal G_{\eta}(\w) = \nabla F(\w)$.  To facilitate our discussion, we define sample complexity below.
%\vspace*{-0.1in}
\begin{defi}
The {\bf sample complexity} is defined as the number of samples $\z$ in order to achieve $\E[\|\mathcal G_{\eta}(\w)\|^2]\leq \epsilon$ for a certain $\eta>0$ or $\E[F(\w) - F_*]\leq \epsilon$.   
\end{defi}
%\vspace*{-0.15in}
%Below, we provide the convergence of proximal gradient for COVER under general non-convex settings in the following subsections.

\section{Basic Algorithm: COVER}
\label{sec:COVER} 
%\vspace*{-0.1in} 
We present our Algorithm~\ref{alg:COVER},which serves as the foundation for proving the the convergence of the objective gap under a PL condition in next section. The convergence results in this section might be of independent interest to those who are interested in convergence analysis without a PL condition. The motivation is to develop a stochastic algorithm with fast convergence in terms of gradient norm. We refer to the algorithm as Compositional Optimal VariancE Reduction (COVER). It will be clear shortly why it is called optimal variance reduction. Note that in order to compute a stochastic estimator of the gradient $f(g(\w))$ given by $\nabla g(\w)^{\top}\nabla f(g(\w))$, we maintain and update two estimators denoted by $\{\u\}_{t=1}^T$ and $\{\v\}_{t=1}^T$ sequence, respectively. The $\{\u_{t}\}_{t=1}^T$ sequence maintains an estimation of $\{g(\w_{t})\}_{t=1}^T$ and the $\{\v_{t}\}_{t=1}^T$ sequence maintains an estimation of $\{\nabla g(\w_{t})\}_{t=1}^T$. The strategy of maintaining and updating two individual sequences was first proposed in~\cite{wang2017stochastic} and has been widely used for solving compositional problems~\cite{zhou2019momentum, zhang2019stochastic}. However, the key difference  from previous algorithms lies in the method for updating the two sequences. COVER is inspired by the STROM technique~\cite{cutkosky2019momentum}. To understand the update, 
%let us simplify the problem~(\ref{eqn:p}) by considering $f(s) = s$ and $r=0$. Then the stochastic compositional problem~(\ref{eqn:p}) becomes the standard stochastic optimization problem, and the update of COVER becomes: 
%\begin{align*}
% \v_{t} & = (1-a_{t})\v_{t-1}  + a_{t} \nabla g_{\xi_t}(\w_{t}) + (1-a_t)(  \nabla g_{\xi_t}(\w_{t})  - \nabla g_{\xi_{t}}(\w_{t-1})),\\
%\w_{t+1}& = \w_t - \eta_t \v_{t},
%\end{align*} 
%which is the same as STROM. Next, 
let us consider update that applied to the DRO problem~(\ref{eqn:cp}) by let $f(\cdot) = \lambda \log(\cdot)$, $g_\z(\cdot) = \exp(\frac{\ell(\cdot;\z)}{\lambda})$ and ignoring $r$ for the moment. Plugging the gradient of $f(\cdot)$ and $g_\z(\cdot)$, we have
\begin{align*}
    \w_{t+1} &= \w_t - \eta_t\frac{1}{u_t}\widetilde\v_t,\\
     \widetilde\v_{t+1}  = \exp(\frac{\ell(\w_{t+1};\z_{t+1})}{\lambda})\nabla\ell(\w_{t+1};\z_{t+1}) &+ (1-a_{t+1})(\widetilde\v_t -\exp(\frac{\ell(\w_{t};\z_{t+1})}{\lambda})\nabla\ell(\w_{t};\z_{t+1})),
\end{align*}
where $u_t$ becomes a scalar, which is an online variance-reduced estimator of $\E_{\z}[\exp(\ell(\w_t; \z)/\lambda)]$, and $\widetilde\v_t$ is a scaled version of $\v_t$, which is an online variance-reduced estimator of $\E_{\z}[\exp(\ell(\w_t; \z)/\lambda)\nabla\ell(\w_t; \z)]$.

Finally, we notice that a similar method  for updating the $\u_t$ sequence for estimating $g(\w_t)$ has been adopted in a recent work~\cite{chen2021solving}. However, different from the present work they just use an unbiased stochastic gradient to estimate $\nabla g(\w_t)$, which yields a worse convergence rate. 
%It is also interesting to note that the above update is equivalent to \begin{align*}&\w_{t+1}= \w_t - {\color{red}a_t}\eta_t \nabla g_{\xi_t}(\w_{t})+(1-a_t)\cdot\\ &\bigg[(\w_t - \eta_t \nabla g_{\xi_t}(\w_{t}) ) - (\w_{t-1} - {\color{red}\eta_t \nabla g_{\xi_t}(\w_{t-1})} )\bigg].
%\end{align*}

%This resembles the conventional stochastic Nesterov's method with momentum parameter $1-a_t$ with some differences marked in red. The first difference that multiplies the step size $\eta_t$ by a constant $a_t$ less than $1$ can be justified by enjoying faster convergence with a better dependence on the problem's condition number, which was analyzed for quadratic problems~\cite{liu2018accelerating}. The second difference is that the stochastic gradient at $\w_{t-1}$ in the above update is computed on the sample $\xi_t$, which is the same as that for computing the stochastic gradient at $\w_t$. In contrast, in the conventional stochastic Nesterov's method, the second red term is replaced by  $\eta_t \nabla g_{\xi_{t-1}}(\w_{t-1}) $ that is computed based on samples in the preceding iteration. These modifications makes COVER enjoy a variance reduction for the stochastic gradient estimator $\v_t$ compared with a standard mini-batch stochastic gradient. 

\begin{algorithm}[t]
    \centering
    \caption{COVER ($ \w_1, \u_1, \v_1,  \{\eta_t\}, T, \text{PL} = \text{False}) $}\label{alg:COVER}
    \begin{algorithmic}[1]
        \STATE Let $a_t=c\eta_t^2$
        \IF{not PL}
          % \STATE Draw $b$ samples $S_1=\{\xi^1_1, \ldots, \xi_b^1\}$ and construct the estimates:
          \STATE Draw a samples $\z$  and construct the estimates:
          $\u_1 =g_{\z}(\w_1), \ \v_1=\nabla g_{\z}(\w_1)$  
          \ENDIF
        \FOR {$t = 1,\ldots, T - 1$}
               %\STATE Compute $\tilde{\nabla}f(\w_t) = (\v_t)^T\nabla f(\u_t)$.
       \STATE $\w_{t+1}\leftarrow \prox^{\eta_t}_r(\w_t-\eta_t\v_t^\top\nabla f(\u_t))$
        % \STATE Draw $b$ samples $S_{t+1}=\{\xi^{t+1}_1,\ldots, \xi^{t+1}_b\}$, and update 
        \STATE Draw a samples $\z_{t+1}$, and update 
        $$\u_{t+1} =  g_{\z_{t+1}}(\w_{t+1}) + (1-a_{t+1})(\u_t -  g_{\z_{t+1}}(\w_t))$$
        $$\v_{t+1} = \nabla g_{\z_{t+1}}(\w_{t+1}) + (1-a_{t+1})(\v_t -  \nabla g_{\z_{t+1}}(\w_t)) $$
        %\STATE Compute $\tilde{\nabla}F(\w_{t+1}) = (\v_{t+1})^T\nabla f(\u_{t+1})$.
        \ENDFOR
        %\STATE \textbf{Return:} $\\x_{\tau}$, $\tau$ is randomly chosen from $\{1, \ldots, K\}$ according to probabilities $p_\tau = \frac{w_{\tau}}{\sum_{k=1}^{K}w_{k}}, \tau=1, \ldots, K$.
        %\STATE Randomly sample a index $\tau$ from $ 1 \cdots T$ % with a probability $\Pr(\tau = t) \propto 1/\eta_t^2$.
        \STATE \textbf{Return:} ($\w_\tau, \u_\tau, \v_\tau$) for randomly selected $\tau\in\{1, \ldots, T\}$.
    \end{algorithmic}
\end{algorithm}

\subsection{Convergence of Proximal Gradient}
%\vspace*{-0.1in}
In this section, we present the convergence result of COVER.
%Of independent interest, COVER itself is a stochastic algorithm for addressing a family of non-convex smooth  stochastic compositional problem.}

\begin{thm} 
\label{thm:2}
Assume the Assumption~\ref{ass:1} and~\ref{ass:2}, for any $C>0$,   $k = \frac{C\sigma^{2/3}}{L}$, $c = 128L + \sigma^2/(7Lk^3)$, $w = \max ((16Lk^3), 2\sigma^2, (\frac{ck}{4L})^3)$, and $\eta_t = k/(w+\sigma^2t)^{1/3}$. % Suppose $\E[F(\w_t) - F_*]\leq \Delta_F$.  
The output of COVER satisfies 
\begin{equation}
    \begin{aligned}
    \E[\|\mathcal{G}_{\eta_{t^*}}(\w_{t_*})\|^2] & \leq \widetilde O\left(\frac{\Delta_F}{T^{2/3}} + \frac{\sigma^2}{T^{2/3}}\right).%,  \sigma^2>0
    %\E[\|\mathcal{G}_{\eta_t}(\hat{x}_T)\|^2] &\leq \frac{16w^{1/3}\Delta_F}{Tk} ,  \ \ \ \ \ \ \ \  \sigma^2=0 \\
    \end{aligned}
\end{equation}
where $t_*$ is sampled from $\{1, \ldots, T\}$.%  with a probability $\Pr(t_* = t) \propto 1/\eta_t^2$. 
%where $\Delta_{\Gamma} \leq \Delta_F + \frac{\sigma^2}{8\eta_0b}  $.%,  \textcolor{red}{$\Delta_F = F(x) - \inf_x F(x), \forall x\in R^d$}
\end{thm}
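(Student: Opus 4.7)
The plan is to adapt the STORM-style recursive variance-reduction analysis of \cite{cutkosky2019momentum} to the compositional setting, where we must simultaneously control two estimators: $\u_t$ tracking $g(\w_t)$ and $\v_t$ tracking $\nabla g(\w_t)$. The overall strategy is to construct a Lyapunov function combining the objective suboptimality with the two estimation errors, show that it decreases on average by a term proportional to $\eta_t \|\mathcal{G}_{\eta_t}(\w_t)\|^2$ plus a small noise term, and telescope to obtain the stated rate.

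First I would establish two variance recursions. Using that $\u_{t+1} - g(\w_{t+1}) = (1-a_{t+1})(\u_t - g(\w_t)) + a_{t+1}(g_{\z_{t+1}}(\w_{t+1}) - g(\w_{t+1})) + (1-a_{t+1}) D_{t+1}$, where $D_{t+1} = g_{\z_{t+1}}(\w_{t+1}) - g_{\z_{t+1}}(\w_t) - (g(\w_{t+1}) - g(\w_t))$ is mean-zero conditionally on the history, and invoking Assumptions~\ref{ass:1}(b) and~\ref{ass:2}, one obtains
$$\E\|\u_{t+1} - g(\w_{t+1})\|^2 \leq (1-a_{t+1})^2 \E\|\u_t - g(\w_t)\|^2 + 2 a_{t+1}^2 \sigma_g^2 + 2(1-a_{t+1})^2 C_g^2 \E\|\w_{t+1} - \w_t\|^2.$$
A parallel recursion holds for $\v_{t+1} - \nabla g(\w_{t+1})$ with $\sigma_{g'}^2$ and $L_g^2$ replacing $\sigma_g^2$ and $C_g^2$. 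These are the key workhorses that amplify the $1/T^{1/2}$ noise rate to the STORM-level $1/T^{2/3}$.

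Next I would derive a descent-type inequality. The composite gradient approximation $\widetilde\nabla_t := \v_t^\top\nabla f(\u_t)$ differs from the true gradient by
$$\|\widetilde\nabla_t - \nabla g(\w_t)^\top\nabla f(g(\w_t))\|^2 \lesssim C_f^2 \|\v_t - \nabla g(\w_t)\|^2 + C_g^2 L_f^2 \|\u_t - g(\w_t)\|^2,$$
which follows from Assumption~\ref{ass:1}(a)-(b) and the triangle inequality. Combining this with the smoothness of $F$ (whose Lipschitz constant is the $L$ defined after Assumption~\ref{ass:1}) and the standard proximal-gradient inequality relating $\w_{t+1} - \w_t$ to $\mathcal{G}_{\eta_t}(\w_t)$, I would derive
$$F(\w_{t+1}) \leq F(\w_t) - \tfrac{\eta_t}{2}\|\mathcal{G}_{\eta_t}(\w_t)\|^2 + c_1\eta_t\|\u_t - g(\w_t)\|^2 + c_2\eta_t\|\v_t - \nabla g(\w_t)\|^2 + O(L\eta_t^2)\cdot(\text{noise}),$$
where the last term is absorbed by the choice $\eta_t \leq 1/(4L)$ that the specific schedule enforces for $t$ large enough.

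Finally, I would define the Lyapunov function
$$\Phi_t = F(\w_t) - F_* + \frac{\alpha_1}{\eta_{t-1}}\|\u_t - g(\w_t)\|^2 + \frac{\alpha_2}{\eta_{t-1}}\|\v_t - \nabla g(\w_t)\|^2,$$
choose $\alpha_1, \alpha_2$ so that the error terms in the descent inequality are precisely cancelled by the $(1-a_{t+1})^2$ contraction in each variance recursion (this is where the constants $c = 128L + \sigma^2/(7Lk^3)$ and $k = C\sigma^{2/3}/L$ are calibrated), and verify
$$\E[\Phi_{t+1}] \leq \E[\Phi_t] - \tfrac{\eta_t}{4}\E\|\mathcal{G}_{\eta_t}(\w_t)\|^2 + O\!\left(\frac{a_{t+1}^2\sigma^2}{\eta_t}\right).$$
Telescoping, bounding $\Phi_1$ by $\Delta_F + O(\sigma^2/\eta_0)$ using the single-sample initialization, plugging in $\eta_t = k/(w+\sigma^2 t)^{1/3}$ so that $\sum_t \eta_t \asymp T^{2/3}/\sigma^{2/3}$ while $\sum_t a_{t+1}^2/\eta_t \asymp \sum_t \eta_t^3 \asymp \log T$, and dividing by $\sum_t \eta_t$ after sampling $t_*$ uniformly yields the claimed $\widetilde O((\Delta_F + \sigma^2)/T^{2/3})$ rate.

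The main technical obstacle will be the simultaneous control of the two coupled estimation errors in the Lyapunov function: the descent inequality produces $\|\u_t - g(\w_t)\|^2$ and $\|\v_t - \nabla g(\w_t)\|^2$ terms that must be absorbed by the $a_{t+1}$-contraction in each variance recursion, while the $C_g^2 \|\w_{t+1}-\w_t\|^2$ and $L_g^2 \|\w_{t+1}-\w_t\|^2$ terms those recursions throw off must in turn be dominated by the $\eta_t\|\mathcal{G}_{\eta_t}\|^2/4$ descent. Balancing these inequalities (and accounting for the fact that $1/\eta_t - 1/\eta_{t-1}$ generates an additional $O(\eta_t^2/\eta_{t-1}^2)$ cross-term when the $1/\eta_{t-1}$ weight in $\Phi_t$ is updated) is what pins down the specific constants $c$, $k$, and $w$ in the theorem statement and what produces the logarithmic factor hidden in $\widetilde O$.
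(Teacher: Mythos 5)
Your proposal is correct and follows essentially the same route as the paper: the same error decomposition $\|\v_t^\top\nabla f(\u_t)-\nabla g(\w_t)^\top\nabla f(g(\w_t))\|^2\lesssim C_f^2\|\varepsilon_{\v_t}\|^2+C_g^2L_f^2\|\varepsilon_{\u_t}\|^2$, the same STORM-style recursions for the two estimators, the same Lyapunov function $F(\w_t)+\frac{1}{c_0\eta_{t-1}}[\|\varepsilon_{\u_t}\|^2+\|\varepsilon_{\v_t}\|^2]$, and the same handling of the $1/\eta_t-1/\eta_{t-1}$ term via $(x+y)^{1/3}-x^{1/3}\leq yx^{-2/3}/3$, yielding the $\log T$ factor. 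The only cosmetic difference is that the paper uses a single weight $c_0=128L$ for both estimation errors rather than separate $\alpha_1,\alpha_2$.
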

% \vspace*{-0.1in}
\textbf{Remark:} 
Theorem~\ref{thm:2} implies that with a polynomially decreasing step size, COVER is able to find an $\epsilon$-stationary point, i.e., $\E[\|\mathcal{G}_{\eta_{t^*}}(\w_{t_*})\|^2]\leq \epsilon$ for a regularized objective and $\E[\|\nabla F(\w)\|^2] \leq \epsilon$ for a non-regularized objective,  with a near-optimal sample complexity $\widetilde O(\frac{1}{\epsilon^{3/2}})$.  Note that the complexity $\widetilde O(1/\epsilon^{3/2})$ is optimal up to a logarithmic factor for making the (proximal) gradient's norm smaller than $\epsilon$ in expectation for solving non-convex smooth optimization problems~\cite{arjevani2019lower}.

% i.e.,  
% We first derive an optimal rate by using a decreasing step size $\eta_t$ following the analysis in~\cite{cutkosky2019momentum}. 

% We have the following corollary for having $\E[\|\mathcal{G}_{\eta}(\bar{\w}_t)\|^2]\leq \epsilon$:
%\begin{cor}
% \label{cor:PG-1}
%In order to have $\E[\|\mathcal{G}_{\eta}(\bar{\w}_T)\|^2]\leq \epsilon$, we can set $\eta = O(\sqrt{\epsilon})$, $T= O(1/\epsilon^{3/2})$ and enable $\E[\|\u_1 - g(\w_1)\|^2 +\| \v_1 - \nabla g(\w_1)\|^2] \leq O(\frac{1}{\epsilon})$ with an initial mini-batch size $O(1/\epsilon)$. Therefore, the sample complexity is $T + O(\frac{1}{\epsilon}) = O(\frac{1}{\epsilon^{3/2}})$.
% \end{cor}
% \vspace*{-0.1in}
%{\bf Remark:} The same rate was also achieved in~\cite{zhang2019stochastic} for the same stochastic compositional problems. However, they require a large mini-batch size in the order of $O(1/\sqrt{\epsilon})$ at every iteration, which is not practical when $\epsilon$ is very small.

\section{A Practical Variant (RECOVER)  under a PL condition}\label{sec:main}
% \vspace*{-0.1in}
The issue of COVER is that the  polynomially decreasing step size is not practical for deep learning applications and obstacles its generalization performance~\cite{yuan2019stagewise}. A stagewise step size is widely and commonly used~\cite{he2016deep,krizhevsky2012imagenet,yuan2019stagewise} for deep learning optimization. To this end, we develop a multi-stage REstarted version of COVER, called RECOVER,  which uses a geometrically decreasing step size in a stagewise manner. %, which is a practical standard training agenda for deep learning 
In oder to analyze RECOVER, we assume the following PL condition of the objective with a smooth regularization $r$ term~\cite{yuan2019stagewise}. 
\begin{ass}
\label{ass:PL}
$F(\w)$ satisfies the $\mu$-PL condition if  there exists $\mu>0$ such that
\begin{equation}
\label{eqn:PL}
    \begin{aligned}
    2\mu(F(\w) -\min\limits_{\w\in \R^d}F(\w))\leq \|\nabla F(\w)\|^2.
    \end{aligned}
\end{equation}
%where $\mu$ is a constant. This property has been recently observed/proved for deep learning and shallow learning [].
\end{ass}
In the following, we simply consider the objective $F(\w) = f(\E_{\z}[g_\z(\w)])$, where $r(\cdot)$ is absorbed into $f(\E_\z[g_\z(\w)])$.
As a result, $\mathcal{G}_{\eta}(\w) = \nabla F(\w)$. 

Although the PL condition has been considered in various papers for developing stagewise algorithms and improving the convergence rate of non-convex optimization~\cite{zhang2019stochastic,yuan2019stagewise,liu2019stochastic,guo2020fast}.   In order to establish the improved rate, we have innovations in twofold (i) at the algorithmic level, we utilize the variance reduction techniques at the inner and outer level without using mega large mini-batch size at any iterations; (ii) at the analysis  level, we innovatively prove that the estimation error of the two sequences, $\u$ and $\v$, are decreasing geometrically after a stage (Lemma 3). These innovations at two levels yield the optimal convergence rate  in the order of $O(1/(\mu\epsilon))$. 

%The issue of the results in the previous section is that  the step size is either polynomially decreasing or too small. 
%or the algorithm requires a large initial minibatch at the first iteration. 

%\textcolor{red}{
%In this section, by leverage a PL condition of the objective function similar to the previous work~\cite{ge2018rethinking, yuan2019stagewise}, we propose a new stochastic RECOVER (Algorithm~\ref{alg:RECOVER}),  which is a multi-stage REstarted version of COVER. 
% Compared to the polynomially decreased step size of COVER, RECOVER adopts a geometrically decreases step size between stages, which is a practical standard training agenda for deep learning tasks~\cite{he2016deep},~\cite{krizhevsky2012imagenet}.
% To this end, we will consider a smooth function $r(\cdot)$ in this section, i.e., $\ell_2$ weight decay is included, and without the loss of generality we simply consider the objective $F(\w) = f(\E_{\xi}[g_\xi(\w)])$, where $r(\cdot)$ is absorbed into $f(\E_\xi[g_\xi(\w)])$. }
% 
%To make our presentation more complete, we first provide theoretical verifications that $F_{dro}(\w)$ satisfies a PL condition for deep learning with RELU activation under milder conditions. Then we provide theoretical analysis for RECOVER.

\subsection{Theoretical Verification of PL Assumption for KL-regularized DRO}

% \vspace*{-0.1in}
Before presenting the proposed algorithm and its convergence, we discuss how the $F_{dro}$ can satisfy Assumption~\ref{ass:PL}. First, we note that a PL condition of the weighted loss implies that of the primal objective. 
\begin{lem}
Let $F_\p(\w) = \sum_{i=1}^n p_i \ell(\w; \z_i) $. If for any $\p\in\Delta_n$, $F_\p(\w)$ satisfies a $\mu$-PL condition, then $F_{dro}(\w) = \lambda\log(\frac{1}{n}\sum_i\exp(\ell(\w; \z_i)/\lambda))$ satisfies the $\mu$-PL condition. 
\label{lem:pl_dro_1}
\end{lem}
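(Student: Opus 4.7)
The plan is to exploit the dual characterization of $F_{dro}$ as a regularized maximum over $\p\in\Delta_n$, apply Danskin's theorem to transfer gradient information from $F_\p$ to $F_{dro}$, and then show that the suboptimality gap of $F_{dro}$ is dominated by the suboptimality gap of $F_{\p^\ast(\w)}$, so that the assumed PL condition for the weighted loss passes through.

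Concretely, first I would recall that $F_{dro}(\w) = \max_{\p\in\Delta_n} [F_\p(\w) - \lambda \sum_i p_i\log(np_i)]$ and that the unique maximizer is the softmax distribution $p^\ast_i(\w) = \exp(\ell(\w;\z_i)/\lambda)/\sum_j\exp(\ell(\w;\z_j)/\lambda)$. Since $\p\mapsto F_\p(\w) - \lambda h(\p,\mathbf 1/n)$ is strongly concave, Danskin's theorem yields $\nabla F_{dro}(\w) = \nabla_\w F_{\p^\ast(\w)}(\w)$; this is the key identity linking the two objectives' gradients.

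Next I would compare the suboptimality gaps. Writing $F_{dro}(\w) = F_{\p^\ast(\w)}(\w) - \lambda h(\p^\ast(\w),\mathbf 1/n)$, and using that for every $\w'$ one has $F_{dro}(\w') \ge F_{\p^\ast(\w)}(\w') - \lambda h(\p^\ast(\w),\mathbf 1/n)$ by definition of the max, I can take $\min_{\w'}$ on both sides to get
\[
\min_{\w'} F_{dro}(\w') + \lambda h(\p^\ast(\w),\mathbf 1/n) \;\ge\; \min_{\w'} F_{\p^\ast(\w)}(\w').
\]
Subtracting from $F_{dro}(\w) = F_{\p^\ast(\w)}(\w) - \lambda h(\p^\ast(\w),\mathbf 1/n)$ cancels the regularization term and gives the clean bound $F_{dro}(\w) - \min_{\w'}F_{dro}(\w') \le F_{\p^\ast(\w)}(\w) - \min_{\w'}F_{\p^\ast(\w)}(\w')$. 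Applying the assumed $\mu$-PL condition to $F_{\p^\ast(\w)}$ and using the Danskin identity $\|\nabla F_{\p^\ast(\w)}(\w)\|^2 = \|\nabla F_{dro}(\w)\|^2$ then finishes the argument.

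The main technical point I expect to be mildly subtle is the suboptimality gap comparison, because naively one only sees $F_{dro}(\w)\le F_{\p^\ast(\w)}(\w)$ on the top side, and it is not immediate that the regularization penalty $\lambda h(\p^\ast(\w),\mathbf 1/n)$ cancels against the difference of minima. The trick is to evaluate $F_{dro}(\w')$ against the \emph{fixed} distribution $\p^\ast(\w)$ so that the same additive constant appears on both sides; everything else is essentially applying Danskin and the PL hypothesis in sequence. No large computation is needed beyond these structural observations.
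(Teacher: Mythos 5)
Your proposal is correct and follows essentially the same route as the paper's proof: the dual characterization $F_{dro}(\w)=\max_{\p}\{F_\p(\w)-h(\p,\mathbf 1/n)\}$, the Danskin identity $\nabla F_{dro}(\w)=\nabla_\w F_{\p^*(\w)}(\w)$, and the comparison of suboptimality gaps by evaluating the max at the fixed distribution $\p^*(\w)$ so the entropy term cancels. The step you flag as subtle is exactly the paper's inequalities $\psi(\w',\p^*(\w))\leq\max_{\p'}\psi(\w',\p')$ followed by minimizing over $\w'$, so there is no substantive difference.
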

% \vspace*{-0.1in}
{\bf Remark:} The assumption that the weighted loss satisfies a PL condition can be proven for a simple square loss $\ell(\w; \z_i)= (\w^{\top}\x_i - y_i)^2$, where $\z_i=(\x_i, y_i)$ consists of a feature vector $\x_i$ and a label $y_i$.  In order to see this, we can write $F_p(\w) = \sum_{i=1}^n(\w^{\top}\x_i\sqrt{p_i} - y_i\sqrt{p_i})^2 = \|A\w - \b\|^2 $, where $A=(\x_1\sqrt{p_1}, \ldots, \x_n\sqrt{p_n})^{\top}, \b=(y_1\sqrt{p_1}, \ldots, y_n\sqrt{p_n})^{\top}$. It has been shown in many previous studies that such $F_\p(\w)$  satisfies a PL condition~\cite{DBLP:conf/icml/XuLY17,DBLP:journals/jmlr/YangL18,RePEc:cor:louvrp:3000}. Hence, the above lemma indicates $F_{dro}(\w)$ satisfies a PL condition. 

We can also justify that $F_{dro}(\w)$ satisfies  a PL condition for deep learning with ReLU activation function in a  neighborhood around a random initialized point following the result in~\cite{allen2018convergence}. 
\begin{lem}
Assume that input $\{(\x_1, y_1),\ldots, (\x_n, y_n)\}$ satisfies $\|\x_i\|=1$ and  $\|\x_i - \x_j\|\geq \delta$, where $\x_n\in\R^{d_1}$, $y_i\in\R^{d_0}$ and $\|y_i\| \leq O(1)$.  Consider a deep neural network with $h_{i,0} = \phi(A\x_i), h_{i,l} = \phi(W_lh_{i, l-1}), l=1,\ldots, \tilde{L}, \hat y_i = Bh_{i,\tilde{L}}$ where $A\in\R^{d_2 \times d_1}$ $W_l\in \R^{d_2\times d_2}$, $B\in \R^{d_0\times d_2}$, $\phi$ is the ReLU activation function, and $\ell(W; \z_i) = (\hat y_i - y_i)^2$ is a square loss. Suppose that for any $W$, $p_i^* = \exp(\ell(W; \z_i)/\lambda)/\sum_{i=1}^n\exp(\ell(W; \z_i)/\lambda)\geq p_0>0$, then with a high probability over randomness of $W_0, A, B$ for every $W$ with $\|W - W_0\|\leq O(1/\text{poly}(n,\tilde{L},p_0^{-1},\delta^{-1})$, there exists a small $\mu>0$ such that $\|\nabla F_{dro}(W)\|_F^2 + O(\epsilon)\geq \mu (F_{dro}(W) - \min_{W}F_{dro}(W))$.  
\label{lem:pl_dro_2} 
\end{lem}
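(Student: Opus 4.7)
The plan is to combine the local PL property of over-parameterized ReLU networks established by Allen-Zhu et al.~\cite{allen2018convergence} with two structural identities that $F_{dro}$ inherits from its log-sum-exp form, and then specialize the weighted-loss PL bound at the softmax weights $\p^*(W)$. Throughout let $F_\p(W) = \sum_{i=1}^n p_i\, \ell(W; \z_i)$ denote the $\p$-weighted square loss. A direct chain-rule computation and the conjugate duality between log-sum-exp and the KL entropy yield
\begin{align*}
\nabla F_{dro}(W) = \sum_{i=1}^n p_i^*(W)\, \nabla_W \ell(W; \z_i) = \nabla_W F_{\p^*(W)}(W)\big|_{\p^*(W)\text{ fixed}},
\end{align*}
\begin{align*}
F_{dro}(W) = F_{\p^*(W)}(W) - \lambda\,\mathrm{KL}\!\big(\p^*(W)\,\|\,\mathbf 1/n\big) \leq F_{\p^*(W)}(W).
\end{align*}
The first identity reduces a PL bound for $F_{dro}$ to a PL bound for the weighted loss at weights $\p^*(W)$; the second lets me upper-bound $F_{dro}(W) - \min F_{dro}$ by $F_{\p^*(W)}(W) - \min F_{\p^*(W)}$.

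For every admissible $\p$ (i.e.\ $p_i \geq p_0$), I rewrite $F_\p(W) = \|D_\p^{1/2}(\hat Y(W) - Y)\|^2$ with $D_\p = \diag(p_1,\ldots,p_n)$ and apply the over-parameterization theorem of \cite{allen2018convergence}. The relevant neural tangent Gram matrix at the random initialization $W_0$ becomes $D_\p^{1/2} K_0 D_\p^{1/2}$, whose smallest eigenvalue is at least $p_0\,\lambda_{\min}(K_0) \geq \Omega(p_0\, \delta/\mathrm{poly}(n,\tilde L))$ with high probability over $W_0,A,B$ (the second inequality is Allen-Zhu's data-separation bound). Plugging this into their semi-smoothness and gradient lower-bound machinery yields, for every such $\p$ and every $W$ with $\|W - W_0\| \leq O(1/\mathrm{poly}(n,\tilde L, p_0^{-1}, \delta^{-1}))$,
\begin{align*}
\|\nabla F_\p(W)\|_F^2 + O(\epsilon) \geq 2\mu\,\big(F_\p(W) - \min_W F_\p(W)\big),
\end{align*}
with $\mu = \Omega(p_0\, \delta/\mathrm{poly}(n,\tilde L))$, the $O(\epsilon)$ slack being the standard semi-smoothness residual. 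Allen-Zhu's construction also furnishes a global interpolator inside the same ball, so $\min_W F_\p(W) = 0$ for every admissible $\p$ and $\min_W F_{dro}(W) = 0$. Specializing $\p = \p^*(W)$ (admissible by hypothesis) and chaining with the two identities above gives
\begin{align*}
\|\nabla F_{dro}(W)\|_F^2 + O(\epsilon) = \|\nabla F_{\p^*(W)}(W)\|_F^2 + O(\epsilon) \geq 2\mu\, F_{\p^*(W)}(W) \geq 2\mu\,\big(F_{dro}(W) - \min_W F_{dro}(W)\big),
\end{align*}
which is the claimed PL inequality.

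The hardest step will be adapting Allen-Zhu's local PL statement from the unweighted square loss to the $\p$-weighted square loss: their proof must be revisited to confirm that (i) the residual rescaling by $D_\p^{1/2}$ enters only through the Gram-matrix rescaling above, (ii) no downstream estimate tacitly uses $p_i = 1/n$, and (iii) the neighborhood radius and PL constant degrade only polynomially in $p_0^{-1}$, matching the quantifiers in the lemma. A secondary subtlety is that $\p^*(W)$ varies with $W$, but this is defused by the uniformity in $\p$ of the weighted-loss PL bound over the admissible set $\{p_i \geq p_0\}$, which lets me substitute $\p = \p^*(W)$ pointwise without reopening the Allen-Zhu argument for each $W$.
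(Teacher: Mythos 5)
Your high-level skeleton (Danskin's identity $\nabla F_{dro}(W)=\nabla_W F_{\p^*(W)}(W)$, a PL bound for the weighted loss uniformly over $\{\p:p_i\ge p_0\}$, then specialization at $\p=\p^*(W)$ and the min--max inequality) matches the paper's. But the two steps you yourself flag as delicate are exactly where the paper does real work, and your proposal leaves both open. First, the weighted-loss PL bound: you propose to reopen the proof of \cite{allen2018convergence} and track a reweighted Gram matrix $D_\p^{1/2}K_0D_\p^{1/2}$ through their machinery, and you concede this ``must be revisited to confirm'' several properties. The paper avoids this entirely with a short homogeneity trick: since ReLU is positively homogeneous, feeding the rescaled inputs $\v_i=\sqrt{p_i}\x_i$ (with targets $\sqrt{p_i}y_i$) through the same network scales every layer by $\sqrt{p_i}$, so the $\p$-weighted square loss is \emph{identically} the unweighted average square loss on the rescaled dataset, whose separation is $\sqrt{p_0}\,\delta$. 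Allen-Zhu's theorem then applies as a black box with $\delta\to\sqrt{p_0}\delta$, which is what produces the polynomial degradation in $p_0^{-1}$. Without this (or a completed re-derivation), your key inequality is asserted rather than proved.

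Second, and more seriously, the uniformity over $\p$ is a genuine gap in your argument. Allen-Zhu's guarantee is a high-probability statement over the initialization \emph{for a fixed dataset}, i.e.\ for a fixed $\p$; since $\p^*(W)$ ranges over an uncountable set as $W$ varies, you cannot ``substitute $\p=\p^*(W)$ pointwise'' on the strength of a per-$\p$ bound. The paper closes this by taking an $\epsilon'$-net of $\{\p\in\Delta_n:p_i\ge p_0\}$, union-bounding over the net, and then transferring the bound from the nearest net point $\hat\p$ to $\p^*(W)$ via Lipschitz estimates on $\|\nabla_W F(W,\p)\|$ and $F(W,\p)$ in $\p$. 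That perturbation step is precisely the origin of the additive $O(\epsilon)$ slack in the lemma (as the paper's remark states explicitly); your attribution of the $O(\epsilon)$ to a ``semi-smoothness residual'' is incorrect, and your phrase ``defused by the uniformity in $\p$'' assumes the very uniformity that needs to be established. A minor further point: you invoke exact interpolation to set $\min_W F_\p(W)=0$ and $\min_W F_{dro}(W)=0$; the paper does not need this, instead keeping $\min_{W'}F(W',\hat\p)$ and using the inequality $\min_{W'}\psi(W',\p^*(W))\le\min_{W'}\max_{\p'}\psi(W',\p')$ exactly as in Lemma 1, which is both cleaner and avoids having to locate an interpolator inside the prescribed ball.
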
 
{\bf Remark: } The $O(\epsilon)$ term in the left side of the PL condition is caused by using the covering net argument for proving the high probability result. Nevertheless, it does not affect the final convergence rate. 
% \vspace*{-0.1in}

%A common trick that has been widely acknowledged to speed up training  deep neural networks of SGD (with or without momentum) is by  using a stagewise step size strategy, i.e., starting from a relatively large step size and decreasing it geometrically after a number of iterations. \cite{yuan2019stagewise} explained this phenomenon by providing faster convergence of stagewise training than vanilla SGD under PL condition on both training error and testing error. However, there analysis mainly focusing on convex functions and additional assumptions are required for non-convex objective functions, such as one-point $\theta$-weakly quasi-convex" property [], or $\rho$-weakly convex [].
%
%\noindent
%\textbf{Round Complexity:} Sample Complexity / Minibatch Size, $i.e.$, the real number of iteration costs when we implemented the algorithm.

%\begin{thm}[Finite-sum Setting]
%By setting   $\eta_k = a_1\sqrt{\mu \epsilon_k}, T_k = \frac{b_1}{\mu\eta_k}, S_k = c_1$,$S_{k,1} = n$, then we have
%\begin{equation}
%\begin{aligned}
%    \E[\Phi(x_k) - \Phi(\x_*)]\leq   \frac{1}{2}\E[\Phi(x_k) - \Phi(\x_*)]
%\end{aligned}
%\end{equation}
%Then the overall sample complexity is $O(\frac{1}{\mu^{3/2}\epsilon^{1/2}}+n\log (\frac{1}{\epsilon}))$
%\end{thm}

\subsection{Theoretical Analysis of RECOVER}
Now, we are ready to present the proposed algorithm under the PL condition and its convergence result. The algorithm is described in Algorithm~\ref{alg:RECOVER}. %After each stage, we decrease the step size $\eta_k$ and increase the number of iterations per-stage $T_k$ accordingly. %The mini-batch size for every iteration fixed to be $b$ in the training process. 

The first key feature of RECOVER is equipped with the practical geometrical decreases step size between stages. At each stage, we adopt a constant step size $\eta_k$  rather than the polynomial decreases step size used by COVER as in Theorem~\ref{thm:2}.
%Thus different with Theorem~\ref{thm:2}, 
%the following theorem holds for the stage of $k$ with a constant step size $\eta_k$:
  Another key feature of RECOVER is that it uses not only $\w_k$ for restarting but also $\u_k,\v_k$, the corresponding online estimator of $g(\w_k)$ and $\nabla g(\w_k)$, for restarting the next stage. 
It is this feature that allows us to avoid the large  batch size required in other variance reduction methods to achieve the optimal sample complexity. With this feature, we can show that the variance of $\u_k,\v_k$ is decreased by a constant factor stagewisely as shown in the following lemma. 
 
\begin{lem}
\label{lem:stage-variance}
Define constants $\epsilon_1=\frac{c^2 \sigma^2}{64 \mu L^3}$ and $\epsilon_k=\epsilon_1/2^{k-1}$, with $\eta_k =\min \{ \frac{\sqrt{\mu \epsilon_k} L}{2 c\sigma}, \frac{1}{16L}\}$, $T_k = O(\max \{\frac{96 c\sigma}{\mu^{3/2}\sqrt{\epsilon_k}L}, \frac{16c^2\sigma^2}{\mu L^2 \epsilon_k}, \frac{\Delta_F}{\sigma^2}\})$,  
the variance of the stochastic estimator of Algorithm~\ref{alg:RECOVER} at $\w_{k}$ satisfies:
\begin{equation}
\begin{aligned}
 \E[ \|\u_{k} -g(\w_{k}))\|^2+\|\v_{k} -\nabla g(\w_{k}))\|^2] \leq \mu\epsilon_{k}.
\end{aligned}
\end{equation}
\end{lem}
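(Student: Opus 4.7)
\textbf{Proof proposal for Lemma~\ref{lem:stage-variance}.}

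The plan is to proceed by induction on the stage index $k$, where the inductive hypothesis is the stated bound $\E[\|\u_k-g(\w_k)\|^2+\|\v_k-\nabla g(\w_k)\|^2]\leq \mu\epsilon_k$. The base case $k=1$ is handled by taking a slightly larger initial sample / using the trivially bounded variance $\sigma^2$ (which is at most $\mu\epsilon_1$ by the definition of $\epsilon_1$). The inductive step is the real content: assuming the bound $\mu\epsilon_{k-1}=2\mu\epsilon_k$ holds at the beginning of stage $k$, I need to run $T_k$ STORM-style inner iterations with constant step size $\eta_k$ and constant momentum parameter $a=c\eta_k^2$, and show that the error at the end of stage $k$ halves.

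The first step is to derive the single-iteration recursion for the Lyapunov quantity $E_t:=\E[\|\u_t-g(\w_t)\|^2+\|\v_t-\nabla g(\w_t)\|^2]$. Writing $\u_{t+1}-g(\w_{t+1})=(1-a)(\u_t-g(\w_t))+\big(g_{\z_{t+1}}(\w_{t+1})-g(\w_{t+1})\big)-(1-a)\big(g_{\z_{t+1}}(\w_t)-g(\w_t)\big)$ and using that the two bracketed terms, conditional on past iterates, have zero mean and bounded variance (using Assumption~\ref{ass:1}(b) together with Assumption~\ref{ass:2}), the usual STORM calculation yields
\begin{equation*}
E_{t+1}\leq (1-a)^2 E_t + 2a^2\sigma^2 + 2(1-a)^2\, L^2\, \E\|\w_{t+1}-\w_t\|^2.
\end{equation*}
An analogous identity handles the $\v$ component via the Jacobian Lipschitz bound.

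Next I would control the squared step size $\E\|\w_{t+1}-\w_t\|^2\leq \eta_k^2\,\E\|\v_t^\top\nabla f(\u_t)\|^2$. Splitting $\v_t=\nabla g(\w_t)+(\v_t-\nabla g(\w_t))$ and $\nabla f(\u_t)$ similarly around $\nabla f(g(\w_t))$, and using Assumption~\ref{ass:1}(a)--(b) to bound the clean pieces by constants, gives $\E\|\w_{t+1}-\w_t\|^2\leq 2\eta_k^2(C_f^2 C_g^2+ L^2 E_t)$ up to absolute constants. Plugging this back produces a clean recursion of the form $E_{t+1}\leq \rho E_t + \beta$, where $\rho=(1-a)^2+O(L^2\eta_k^2)$ and $\beta=2a^2\sigma^2+O(L^2\eta_k^2)$. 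Using $\eta_k\leq 1/(16L)$ from the definition of $\eta_k$ guarantees that the higher-order perturbation is absorbed and $\rho\leq 1-a$.

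Telescoping this one-step recursion across the $T_k$ iterates of stage $k$ yields
\begin{equation*}
E_{T_k}\leq (1-a)^{T_k} E_0+\frac{\beta}{a}\leq (1-c\eta_k^2)^{T_k}\cdot 2\mu\epsilon_k + O\!\left(c\eta_k^2\sigma^2+\tfrac{C_f^2 C_g^2 L^2}{c}\right).
\end{equation*}
The step-size choice $\eta_k\leq \sqrt{\mu\epsilon_k}\,L/(2c\sigma)$ forces the noise-floor term to be at most $\mu\epsilon_k/4$, and the iteration count $T_k\gtrsim 1/(c\eta_k^2)$ makes the contraction term at most $\mu\epsilon_k/4$; the three branches in the $\max$ defining $T_k$ correspond to (i) this contraction condition, (ii) compatibility with the choice of $\eta_k$, and (iii) a floor ensuring enough progress on the objective (needed when coupling with the descent/PL argument that justifies taking the new reference point $\w_k$). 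Summing these two pieces gives $E_{T_k}\leq \mu\epsilon_k$, closing the induction.

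The main obstacle I anticipate is the tight control of $\E\|\w_{t+1}-\w_t\|^2$ in a way that does \emph{not} require an \emph{a priori} uniform bound on $\|\v_t\|$: the direction is itself a noisy estimator whose variance is $E_t$, so the Lyapunov recursion couples back to itself through the $L^2 E_t$ term. Handling this cleanly forces the condition $\eta_k\leq 1/(16L)$ and requires a careful choice of constants so that the coefficient of $E_t$ in the unrolled recursion remains strictly below $1-a/2$. Once that coupling is resolved, the rest is bookkeeping with the prescribed $\eta_k$ and $T_k$.
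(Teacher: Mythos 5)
There is a genuine gap in your inductive step, and it sits exactly at the point you flag as the "main obstacle." After you bound the drift by $\E\|\w_{t+1}-\w_t\|^2\leq 2\eta_k^2(C_f^2C_g^2+L^2E_t)$ and unroll, your own display shows a noise floor of order $O\bigl(c\eta_k^2\sigma^2+\tfrac{C_f^2C_g^2L^2}{c}\bigr)$. The first piece does shrink with $\eta_k$, but the second piece is a fixed constant: it comes from dividing the per-step injection $O(\eta_k^2C_f^2C_g^2L^2)$ by the momentum $a=c\eta_k^2$, so $\eta_k^2$ cancels and no choice of $\eta_k$ or $T_k$ can make it smaller than $\mu\epsilon_k/4$ once $\epsilon_k$ is small (and $c=104L^2$ is fixed, so you cannot inflate $c$ either). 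The claim that "the step-size choice forces the noise-floor term to be at most $\mu\epsilon_k/4$" is therefore false for that term, and the induction does not close. The root cause is replacing the gradient-dependent drift $\|\nabla g(\w_t)^\top\nabla f(g(\w_t))\|^2$ by its worst-case constant bound $C_f^2C_g^2$.

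The paper's proof avoids this by never converting the drift into a constant. It keeps the term $\|\mathcal G_{\eta_k}(\w_t)\|^2$ inside the recursion (via Lemma~\ref{lem:recur-var-cover} and Lemma~\ref{lem:grad-norm}), telescopes the Lyapunov function $F(\w_t)+\frac{1}{c_0\eta_k}[\|\varepsilon_{\u_t}\|^2+\|\varepsilon_{\v_t}\|^2]$ over the stage, and then controls $\sum_{t=1}^{T_k}\eta_k\E\|\mathcal G_{\eta_k}(\w_t)\|^2$ by the objective decrease via Theorem~\ref{thm:4}. That sum is $O(\E[F(\w_{k-1})-F_*])\leq O(\epsilon_{k-1})$ by the PL-based induction running in parallel in Theorem~\ref{thm:main}, which is precisely what makes the resulting bound
\begin{equation*}
\E[\|\varepsilon_{\u_\tau}\|^2+\|\varepsilon_{\v_\tau}\|^2]\leq \frac{2\E[F(\w_{k-1})-F_*]}{\eta_kT_kL}+\frac{c^2\sigma^2\eta_k^2}{3L^2}+\frac{\E[\|\varepsilon_{\v_{k-1}}\|^2+\|\varepsilon_{\u_{k-1}}\|^2]}{48\eta_k^2L^2T_k}
\end{equation*}
shrink geometrically: every term on the right is tied to a quantity ($\epsilon_{k-1}$, $\eta_k^2\propto\mu\epsilon_k$, or $\mu\epsilon_{k-1}$) that halves each stage. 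Note also that this bound is for a uniformly sampled iterate $\tau\in\{1,\ldots,T_k\}$ — the stage output in Algorithm~\ref{alg:COVER} is a random iterate, not the last one — so the statement being proved is about the average variance over the stage, which is what the telescoping argument naturally delivers; your last-iterate formulation $E_{T_k}$ does not match the algorithm. To repair your proof you would need to (i) couple the variance recursion to the objective-gap induction so the drift contribution is charged to $F(\w_{k-1})-F_*$ rather than to a constant, and (ii) state the conclusion for the randomly sampled output.
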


 %By plugging $\eta_k, T_k$ in Lemma~\ref{lem:2} into Theorem~\ref{thm:4}, we would have $\|\nabla F(\w_k)\|^2 \leq \epsilon_k$.   Then after running $K = O(\log(\epsilon_1/\epsilon))$ stages, we have the following convergence result of RECOVER:
With the above lemma and the convergence bound for $\E[\|\nabla F(\w_k)\|^2]$ at the $k$-th stage, we can show that the objective gap $\E[F(\w_k) - F_*]$ is decreased by a factor of $2$ after each stage under the PL condition. Hence, we have the following convergence for RECOVER.  
 \begin{thm}\label{thm:main}
Assume that assumption~\ref{ass:1},\ref{ass:2},\ref{ass:PL} hold. Define constants $\epsilon_1=\frac{c^2 \sigma^2}{64 \mu L^4}$ and $\epsilon_k=\epsilon_1/2^{k-1}$. By setting $\eta_k =\min \{ \frac{\sqrt{\mu \epsilon_k}L}{2 c\sigma}, \frac{1}{16L}\}$, $T_k = O(\max \{\frac{96 c\sigma}{\mu^{3/2}\sqrt{\epsilon_k}L}, \frac{2c^2\sigma^2}{\mu L^2 \epsilon_k}, \frac{\Delta_F}{\sigma^2}\})$, $c = 104L^2$, then after  $K=O(\log(\epsilon_1/\epsilon))$ stages, the output of RECOVER satisfies $\E[F(\w_K) - F_*]\leq \epsilon$. 
% Then the overall sample complexity is  % $O(\frac{1}{\mu^{3/2}\epsilon^{1/2}}+\frac{1}{\mu^{1/2}\epsilon^{3/2}})$
\end{thm}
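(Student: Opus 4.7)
The plan is to prove Theorem~\ref{thm:main} by induction on the stage index $k$, showing that at the end of stage $k$ we have simultaneously (i) $\E[F(\w_{k+1}) - F_*] \le \epsilon_{k+1}$ and (ii) the estimator error bound $\E[\|\u_{k+1} - g(\w_{k+1})\|^2 + \|\v_{k+1} - \nabla g(\w_{k+1})\|^2] \le \mu \epsilon_{k+1}$. The base case $k=1$ follows from the choice of $\epsilon_1$ together with Assumption~\ref{ass:1}(d). The inductive step is where the real work lies, and it reuses the per-iteration descent analysis already developed for COVER (Section~\ref{sec:COVER}) but with a constant step size $\eta_k$ and with the initial estimators $\u_k, \v_k$ inherited from the previous stage rather than freshly constructed.

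Concretely, for a single stage I would track a Lyapunov potential of the form
\[
\Phi_t = F(\w_t) - F_* + \alpha_k\bigl(\|\u_t - g(\w_t)\|^2 + \|\v_t - \nabla g(\w_t)\|^2\bigr),
\]
for a suitable $\alpha_k > 0$ depending on $L$, $C_f$, $C_g$, and $\eta_k$. Using the smoothness properties derived from Assumption~\ref{ass:1} (with the consolidated constant $L$), the STORM-style recursion for $\u_t, \v_t$ with $a_{t+1} = c\eta_k^2$ produces a one-step inequality $\E[\Phi_{t+1}] \le \E[\Phi_t] - \tfrac{\eta_k}{2}\E[\|\nabla F(\w_t)\|^2] + O(\eta_k^3 c \sigma^2)$. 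Averaging over $T_k$ iterations and invoking the $\mu$-PL condition in Assumption~\ref{ass:PL} converts $\|\nabla F(\w_t)\|^2$ into $2\mu(F(\w_t) - F_*)$, so that telescoping yields a geometric contraction up to an additive bias term controlled by $\eta_k$. The two prescribed caps on $\eta_k$, namely $\eta_k \le \tfrac{1}{16L}$ (ensuring the descent step dominates the smoothness error) and $\eta_k \le \tfrac{\sqrt{\mu \epsilon_k}\,L}{2c\sigma}$ (balancing the variance residual against $\mu \epsilon_{k+1}$), are exactly what is needed to make this bias at most $\epsilon_{k+1}/2$. Choosing $T_k$ as in the theorem statement makes the contracting term bring the remaining $\epsilon_k$ down to $\epsilon_{k+1}/2$ as well, so that summing gives $\E[F(\w_{k+1}) - F_*] \le \epsilon_{k+1}$. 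Lemma~\ref{lem:stage-variance} then supplies the matching bound on the estimator variances, completing the inductive step.

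Once the inductive claim is established, the final statement follows immediately: since $\epsilon_k = \epsilon_1/2^{k-1}$ halves at every stage, setting $K = \lceil \log_2(\epsilon_1/\epsilon) \rceil = O(\log(\epsilon_1/\epsilon))$ guarantees $\epsilon_K \le \epsilon$, and hence $\E[F(\w_K) - F_*] \le \epsilon$. A side computation totals the sample complexity as $\sum_{k=1}^K T_k$, which is dominated by the last stage and yields the advertised $O(1/(\mu\epsilon))$ rate, matching the table entry for RECOVER.

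The main obstacle I expect is the coupling between the descent of $F$ and the control of the estimator errors within a single stage. Unlike standard STORM analyses that start each run with a fresh large-batch estimator, here we must carry $\u_k, \v_k$ across stages and still argue that the variance remains at most $\mu \epsilon_{k+1}$ at the end. This is precisely the content of Lemma~\ref{lem:stage-variance}, but feeding its conclusion back into the Lyapunov descent requires choosing $\alpha_k$ and $\eta_k$ so that the variance residual generated by the STORM recursion inside the stage does not exceed what Lemma~\ref{lem:stage-variance} allows at its end. Getting the three constraints---descent sufficient for PL contraction, variance residual at most $\mu\epsilon_{k+1}/2$, and $T_k$ large enough for geometric shrinkage---to close simultaneously without requiring a mega-batch is the delicate bookkeeping step, and it is where the choices of $c = 104 L^2$, $\eta_k$, and $T_k$ in the theorem are calibrated.
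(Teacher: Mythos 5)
Your proposal is correct and follows essentially the same route as the paper: a stage-wise induction carrying both the objective gap and the inherited estimator errors, a Lyapunov function of the form $F(\w_t) + \alpha_k(\|\u_t - g(\w_t)\|^2 + \|\v_t - \nabla g(\w_t)\|^2)$ telescoped over the stage with the STORM recursion, the PL condition to convert the resulting gradient-norm bound into an objective-gap contraction, and Lemma~\ref{lem:stage-variance} to close the variance half of the induction. The only (immaterial) organizational difference is that the paper first isolates the per-stage gradient-norm bound as a standalone intermediate theorem and applies the PL inequality once to the uniformly sampled stage output, rather than folding PL into the within-stage descent recursion.
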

%We consider two choices for $b_k$ in the following corollaries, constant $b$ that has no requirements on minibatch and an increasing one that requires increasing minibatch. 
%\begin{cor}\label{cor:1}(Sample Complexity of RECOVER)
%By setting $\eta_k = \min\{\frac{\sqrt{\mu\epsilon_k L}}{2c\sigma} ,\frac{1}{16L}\}$ and $T_k = O(\max \{\frac{96 c\sigma}{\mu^{3/2}\sqrt{\epsilon_kL}}, \frac{2c^2\sigma^2}{\mu L^2 \epsilon_k}, \frac{\Delta_F}{\sigma^2}\})$, the sample complexity of RECOVER is $ O(\frac{1}{\mu\epsilon})$ to achieve $\E[F(\w_K) - F_*]\leq \epsilon$. 
%\end{cor}

{\bf Remark: }
It is not difficult to derive the sample complexity of RECOVER is $O(\max \{\frac{1 }{\mu^{3/2}\sqrt{\epsilon}}, \frac{1}{\mu\epsilon}))=O(\frac{1}{\mu\epsilon})$ for $\epsilon\leq \mu$.  It is notable this complexity is  optimal for the considered general stochastic compositional problem, which includes stochastic strongly convex optimization as a special case, whose lower bound is $O(1/(\mu\epsilon))$~\cite{hazan2014beyond}.

%It is worth to mention that if Lemma~\ref{lem:stage-variance} is not hold, i.e, we only return $\w_k$ the same as other variance reduction methods at each stage. As the variance of stochastic estimator $\u_k,\v_k$ is not decreased between stages, a batch size of $O(1/\epsilon_k)$ is required at the start of each stage to guarantee $\E[\|\u_{k-1} - g(\w_{k-1})\|^2 +\| \v_{k-1} - \nabla g(\w_{k-1})\|^2] \leq \epsilon_k$ such that $\|\nabla F(\w_k)\|^2 \leq \epsilon_k$ for Theorem~\ref{thm:4}. 

In addition, it is notable that the proposed multi-stage algorithm is very different from many other multi-stage algorithms for non-convex optimization that are based on the proximal point framework~\cite{chen2018universal, yan2020sharp,rafique2018non,guo2020fast}. In particular, in these previous studies, a quadratic function $\gamma/2\|\w - \w_{k-1}\|^2$ with an appropriate regularization parameter $\gamma$ is added into the objective function at the $k$-th stage in order to convextify the objective function. In RECOVER, no such regularization is manually added. Nevertheless, we can still obtain strong convergence guarantee. 

%Second, it is worth mentioning that a key feature of RECOVER is that it uses not only $\w_k$ for restarting but also $\u_k,\v_k$, the corresponding online estimator of $g(\w_k)$ and $\nabla g(\w_k)$ for restarting the next stage. It is this feature that allows us to remove the initial large mini-batch size as required in Corollary~\ref{cor:PG-1} for COVER. With this feature, we can show that the variance of $\u_k,\v_k$ is decreased by a constant factor stagewisely as shown in the following lemma.  

% Suppose that $ \E[\|\u_{0} -g(\w_{0}))\|^2 + \|\v_{0} -\nabla g(\w_{0}))\|^2] \leq \mu\epsilon_0$,
% $\E [\|\u_{0} -g(\w_{0}))\|^2 + \|\v_{0} -\nabla g(\w_{0}))\|^2] \leq \mu\epsilon_0$ can be easily satisfied in the initialization step of Algorithm~\ref{alg:RECOVER}.
%\textbf{Remark:}  This lemma implies that the variance of the stochastic estimator of the first iteration of each stage is decreasing. As a result, by Lemma~\ref{lem:stage-variance} and corollary~\ref{cor:PG-1}, we can using constant $\eta$ at each stage without the need of an initial large mini-batch size $O(1/\epsilon_k)$ for every stage. Based on this, we have the following convergence result of RECOVER:

\begin{algorithm}[t]
    \centering
    \caption{RECOVER($\w_0, \epsilon_{0} ,c$)}
    \label{alg:RECOVER}
    \begin{algorithmic}[1]
    %\STATE \textbf{Input:} $\x_0, c, K$
    \STATE  \textbf{Initialization}: Draw a sample $\z_0$ and construct the estimates
          $\u_0 =g_{\z_0}(\w_0), \ \v_0=\nabla g_{\z_0}(\w_0)$
       \FOR {$k = 1,\ldots,K$}
        \STATE \hspace*{-0.1in}$(\w_k, \u_k,\v_k)$ = COVER($\w_{k-1}, \u_{k-1}, \v_{k-1}, \eta_k, T_k, \text{True}$)
       \STATE \hspace*{-0.1in}change $\eta_k, T_k$ according to Theorem~\ref{thm:main}
        \ENDFOR
        \STATE \textbf{Return:} $\w_K$%, %where $\bar{x}_K$ is uniformly sampled from $\{\x_k\}_{k=1}^K$.
    \end{algorithmic}
\end{algorithm}

%$\eta_k = \min\{\frac{L\sqrt{b \mu \epsilon_k}}{c\sigma},\frac{1}{L}\}$, $T_k =O(\max\{\frac{1}{\mu^{3/2}\epsilon_k^{1/2}b^{1/2}}, \frac{1}{\mu^2\epsilon_k^2m_kb}, \frac{L}{\mu}, \frac{1}{\mu m_k\epsilon_k}\})$

%Regarding Theorem~\ref{thm:main} and Corollary~\ref{cor:1}, 
%the step size $\eta_k$ should be set to the proportional to the square root mini batch size $\sqrt{b}$ when $\eta_{k}\leq \frac{1}{16L}$, which is known as the sqrt scaling scheme that are widely used in the setting of the initial learning rate when training a neural network~\cite{krizhevsky2014one}. In addition, 
%he decreasing step size between stages help to improve the generalization performance over using a constant step size in the whole training process~\cite{yuan2019stagewise}. 

%\vspace{-0.1in}
\section{Experimental Results}
% \vspace*{-0.12in}
%In this section, we present some empirical results  to verify the effectiveness of our proposed algorithms. 
We focus on the task of classification with imbalanced data in our experiments.
Firstly, we compare RECOVER with five State-Of-The-Art (SOTA) baselines from two categories: (i) primal-dual algorithms for solving the primal-dual formulation of DRO~(\ref{eqn:pd}), and (ii) algorithms that are designed for the stochastic compositional  formulation of DRO~(\ref{eqn:cp}). Secondly, we verify the advantages of DRO over Emperical Risk Minimization (ERM) for imbalanced data  problems by comparing the test accuracy learned by optimizing DRO using RECOVER and optimizing ERM using SGD on the imbalanced datasets. Then we show the RECOVER is also an effective fine-tuning  algorithm for large-scale imbalanced data training. The code for reproducing the results is released here~\cite{code}.

\subsection{Comparison with SOTA DRO Baselines}
\label{sec:SOTA}

We compare RECOVER with five baselines: Restarted CIVR~\cite{zhang2019stochastic} (RCIVR),  ASC-PG~\cite{wang2017accelerating}, Stoc-AGDA~\cite{yang2020global}, PG-SMD2~\cite{rafique2018non} and PES-SGDA~\cite{guo2020fast}. 
RCIVR and ASC-PG are the state-of-the-art algorithms for solving stochastic compositional problems.
RCVIR uses variance reduction techniques and leverages the PL condition, while ASC-PG does neither. Stoc-AGDA and PG-SMD2 are the primal-dual algorithms with and without leveraging the PL condition explicitly, respectively. PES-SGDA is a variant of PG-SMD2 and was proposed by leveraging the PL condition for achieving faster convergence. Please note that ASC-PG and Stoc-AGDA use polynomially decreasing step sizes, RECOVER, PG-SMD2 and PES-SGDA use stagewise decreasing step size, and RCIVR uses a constant step size. The parameters of each algorithm are appropriately tuned for the best performance. All the algorithms are implemented using Pytorch and run on GeForce GTX 1080 Ti GPU.

We conduct experiments on four datasets, namely STL10 \cite{DBLP:journals/jmlr/CoatesNL11}, CIFAR10, CIFAR100 \cite{krizhevsky2009learning}, and iNaturalist2019~\cite{wittmann2019using}. The original STL10, and CIFAR10, CIFAR100 are balanced data, where STL10 has 10 classes and each class has 500 training images, CIFAR10 (resp. CIFAR100) has 10 (resp. 100) classes and each class has 5K (resp. 500) training images. For STL10, CIFAR10 and CIFAR100, we artificially construct imbalanced training data, where we only keep the last 100 images of each class for the first half  classes.  iNaturallist2019 itself is an imbalanced dataset that contains 265,213 images with 1010 classes.  We train ResNet-20 on STL10, CIFAR10, CIFAR100, and Inception-V3 on iNaturalist2019. 

For fair comparison, we use the same constant batch size $b$ for all methods except for RCIVR in which the inner loop batch size $b'$ and outer loop batch size $B_k$ are hyperparameters that relate to convergence. We use the constant batch size $b$ = 128 on CIFAR10, CIFAR100, and $b$ = 64 on iNaturalist2019, and $b = 32$ on STL.
For RCIVR, both the fixed inner loop batch size $b'$ and the initial outer loop batch size $B_0$ are tuned in $\{32,64,128 \}$.  The outer loop mini-batch size $B_k$ is also increased by a factor of 10 per-stage according to the theory.   
\begin{figure*}[t]
    \centering
  \hspace*{-0.1in}  \includegraphics[width =0.25\textwidth]{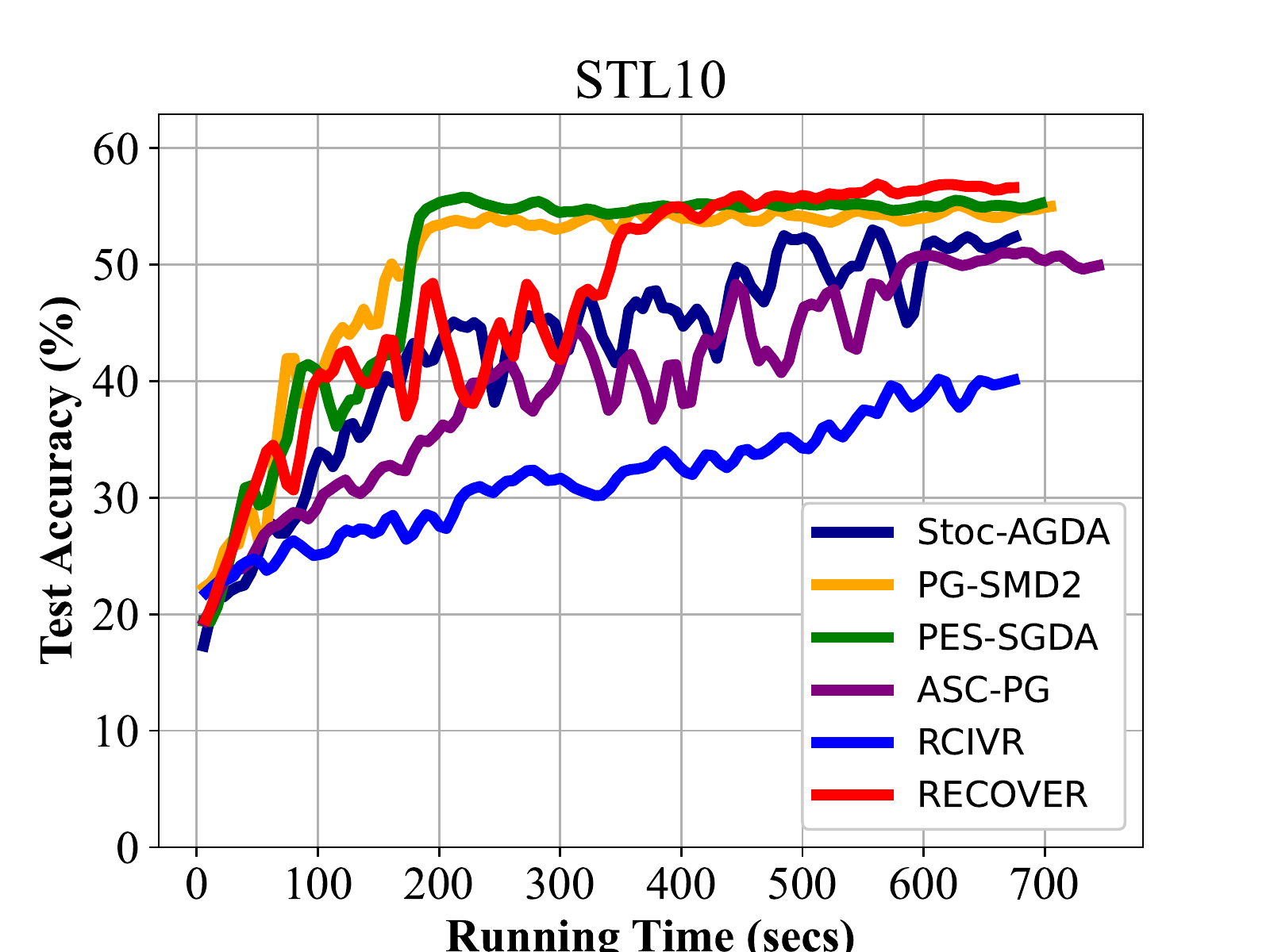}  \hspace*{-0.1in}  \
    \includegraphics[width =0.25\textwidth]{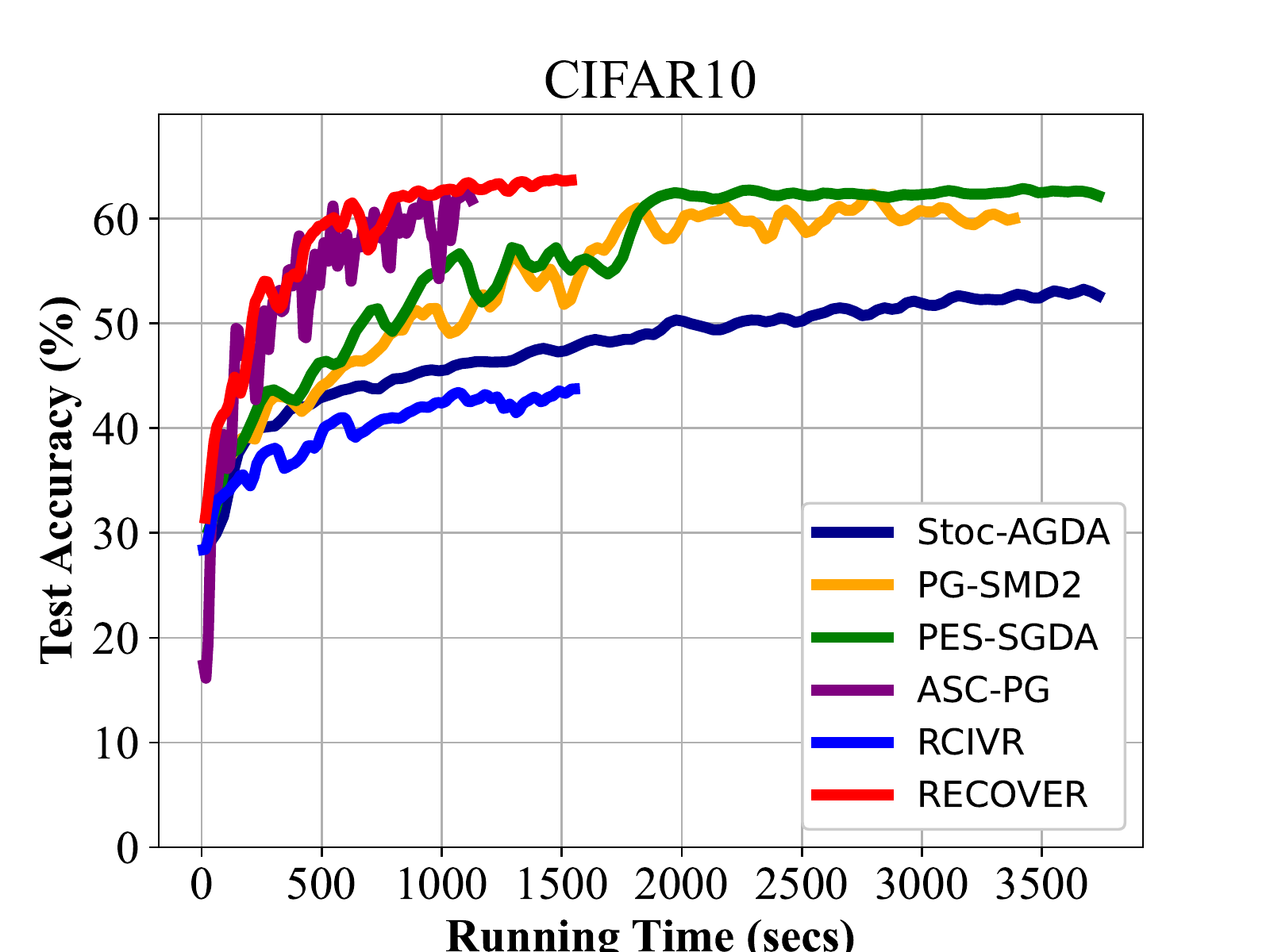}  \hspace*{-0.1in}  \
    \includegraphics[width =0.25\textwidth]{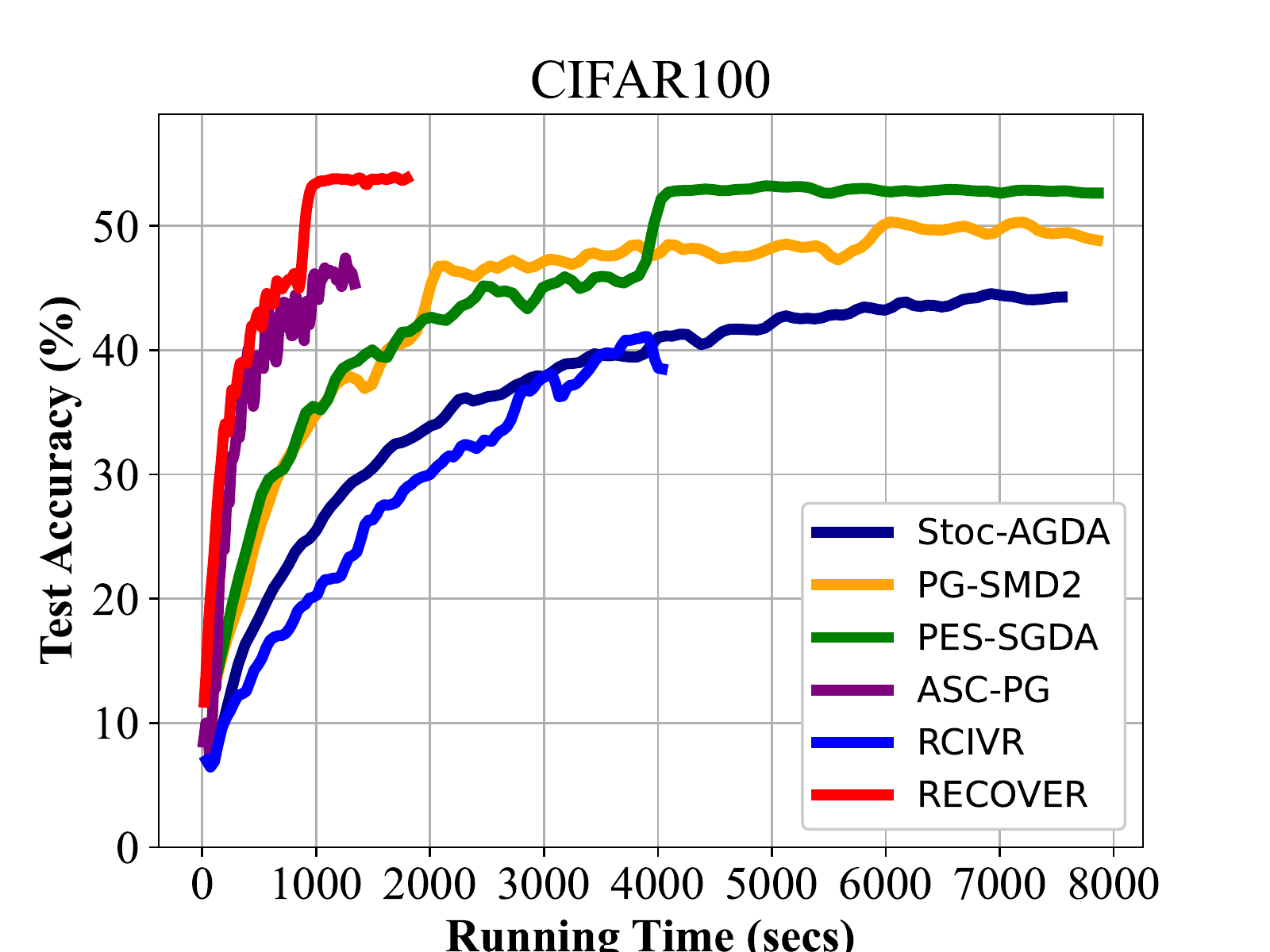}  \hspace*{-0.1in}  \
    \includegraphics[width =0.25\textwidth]{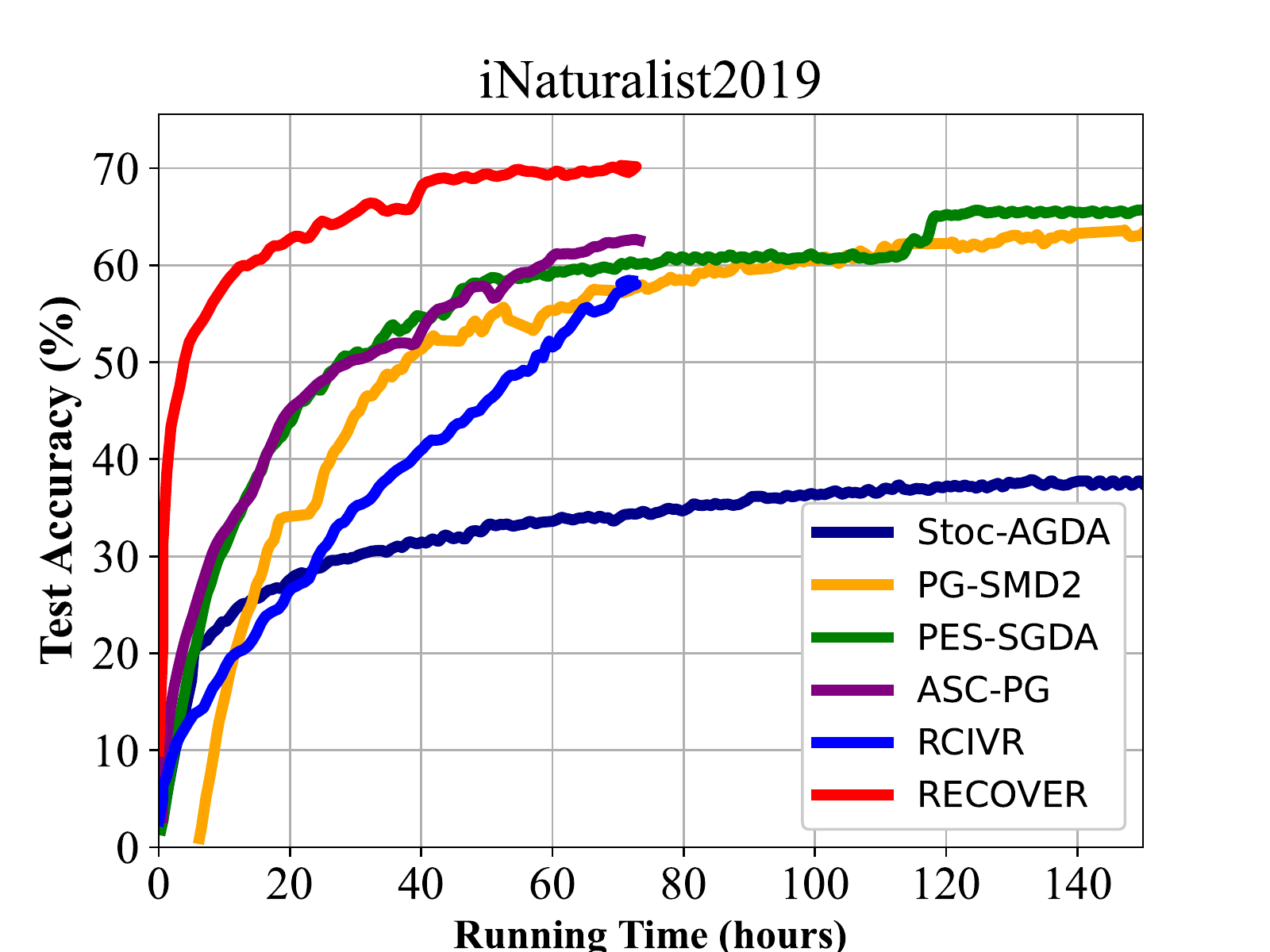}
   % \vspace*{-0.1in} 
    \caption{Testing accuracy (\%) vs running time}
    \label{fig:RT}
    %\end{figure}
 %   \begin{figure}
    \centering
   \hspace*{-0.1in}   \includegraphics[width=0.25\textwidth]{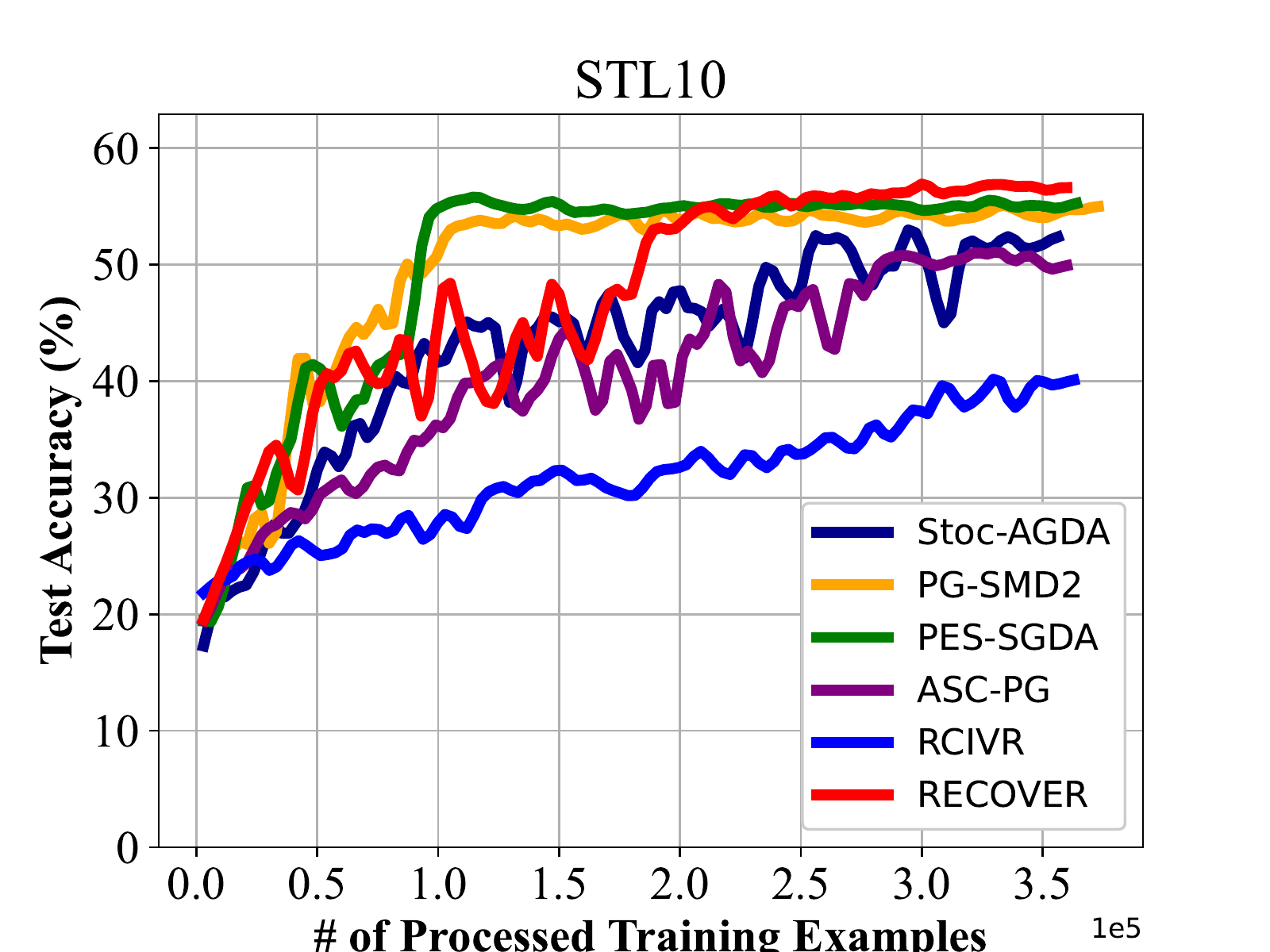} \hspace*{-0.1in}
    \includegraphics[width=0.25\textwidth]{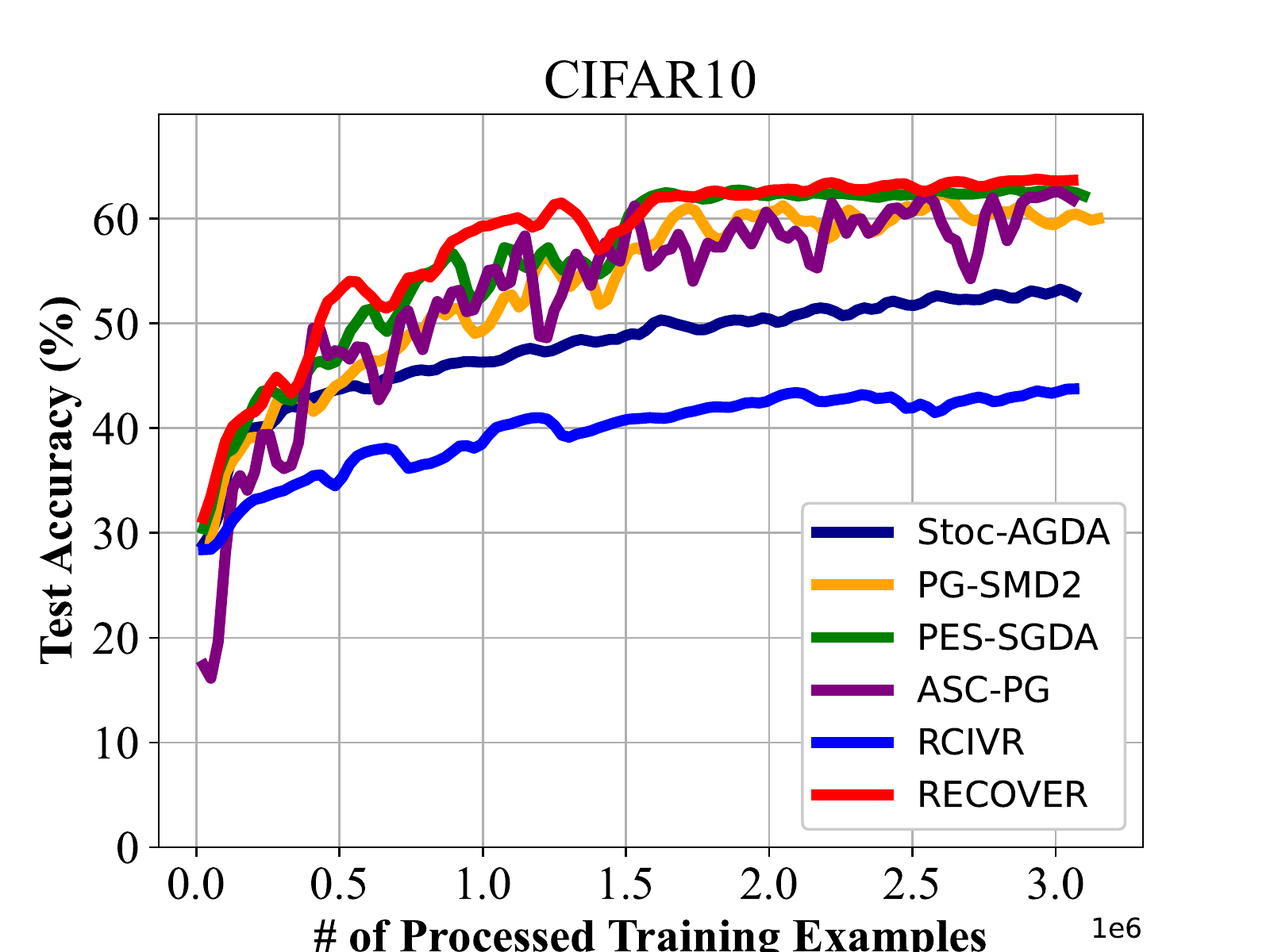} \hspace*{-0.1in}
     \includegraphics[width=0.25\textwidth]{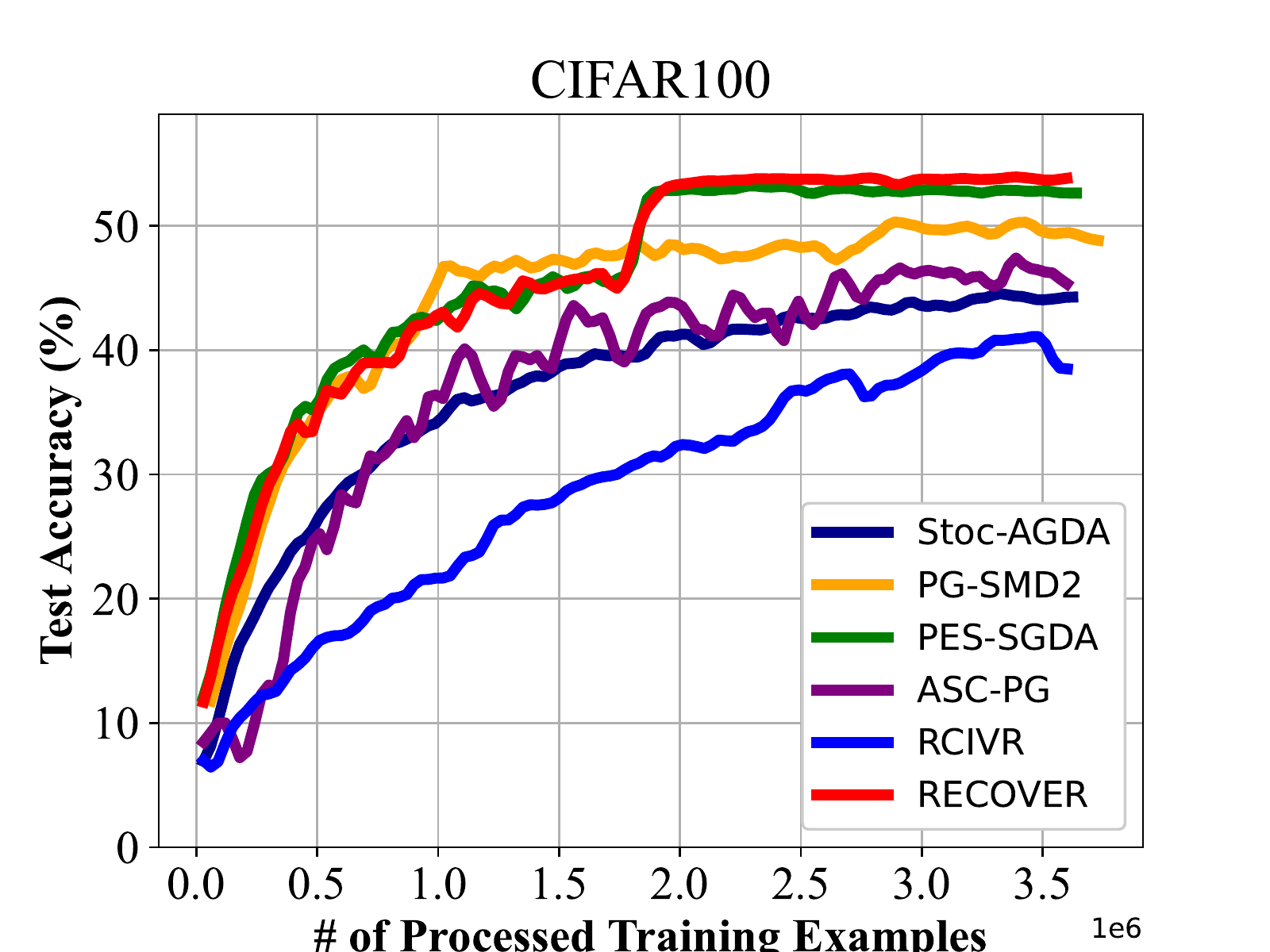} \hspace*{-0.1in}
    \includegraphics[width=0.25\textwidth]{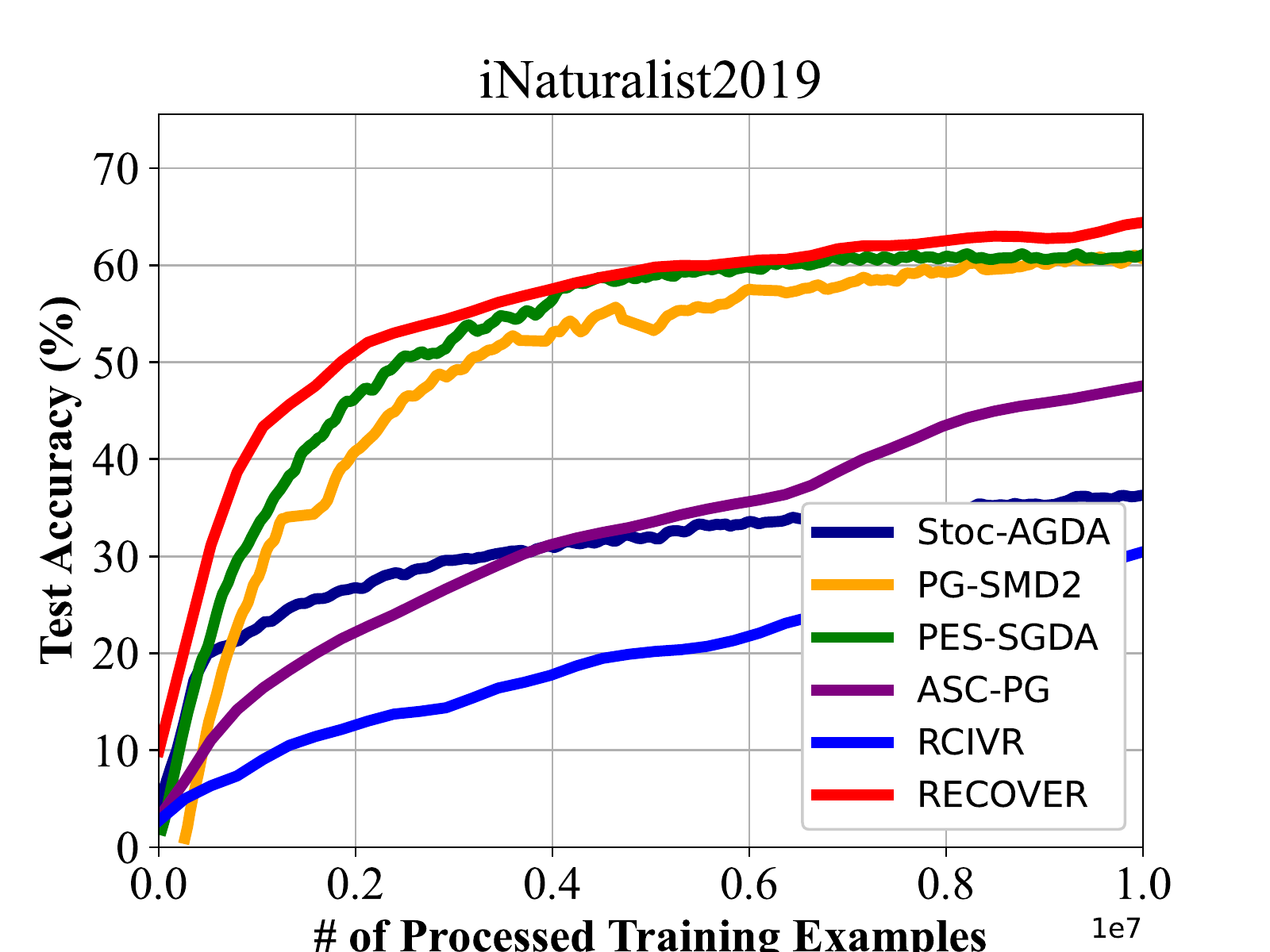} \hspace*{-0.1in}
     %\vspace*{-0.1in}
     \caption{Testing accuracy (\%) vs \# of processed training examples}
    \label{fig:RC}
\end{figure*}

For RECOVER,  the initial step size $\eta_0$ and the momentum parameter $a_0$ at the first stage are tuned in $\{0.1, 0.2,...,1\}$, and $\eta_k$ is divided by 10 after each stage and $a_k$ is updated accordingly.   For RCIVR, the constant step size is tuned $\eta \in \{0.1,0.2,\cdots, 1\}$. For the ASC-PG, the step size is set to be $\eta = c_0/t^a$, and the momentum parameter is set to be $\beta = 2c_0/t^b$, where $c_0$ is tuned from $0.01\sim 1$ and $a,b$ are tuned ranging from $0.1$ to $0.9$ by grid search, $t$ is the number of iterations.  For Stoc-AGDA, the step size for primal variable  is set to be $\beta_1/(\tau_1 + t)$ and the step size for dual variable $\p$ is set to be $\beta_2/(\tau_2 + t)$.  $\beta_1, \beta_2$ are tuned in $[10^{-1}, 1, 10, 10^2, 500, 10^3]$ and $\tau_1,\tau_2$ are tuned in $[1, 10, 10^2, 500, 10^3]$. For PES-SGDA and PG-SMD2, the algorithm have multiple stages, where each stage solves a strongly-convex strongly-concave subproblem, and step size decrease after each stage. For PES-SGDA, the number of iteration per-stage is increased by a factor of $10$ and step sizes for the primal and the dual variables are decreased by 10 times after each stage, with their initial values tuned. In particular,  $\eta_1$ (for primal variable) is tuned in $\{0.1, 0.2,\cdots 1\}$ and $\eta_2$ (for the dual variable) is tuned in $\{10^{-5}, 10^{-4}, 10^{-3}\}$, $T_0$ (the number of iterations for the first stage)  is tuned in $\{5, 10, 30, 60\}\frac{n}{b}$, where $n$ is the number of training examples.

%We perform imbalance classification tasks on STL10, CIFAR10, CIFAR100 and Cat$\&$Dog using ResNet-20~\cite{he2016deep} and iNaturalist2019~\cite{wittmann2019using} using Inception-v3~\cite{DBLP:conf/cvpr/SzegedyVISW16}. STL10 is the smallest dataset, which an extension of CIFAR10 and contains 5K images in the training datasets. CIFAR10, CIFAR100 are containing 50K images, and C2 is an binary classification dataset that containing 25K images of dogs and cat, we split 80:20 to construct train and test dataset respectively. To perform imbalance data classification task, for STL10/CIFAR10/CIFAR100, we only keep the last 100 images of each class for the first half 5/5/50 classes, respectively. For the C2, we remove the first $80$\% of images of dogs. 
%As a result, the imbalance ratio, the number of samples in the smallest classes/the number of samples in largest classes, are ranging from 2$\%$ to 20$\%$. 
%iNaturallist2019 is a real data imbalance multiclass datasets that contains 265213 images with 1010 classes.
%, the imbalance ratio is $16/500 = 3.2\%$.
\begin{table*}[t]
\centering
\caption{Test accuracy (\%), mean (variance), of SGD for ERM and RECOVER for DRO. Bold numbers represent better performance.}
\label{tab:test-acc-RECOVER-SGD}
    \resizebox{.95\textwidth}{!}{
\begin{tabular}{c|cc|cc|cc}
\hline
\multirow{2}{*}{IMRATIO} & \multicolumn{2}{c|}{STL10} & \multicolumn{2}{c|}{CIFAR10} & \multicolumn{2}{c}{CIFAR100} \\   \cline{2-7} 
                         & SGD        & RECOVER       & SGD         & RECOVER        & SGD         & RECOVER         \\ \hline
0.02                  &        {37.97 (0.78) }   &   {\bf 38.08 (0.59)}      &    65.36(0.64)         &       {\bf 66.14 (0.48)}        &    38.99 (0.62)    &        {\bf 39.45 (0.56)}       \\ \hline
0.05                  &          41.12 (0.94)  &     {\bf 42.68 (0.60)}       &    74.74 (0.71)     &  {\bf  75.90 (0.33)  }       &       45.79 (0.69)      &     {\bf 44.47 (0.66)}          \\ \hline
0.1                    &      46.03 (0.96)      & {\bf 48.94 (0.86)}         &    79.32 (0.42)       &   {\bf{80.93 (0.31)} }   &     49.45 (0.5)            &       {\bf 50.84 (0.86) }  \\ \hline
0.2                    &  51.75 (1.14)       &   {\bf 56.06 (1.26)}        &   84.84 (0.51)          &    {\bf 85.93 (0.14) }          &   55.80 (0.74)             &   \textbf{56.90 (0.42)}       \\ \hline
\end{tabular}
}
\vspace*{0.1in}
\end{table*}

% &   \textbf{61.67}   
% &    51.78
% &  52.80
% & ** \\ 
% & iNaturalist2018

%\begin{figure*}[h!]
%    \centering
%    \includegraphics[width =0.32\textwidth]{figures/STL10_2_(1).png}
%    \includegraphics[width =0.32\textwidth]{figures/CIFAR10_2_(1)_min.png}
%    \includegraphics[width =0.32\textwidth]{figures/CIFAR100_2_(1).png}
%    \caption{Training Process Comparison between RECOVER and SGD. The solid line with different colors represent the training process of RECOVER with different imbalance ratio, which the green dash line are the corresponding SGD results. }
%    \label{fig:stable}
%\end{figure*}

As we aim to compare the optimization for the same objective in this section, $\lambda$ is set to 5 both in the compositional objective~(\ref{eqn:cp}) and min-max formulation of~(\ref{eqn:pd}) with 
regularizer $h(\p, \mathbf 1/n)=\lambda\sum_i p_i\log(np_i)$. Following the standard training strategy, we run all algorithms 120 epochs and set the time threshold 150 hours for early stopping on iNaturalist data. 

We compare testing accuracy vs running time and vs the number of processed training examples separately.  We present the convergence of testing accuracy in terms of running time in Figure~\ref{fig:RT} and in terms of processed training examples  in~Figure~\ref{fig:RC}. From the results, we can observe that: (i) in terms of running time RECOVER converges faster than all baselines on all data except on the smallest data STL10, on which PES-SGDA has similar running time performance as RECOVER. The reason is that STL10 is the smallest data, which only has 3000 imbalanced training data samples and hence  PES-SGDA has marginal overhead per-iteration; (ii) when the training data size is moderately large, the primal dual methods (PES-SGDA, PG-SMD2, Stoc-AGDA) have significant overhead, which makes them converge much slower than RECOVER in terms of running time. On the large iNaturalist2019 data, RECOVER can save days of training time; (iii) RECOVER is much faster than RCIVR on all datasets;  (iv) ASC-PG performs reasonably well but is still not as good as RECOVER in terms of both running time and sample complexity. The convergence instability of ASC-PG verifies the robustness of RECOVER for addressing the compositional problems. % We also compare RECOVER with standard SGD for empirical loss minimization in the  supplement to justify the effectiveness of DRO.  

 \subsection{Comparison between SGD and DRO.}

 \begin{figure}[t]
%\begin{center}
\hspace*{-0.1in}\begin{tabular}{c@{\hspace{0em}}c@{\hspace{1em}}c@{\hspace{0em}}c}
%\includegraphics[width=.24\textwidth]{figures/synthetic/logistic_classification_boundary_case4.pdf} &
%\includegraphics[width=.24\textwidth]{figures/synthetic/logistic_classification_error_case4.pdf}  &
%\includegraphics[width=.24\textwidth]{figures/synthetic/hinge_classification_boundary_case4.pdf} &
%\includegraphics[width=.24\textwidth]{figures/synthetic/hinge_classification_error_case4.pdf} \\
%imbalance situation
%\includegraphics[width=.2\textwidth]{figs/logistic_classification_boundary_case6} &
%\includegraphics[width=.2\textwidth]{figs/logistic_classification_error_case6}  &
\hspace*{-0.06in}    \includegraphics[scale=0.2]{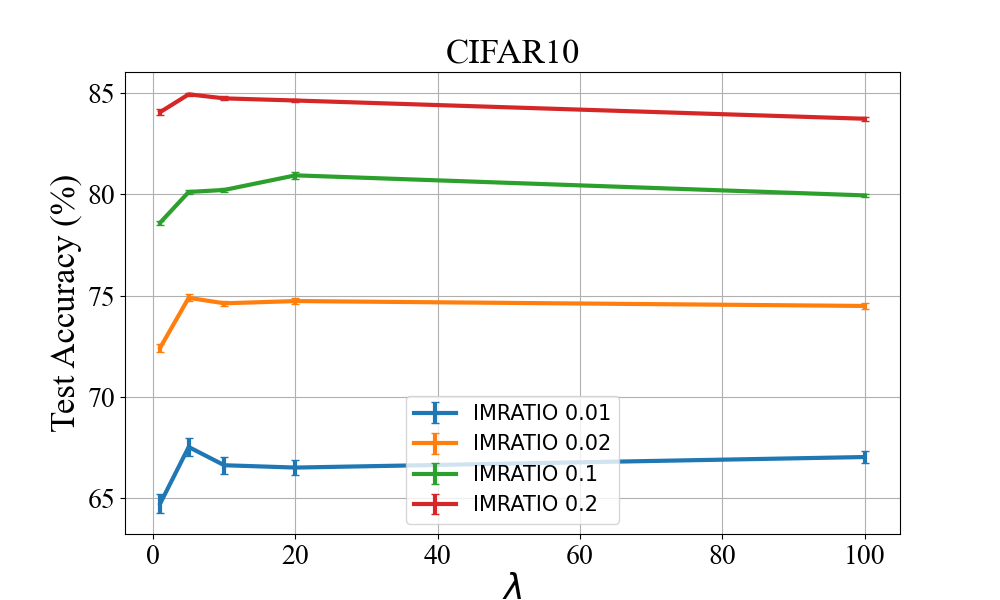}
\end{tabular}
\hspace*{0.2in}
\resizebox{0.5\textwidth}{!}{
\begin{tabular}{c|c|c}\hline
 Model           & ImageNet-LT & Places-LT  \\ \hline
 Pretrained &40.50 &23.28 \\ \hline
CE (SGD) &      41.29 (3e-3)         &    27.47 (1e-3)       \\
Focal (SGD)    &       41.10 (2e-2)       &       27.64 (6e-3)    \\ \hline  
DRO (RECOVER)          &      \textbf{42.30} (4e-4)      &   \textbf{28.75}  (4e-5)\\
\hline
\end{tabular}}
%\end{center}
~\vspace{-1em}
\caption{Left: Test Accuracy vs $\lambda$ on CIFAR10 data; Right: Test accuracy (\%) of finetuned models by different methods.}
\label{fig:last}
~\vspace{-2.em}
\end{figure}

%\begin{minipage}[c]{0.35\textwidth}
%    \centering
%    \includegraphics[width=\linewidth]{figures/lamda.png}
%    \vspace{-0.25in}
%    \captionof{figure}{Test Accuracy vs $\lambda$}\label{fig:lambda-influence}
%\end{minipage}
%\begin{minipage}[c]{0.6\textwidth}
%%\begin{table}
%\centering
%\captionof{table}{Test accuracy (\%) of finetuned models by different methods.}\label{tab:ImageNet_Places}
%\vspace{0.1in}
%\begin{tabular}{c|c|c}\hline
% Model           & ImageNet-LT & Places-LT  \\ \hline
% Pretrained &40.50 &23.28 \\ \hline
%CE (SGD) &      41.29 (3e-3)         &    27.47 (1e-3)       \\
%Focal (SGD)    &       41.10 (2e-2)       &       27.64 (6e-3)    \\ \hline  
%DRO (RECOVER)          &      \textbf{42.30} (4e-4)      &   \textbf{28.75}  (4e-5)\\
%\hline
%\end{tabular}
%%\end{table}
%\end{minipage}

 We compare the generalization performance of DRO optimized by RECOVER with traditional ERM optimized by SGD for  imbalance multi-classification tasks on STL10, CIFAR10, CIFAR100.
The IMbalance RATIO (IMRATIO) is defined as the number of samples in the  minority classes over the number of samples in the majority classes. We mannually construct different training sets with different IMRATIO, i.e., we only keep the last IMRATIO portion of images in the first half of classes. 

Different from previous experiments, we tune $\lambda$ in a certain of range $\{1, 5, 10, 20, 100\}$ by a cross-validation approach and report the best testing results.  Other parameters of RECOVER is tuned according to the setting in previous experiments. We use ResNet-32 for CIFAR10, CIFAR100, and ResNet-20 for STL10. For SGD,  the step size is set as $\eta_0$ in the first 60 epochs, and is decreased by a factor of $10$ at 60, and 90 epochs following the practical strategy \cite{he2016deep}, where $\eta_0$ is  tuned in $\{0.1, 0.5, 1\}$ and 1 epoch means one pass of training data.

We report averaged test accuracy over 5 runs with mean (variance) in Table~\ref{tab:test-acc-RECOVER-SGD}. We can see that DRO with RECOVER achieves higher test accuracy with smaller variance over multiple runs on all datasets than ERM with SGD. In addition, we report the results over $5$ runs of different $\lambda$ on CIFAR10 with different IMRATIO in Figure~\ref{fig:last} (left).  It is obvious to see that an appropriate regularization on the dual variable can improve the performance.% and \textcolor{red}{the variance is in a range of $0.1\sim 0.2$. We can tell that RECOVER is robust enough to $\lambda$.}

\subsection{Effectiveness of RECOVER as a Fine-tuning Method}

Fine-tuning high level layers from a pertained model is widely used for transfer learning and is also an effective method to update the models without increasing the computational cost too much when receiving new samples.
For this purpose, we demonstrate that DRO is a better objective than the Cross Entropy (CE) loss and focal loss for fine-tuning on imbalanced datasets.

 ImageNet-LT~\citep{liu2019large} and Places-LT~\citep{liu2019large} are two popular imbalanced data sets and are the Long-Tailed (LT) version of ImageNet-2012~\citep{deng2009imagenet} and Places-2~\citep{zhou2017places} by sampling a subset following the Pareto distribution ~\citep{arnold2014pareto} with the power value $6$. ImageNet-LT has 115.8K images from 1000 categories, and Places-LT contains 62.5K training images from 365 classes. The head class is 4980 images and the tail class contains 5 images in both datasets.

To verify that DRO is a better objective and that RECOVER is an efficient optimization algorithm, we compare the test accuracy of the model trained with different objectives: DRO, CE loss and focal loss, where DRO is optimized by RECOVER and the other two losses are optimized by SGD. All methods start from the same pretrained model. We apply the ImangeNet pretrained ResNet152 as the pre-trained model for Places-LT. For ImageNet-LT, we train ResNet50 using CE loss for 90 epochs following the standard training strategy proposed in~\citep{he2016deep} as the pre-trained model. We then fine tune the last block of the convolutions layer and the classifier layer for 30 epochs by using RECOVER for optimizing DRO and using SGD for optimizing ERM, respectively. % with cosine annealing learning rates strategy.
The initial step size for RECOVER and SGD are both tuned in $\eta_0 \in \{0.1, 0.5, 1\}$.  For DRO, $\lambda$ is tunes in $\{1,5,10 \}$.

%For RECOVER, we simply set $a = 0.8$ and $\lambda = 10$ in the experiment.

 The test accuracy over 3 runs with mean (variance) is reported in Figure~\ref{fig:last} (right). It is clear to see that DRO optimized by RECOVER outperforms ERM with the CE loss and focal loss optimized by SGD more than 1(\%) on both datasets. % with statistical significance. 
This vividly verifies the effectiveness of RECOVER as a fine-tuning method on imbalanced data. %flexible enough to combine with SOTA training techniques.

\section{Conclusion}
In this paper, we proposed a duality-free online method for solving a class of distributionally robust optimization problems. We used a KL divergence regularization on the dual variable and transformed the problem into a two-level stochastic compositional problem.  By leveraging a practical PL condition, we developed a practical method RECOVER based on recursive variance-reduced estimators and established an optimal sample complexity. Experiments verify the effectiveness of the proposed algorithm in terms of both running time and prediction performance on large-scale imbalanced data. An open question remains is how to solve the DRO problem with a KL constraint on the dual variable by a pratical stochatic algorithm without maintaining and updating the high dimensional dual variable.  We plan to address this challenge in the future work. 

%and the practical algorithm RECOVER for addressing deep DRO problems with KL divergence regularization. Emperical results on different 
%\vspace*{-0.3in}
%
\section*{Acknowledgments}
%\vspace*{-0.1in}
The authors thank anonymous reviewers for constructive comments. This work was supported by NSF Career Award \#1844403,  NSF Award \#2110545 and NSF Award \#1933212. 
%This work could benefit people who wants to train a deep neural network using the considered formulation. We do not see any direct disadvantages and bad outcomes associated with this work. 
%Ethical and societal consequences are also not applicable.  

%\nocite{langley00}
\bibliography{example_paper}
\bibliographystyle{plain}

\onecolumn
\section*{Appendix}

\textbf{Notations}
we refer the compositional stochastic gradient estimator $ \v_t^\top\nabla f(\u_t)$ of COVER (Algorithm~\ref{alg:COVER})  as $\d_t$, $i.e.$, $\d_t = \v_t^\top\nabla f(\u_t)$, where $\u_t$, $\w_t$ are the two estimator sequences maintained in COVER.
The compositional stochastic variance introduced by $\d_t$ as $\varepsilon_t = \d_t - \nabla g(\w_t)^\top\nabla f(g(\w_t))$, the stochastic variance introduced by $\u_t$ denoted as $\varepsilon_{\u_t} = \u_t - g(\w_t)$, the stochastic variance introduced by $\v_t$ denoted as $\varepsilon_{\v_t} = \v_t - \nabla g(\w_t)$. The stochastic proximal gradient measure of COVER is $\tilde{\mathcal{G}}_\eta (\w_t)= \frac{1}{\eta}(\w_{t+1} -\w_{t})$.
And $L = 2\max\{ L_gC_gL_f, C_fC_gL_f, C_f^2,L_gC_f, C_g^2L_f, C_f, C_gL_f, C_f^2, C_g^2, C_g^2L_f^2\}$. 

\section{Illustration of Variance Introduced by $\p \in \R^n$}

To see this, the variance of stochastic gradient in terms of $\w$ with random sampling is given by $\text{Var}_{r}=1/n\sum_{i=1}^n\|np_i\nabla\ell(\w; \z_i) - \nabla_\w L(\w, \p)\|^2 = \sum_{i=1}^nnp_i^2\|\nabla\ell(\w; \z_i)\|^2 - \|\nabla L(\w,\p)\|^2$, where $L(\w, \p) =\sum_{i=1}^np_i\ell(\w; \z_i)$. In contrast,  the variance of stochastic gradient in terms of $\w$ with non-uniform sampling according to $\p$ is given by $\text{Var}_{n}=\sum_{i=1}^np_i\|\nabla\ell(\w; \z_i) - \nabla_\w L(\w, \p)\|^2 = \sum_{i=1}^np_i\|\nabla\ell(\w; \z_i)\|^2 - \|\nabla L(\w,\p)\|^2$. Let us consider an extreme case when $p_i=1, p_j=0, \forall j\neq i$, we have $\text{Var}_{r} = (n-1)\|\nabla\ell(\w; \z_i)\|^2\gg \text{Var}_{n} = 0$.

\section{Proof of Section~\ref{sec:COVER}}

\begin{lem} 
\label{lem:var}
Suppose Assumption~\ref{ass:1} and~\ref{ass:2} hold, we have
\begin{equation}
    \E[\|\varepsilon_t\|^2]
    \leq 2C_f^2 \E[\|\varepsilon_{\v_t}\|^2] + 2C_g^2L_f^2\E [\|\varepsilon_{\u_t} \|^2].
\end{equation}
\end{lem}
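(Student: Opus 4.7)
The plan is to decompose $\varepsilon_t$ by inserting a single cross term inside the inner product $\v_t^\top \nabla f(\u_t)$ so that the two resulting pieces isolate, respectively, the error $\varepsilon_{\v_t}$ in the Jacobian estimator and the error $\varepsilon_{\u_t}$ in the inner estimator. The natural intermediate quantity is $\nabla g(\w_t)^\top \nabla f(\u_t)$, leading to
\begin{equation*}
\varepsilon_t = (\v_t - \nabla g(\w_t))^\top \nabla f(\u_t) + \nabla g(\w_t)^\top \bigl(\nabla f(\u_t) - \nabla f(g(\w_t))\bigr) = \varepsilon_{\v_t}^\top \nabla f(\u_t) + \nabla g(\w_t)^\top \bigl(\nabla f(\u_t) - \nabla f(g(\w_t))\bigr).
\end{equation*}

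Next I would apply the elementary inequality $\|a+b\|^2 \leq 2\|a\|^2 + 2\|b\|^2$ together with the sub-multiplicativity of the norm (Cauchy--Schwarz) on each of the two inner products, producing
\begin{equation*}
\|\varepsilon_t\|^2 \leq 2\|\varepsilon_{\v_t}\|^2\,\|\nabla f(\u_t)\|^2 + 2\|\nabla g(\w_t)\|^2\,\|\nabla f(\u_t) - \nabla f(g(\w_t))\|^2.
\end{equation*}
Now I invoke the hypotheses in Assumption~\ref{ass:1}: since $f$ is $C_f$-Lipschitz we have $\|\nabla f(\u_t)\| \leq C_f$; since $\nabla f$ is $L_f$-Lipschitz we have $\|\nabla f(\u_t) - \nabla f(g(\w_t))\| \leq L_f\|\u_t - g(\w_t)\| = L_f\|\varepsilon_{\u_t}\|$; and the bound $\|\nabla g(\w_t)\| \leq C_g$ follows from Assumption~\ref{ass:1}(b) by Jensen's inequality (the mean-square Lipschitz bound on $g_\z$ descends to a deterministic Lipschitz bound on $g=\E_\z g_\z$, whence its Jacobian norm is bounded by $C_g$). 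Substituting these three bounds gives the pointwise inequality
\begin{equation*}
\|\varepsilon_t\|^2 \leq 2 C_f^2 \|\varepsilon_{\v_t}\|^2 + 2 C_g^2 L_f^2 \|\varepsilon_{\u_t}\|^2,
\end{equation*}
and taking expectations yields the stated result.

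There is no real obstacle here, as the lemma is an algebraic decoupling estimate; the only subtle point is choosing the right intermediate term. Had I inserted $\v_t^\top \nabla f(g(\w_t))$ instead, one of the factors would have been $\|\v_t\|^2$, which is not deterministically bounded (since $\v_t$ is a STORM-style recursive estimator rather than an unbiased one), and I would need an extra argument controlling its second moment. The decomposition above avoids this issue entirely by always leaving the ``true'' quantity $\nabla g(\w_t)$ multiplying the $f$-error and using the bounded $\nabla f(\u_t)$ to multiply the $g$-error.
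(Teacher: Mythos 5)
Your proof is correct and follows essentially the same route as the paper: the same intermediate term $\nabla g(\w_t)^\top\nabla f(\u_t)$, the same use of $\|a+b\|^2\leq 2\|a\|^2+2\|b\|^2$, and the same Lipschitz bounds $\|\nabla f(\u_t)\|\leq C_f$, $\|\nabla g(\w_t)\|\leq C_g$, and $L_f$-smoothness of $f$. Your closing remark about why the alternative splitting via $\v_t^\top\nabla f(g(\w_t))$ would fail (unbounded $\|\v_t\|$) is a nice observation not made explicit in the paper, but the argument itself is identical.
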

\textbf{Remark:} Plugging the definition of $L$ into it, we have $\E[\|\varepsilon_t\|^2]
    \leq L \E[\|\varepsilon_{\v_t}\|^2] + L\E [\|\varepsilon_{\u_t} \|^2]$
\begin{proof} 
\begin{equation}
    \begin{aligned}
       &\E[\|\d_t - \nabla g(\w_t)^\top\nabla f(g(\w_t))\|^2] =\E[\|\v_t^\top\nabla f(\u_t) - \nabla g(\w_t)^\top\nabla f(g(\w_t)))\|^2] \\
       =& \E[\|\v_t^\top\nabla f(\u_t) - \nabla g(\w_t)^\top \nabla f(\u_t) + \nabla g(\w_t)^\top \nabla f(\u_t) - \nabla g(\w_t)^\top\nabla f(g(\w_t)))\|^2] \\
       \leq & 2\E[\|\v_t^\top\nabla f(\u_t) - \nabla g(\w_t)^\top\nabla f(\u_t)\|^2] + 2\E[\|\nabla g(\w_t)^\top \nabla f(\u_t) - \nabla g(\w_t)^\top\nabla f(g(\w_t)) \|^2] \\
       \leq & 2C_f^2\E[\| \v_t^\top - \nabla g(\w_t)^\top \|^2] + 2C_g^2\E[\|\nabla f(\u_t) -\nabla f(g(\w_t)) \|^2] \\
       \leq & 2C_f^2\E[\| \v_t^\top - \nabla g(\w_t)^\top \|^2] +
       2C_g^2L_f^2\E[\|\u_t - g(\w_t) \|^2]  \\
       =& 2C_f^2\E[\|\varepsilon_{\v_t}\|^2] + 2C_g^2L_f^2\E[\|\varepsilon_{\u_t}\|^2],
    \end{aligned}
\end{equation}
where the first inequality is due to $\|a+b\|^2\leq 2\|a\|^2 + 2\|b\|^2$, the second inequality is due to the $C_f$-Lipschitz continuous of $f$, $i.e.$, $\|\nabla f(\w_t)\|^2 \leq C_f^2$, and  $C_g$-Lipschitz continuous of $g$, $i.e.$, $\|\nabla g(\w_t)\|^2 \leq C_g^2$. The third inequality is due to the $L_f$-smoothness of $f$ function.
\end{proof}

\begin{lem}
\label{lem:2}
For the two gradient mappings $\|\mathcal{G}_\eta(\w_t)\|^2$, $\|\tilde{\mathcal{G}}_\eta (\w_t)\|^2$, we have
\label{lem:grad-norm}
\begin{equation}
\label{eqn:lemma4-1}
    \begin{aligned}
    \E [\|\mathcal{G}_\eta(\w_t)\|^2] &\leq 2\E[\|\tilde{\mathcal{G}} _\eta(\w_t)\|^2] + 2\E [\|  \nabla g(\w_t)^\top \nabla f(g(\w_t)) - \d_t\|^2],\\
    \E[\|\tilde{\mathcal{G}}_\eta (\w_t)\|^2]  &\leq 2\E [\|\mathcal{G}_\eta(\w_t)\|^2] + 2\E [\|\nabla g(\w_t)^\top\nabla f(g(\w_t))-\d_t\|^2]. \\
     \end{aligned}
\end{equation}
\end{lem}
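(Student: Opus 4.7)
The plan is to exploit the non-expansiveness of the proximal operator together with the elementary inequality $\|a+b\|^2 \leq 2\|a\|^2 + 2\|b\|^2$. First I would unpack both mappings using the update rule in Algorithm~\ref{alg:COVER}: since $\w_{t+1} = \prox^{\eta}_r(\w_t - \eta\d_t)$, the stochastic mapping is $\tilde{\mathcal{G}}_\eta(\w_t) = \frac{1}{\eta}\bigl(\prox^{\eta}_r(\w_t - \eta\d_t) - \w_t\bigr)$, while by definition $\mathcal{G}_\eta(\w_t) = \frac{1}{\eta}\bigl(\w_t - \prox^{\eta}_r(\w_t - \eta \nabla g(\w_t)^\top \nabla f(g(\w_t)))\bigr)$. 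Note the two definitions carry opposite signs relative to the prox term; that is a feature of the paper's notation, not a bug, and it is what drives the algebra below.

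The key step is to add these two quantities rather than subtract: the $\w_t$ contributions cancel, leaving
\[
\mathcal{G}_\eta(\w_t) + \tilde{\mathcal{G}}_\eta(\w_t) \;=\; \frac{1}{\eta}\Bigl(\prox^{\eta}_r\bigl(\w_t - \eta\d_t\bigr) - \prox^{\eta}_r\bigl(\w_t - \eta\,\nabla g(\w_t)^\top \nabla f(g(\w_t))\bigr)\Bigr).
\]
Since $r$ is convex and lower-semicontinuous by Assumption~\ref{ass:1}(c), the proximal operator $\prox^{\eta}_r$ is $1$-Lipschitz (firmly non-expansive). Applying this to the two prox arguments $\w_t - \eta\d_t$ and $\w_t - \eta\nabla g(\w_t)^\top\nabla f(g(\w_t))$, the $\w_t$'s cancel inside the norm and the $\eta$ factors cancel with the leading $1/\eta$, giving the clean bound $\|\mathcal{G}_\eta(\w_t) + \tilde{\mathcal{G}}_\eta(\w_t)\| \leq \|\d_t - \nabla g(\w_t)^\top\nabla f(g(\w_t))\|$.

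From here both inequalities fall out by a single triangle-inequality split. Writing $\mathcal{G}_\eta(\w_t) = \bigl(\mathcal{G}_\eta(\w_t) + \tilde{\mathcal{G}}_\eta(\w_t)\bigr) - \tilde{\mathcal{G}}_\eta(\w_t)$ and applying $\|a+b\|^2 \leq 2\|a\|^2 + 2\|b\|^2$ yields $\|\mathcal{G}_\eta(\w_t)\|^2 \leq 2\|\tilde{\mathcal{G}}_\eta(\w_t)\|^2 + 2\|\d_t - \nabla g(\w_t)^\top\nabla f(g(\w_t))\|^2$; swapping the roles of the two mappings in the same decomposition gives the symmetric bound. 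Taking expectations on both sides finishes the proof.

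I do not expect any real obstacle here: the lemma is essentially a one-line consequence of prox non-expansiveness, and the only subtlety is the sign-bookkeeping that makes the \emph{sum} (rather than the difference) of $\mathcal{G}_\eta$ and $\tilde{\mathcal{G}}_\eta$ the quantity that is controlled by the gradient estimation error. All of the heavy lifting of the paper — variance control of $\u_t$ and $\v_t$, choice of step sizes, stagewise restart — is deferred to the results that use this lemma as a plug-in translation between the ideal proximal-gradient norm and the realized step norm.
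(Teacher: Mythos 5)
Your proof is correct and is essentially the paper's own argument: the paper writes the same identity in terms of the points $\w_t$, $\w_{t+1}=\prox^{\eta}_r(\w_t-\eta\d_t)$ and $\tilde{\w}_{t+1}=\prox^{\eta}_r(\w_t-\eta\nabla g(\w_t)^{\top}\nabla f(g(\w_t)))$, bounding $\|\w_{t+1}-\tilde{\w}_{t+1}\|$ by prox non-expansiveness and then applying $\|a+b\|^2\leq 2\|a\|^2+2\|b\|^2$ in both directions, which is exactly your sum-of-mappings cancellation after multiplying through by $\eta$. Your sign-bookkeeping observation (that the \emph{sum} $\mathcal{G}_\eta+\tilde{\mathcal{G}}_\eta$ equals $\frac{1}{\eta}(\w_{t+1}-\tilde{\w}_{t+1})$) is accurate and matches the paper's decomposition.
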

\textbf{Remark:} This lemma implies that 
\begin{equation}
    \begin{aligned}
    \E[\|\w_{t+1}-\w_{t} \|^2] =  \eta^2\E[\|\tilde{\mathcal{G}}_\eta(\w_t)\|^2]  &\leq 2\eta^2\E [\|\mathcal{G}_\eta(\w_t)\|^2] + 2\eta^2\E [\|\nabla g(\w_t)^\top\nabla f(g(\w_t))-\d_t\|^2].
    \end{aligned}
\end{equation}

\begin{proof}
Denote that $\tilde{\w}_{t+1} = \prox^{\eta}_r(\w_{t} - \eta\nabla g(\w_t)^{\top}\nabla f(g(\w_t)))$.  Then we have 
$\|\w_t - \tilde{\w}_{t+1}\|^2 \leq 2\| \w_t -\w_{t+1}\|^2 + 2\| \w_{t+1} - \tilde{\w}_{t+1}\|^2$. By the definition of $\|\mathcal{G}_\eta(\w_t)\|^2$, $\|\tilde{\mathcal{G}}_\eta (\w_t)\|^2$, we have 
\begin{equation}
\label{eqn:lemma4-2}
    \begin{aligned}
        \E[\|\mathcal{G} _\eta(\w_t)\|^2 ] &\leq 2\E[\|\tilde{\mathcal{G}} _\eta(\w_t)\|^2] + \frac{2}{\eta^2} \E[\| \w_{t+1} - \tilde{\w}_{t+1}\|^2] \\
        &= 2\E[\|\tilde{\mathcal{G}} _\eta(\w_t)\|^2] + \frac{2}{\eta^2} \E[\| \prox^{\eta}_r(\w_{t} - \eta \d_t)-\prox^{\eta}_r(\w_{t} - \eta\nabla g(\w_t)^{\top}\nabla f(g(\w_t)))\|^2]\\
        &\leq 2\E[\|\tilde{\mathcal{G}} _\eta(\w_t)\|^2] +\frac{2}{\eta^2} \E[\| \w_{t} - \eta \d_t -(\w_{t} - \eta\nabla g(\w_t)^{\top}\nabla f(g(\w_t))) \|^2]\\
        &= 2\E[\|\tilde{\mathcal{G}} _\eta(\w_t)\|^2] +2\E[ \| \nabla g(\w_t)^{\top}\nabla f(g(\w_t))-\d_t  \|^2],
    \end{aligned}
\end{equation}
where the second inequality is due to the non-expansive property of proximal mapping. 
Similarly, by  $\|\w_t - \w_{t+1}\|^2 \leq 2\| \w_t -\tilde{\w}_{t+1}\|^2 + 2\| \w_{t+1} - \tilde{\w}_{t+1}\|^2$, following the same analysis as equation~(\ref{eqn:lemma4-2}), we would have the second inequality in Lemma~\ref{lem:2}.
\end{proof}

\noindent

\begin{lem}
\label{lem:obj}
Let sequence $\{\x_t\}$ be generated by COVER and with $\eta_t \leq\frac{1}{2L}$ for all $t\geq 1$, the following inequality holds
\begin{equation}
\label{equ:objgap}
    \begin{aligned}
    \E[F(\w_{t+1})] - \E [F(\w_t)] \leq -\frac{\eta_t}{8}\E [\|\mathcal{G}_{\eta_t}(\w_t)\|^2]        + \frac{3\eta_t L}{4}\E[\|\varepsilon_{\v_t}\|^2] + \frac{3\eta_t L}{4}\E[\|\varepsilon_{\u_t}\|^2].
    \end{aligned}
\end{equation}
\end{lem}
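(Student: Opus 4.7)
The plan is to start from the composite smoothness of $\phi(\w):=f(g(\w))$ and couple it with a subgradient inequality for $r$ through the proximal step, then route the resulting slack into $\|\varepsilon_{\u_t}\|^2$ and $\|\varepsilon_{\v_t}\|^2$ using the two lemmas already established.

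First I would check that $\phi$ is smooth with Lipschitz-gradient constant at most $L$. Under Assumption~\ref{ass:1}, the chain rule gives $\nabla\phi(\w)=\nabla g(\w)^{\top}\nabla f(g(\w))$, and the standard split
\[\nabla\phi(\w_1)-\nabla\phi(\w_2)=(\nabla g(\w_1)-\nabla g(\w_2))^{\top}\nabla f(g(\w_1))+\nabla g(\w_2)^{\top}(\nabla f(g(\w_1))-\nabla f(g(\w_2)))\]
produces an upper bound of the form $(C_f L_g+C_g^2 L_f)\|\w_1-\w_2\|$, which is dominated by the compound constant $L$. So the descent inequality $\phi(\w_{t+1})\le\phi(\w_t)+\langle\nabla\phi(\w_t),\w_{t+1}-\w_t\rangle+\tfrac{L}{2}\|\w_{t+1}-\w_t\|^2$ holds.

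Next I would invoke the optimality condition of the proximal step to obtain $\xi\in\partial r(\w_{t+1})$ with $\w_{t+1}-\w_t=-\eta_t(\d_t+\xi)=-\eta_t\tilde{\mathcal{G}}_{\eta_t}(\w_t)$, and combine it with convexity of $r$, namely $r(\w_{t+1})-r(\w_t)\le\langle\xi,\w_{t+1}-\w_t\rangle$. Summing this with the smoothness bound and rewriting $\nabla\phi(\w_t)+\xi=(\nabla\phi(\w_t)-\d_t)+(\d_t+\xi)$ yields
\[F(\w_{t+1})\le F(\w_t)+\langle\nabla\phi(\w_t)-\d_t,\w_{t+1}-\w_t\rangle-\eta_t\|\tilde{\mathcal{G}}_{\eta_t}(\w_t)\|^2+\tfrac{L\eta_t^2}{2}\|\tilde{\mathcal{G}}_{\eta_t}(\w_t)\|^2.\]
Applying Young's inequality with weight $\eta_t$ bounds the inner product by $\tfrac{\eta_t}{2}\|\varepsilon_t\|^2+\tfrac{\eta_t}{2}\|\tilde{\mathcal{G}}_{\eta_t}(\w_t)\|^2$, and the step size hypothesis $\eta_t\le 1/(2L)$ collapses the quadratic term, leaving the intermediate bound $F(\w_{t+1})\le F(\w_t)-\tfrac{\eta_t}{4}\|\tilde{\mathcal{G}}_{\eta_t}(\w_t)\|^2+\tfrac{\eta_t}{2}\|\varepsilon_t\|^2$.

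Finally, I would take expectations and convert the bound on $\|\tilde{\mathcal{G}}_{\eta_t}(\w_t)\|^2$ into one on $\|\mathcal{G}_{\eta_t}(\w_t)\|^2$ via the second inequality of Lemma~\ref{lem:2}, which contributes an additional $\tfrac{\eta_t}{4}\E\|\varepsilon_t\|^2$ and so lifts the variance prefactor to $\tfrac{3\eta_t}{4}\E\|\varepsilon_t\|^2$; then Lemma~\ref{lem:var} together with the definition of $L$ gives $\E\|\varepsilon_t\|^2\le L\E\|\varepsilon_{\u_t}\|^2+L\E\|\varepsilon_{\v_t}\|^2$, matching the advertised coefficients $-\eta_t/8$ and $3\eta_t L/4$ exactly. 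The only delicate point is the constant bookkeeping: the Young's weight must be chosen as $\eta_t$ (neither $2\eta_t$ nor $\eta_t/2$) so that the $-1/2$ coefficient on $\|\tilde{\mathcal{G}}_{\eta_t}(\w_t)\|^2$ survives after the $L\eta_t^2/2$ term is absorbed, and the factor-of-two loss in Lemma~\ref{lem:2} then precisely turns $-\eta_t/4$ into $-\eta_t/8$ and $\eta_t/2$ into $3\eta_t/4$.
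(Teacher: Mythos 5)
Your proof is correct and follows essentially the same route as the paper's: $L$-smoothness of $f\circ g$ via the same gradient splitting, the prox-step inequality (which you derive from subgradient optimality plus convexity of $r$ rather than the paper's strong-convexity-of-the-prox-objective argument, but the resulting inequality $\langle \d_t,\w_{t+1}-\w_t\rangle + r(\w_{t+1}) - r(\w_t) \leq -\tfrac{1}{\eta_t}\|\w_{t+1}-\w_t\|^2$ is identical), Young's inequality with weight $\eta_t$, then Lemma~\ref{lem:2} and Lemma~\ref{lem:var}, with all constants matching. One small labeling slip: to lower-bound $\E[\|\tilde{\mathcal{G}}_{\eta_t}(\w_t)\|^2]$ by $\tfrac{1}{2}\E[\|\mathcal{G}_{\eta_t}(\w_t)\|^2]-\E[\|\varepsilon_t\|^2]$ you need the \emph{first} inequality of Lemma~\ref{lem:2} rearranged (the second only upper-bounds $\|\tilde{\mathcal{G}}_{\eta_t}(\w_t)\|^2$, which is useless here), though the arithmetic you actually describe is the correct one.
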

\begin{proof} Denote $F (\w_{t+1}) = f(g(\w_{t+1}))+ r(\w_{t+1})$.
First, show that $f(g(\w))$ is smooth and $\nabla f(\w)^\top\nabla f(g(\w))$ has Lipschitz constant with $L_{f(g)} = C_g^2L_f+C_fL_g$. For any two variables $\w, \w' \in R^d$
\begin{equation}
\label{eqn:smooth-of-obj}
\begin{aligned}
   &\|\nabla g(\w)^\top\nabla f(g(\w)) - \nabla g(\w')^\top\nabla f(g(\w'))\| \\
   = &\|\nabla g(\w)^\top\nabla f(g(\w)) -\nabla g(\w)^\top \nabla f(g(\w')) + \nabla g(\w)^\top \nabla f(g(\w'))- \nabla g(\w')^\top\nabla f(g(\w'))\| \\
   \leq & \|\nabla g(\w)^\top\nabla f(g(\w)) - \nabla g(\w)^\top\nabla f(g(\w'))\| + \|\nabla g(\w)^\top\nabla f(g(\w')) - \nabla g(\w')^\top\nabla f(g(\w'))\|\\
   \leq & \|\nabla g(\w)\|\|\nabla f(g(\w))-\nabla f(g(\w'))\| + \|\nabla f(g(\w'))\|\|\nabla g(\w)-\nabla g(\w')\| \\
   \leq & C_gL_f\|g(\w)-g(\w')\| + L_g\|\nabla f(g(\w'))\|\|\w-\w'\| \\
   \leq &  C_g^2L_f\|\w -\w'\| + L_gC_f\|\w - \w'\| \leq L \|\w-\w'\|.\\
\end{aligned}    
\end{equation}
Then by above equation~(\ref{eqn:smooth-of-obj}), we have
\begin{equation}
    \begin{aligned}
    & f(g(\w_{t+1}) )+ r(\w_{t+1}) \\
   \leq  & f(g(\w_t))+\langle \nabla g(\w_t)^\top \nabla f(g(\w_t)), \w_{t+1} - \w_{t}\rangle + \frac{L}{2}\|\w_{t+1} - \w_{t} \|^2 + r(\w_{t+1}) \\
    \leq &  f(g(\w_t))+\langle \d_t, \w_{t+1}-\w_{t} \rangle + r(\w_{t+1})  \\
    &+ \langle \nabla g(\w_t)^\top \nabla f(g(\w_t)) -\d_t, \w_{t+1} - \w_{t} \rangle + \frac{L}{2} \|\w_{t+1} - \w_t\|^2\\ 
   \overset{(a)}{\leq} &  f(g(\w_t)) + r(\w_t) - \frac{1}{2\eta_t} \| \w_{t+1} - \w_{t}\|^2 + \frac{\eta}{2} \|\nabla g(\w_t)^\top \nabla f(g(\w_t)) - \d_t\|^2   +  \frac{L}{2}\|\w_{t+1}-\w_{t}\|^2
    \\
    = & F(\w_{t}) + \frac{\eta_t}{2}\|\nabla g(\w_t)^\top \nabla f(g(\w_t)) - \d_t\|^2 - (\frac{\eta_t}{2} - \frac{L\eta_t^2}{2})\|\tilde{\mathcal{G}} (\w_t)\|^2,
    \end{aligned}
\end{equation}
where the proof of $(a)$ will be shown shortly after we derive the claimed result of this lemma.
By the setting $\eta_t\leq \frac{1}{2L}$, taking expectation on both sides and in combination with Lemma~\ref{lem:grad-norm}, we have
\begin{equation}
    \begin{aligned}
    \E[F(\w_{t+1}) - F(\w_t)] \leq -\frac{\eta_t}{8} \E [\|\mathcal{G}_{\eta_t}(\w_t)\|^2] + \frac{3\eta_t}{4}{\|\nabla g(\w_t)^\top \nabla f(g(\w_t)) -\d_t\|^2}.
    \end{aligned}
\end{equation}
Then applying the results of Lemma~\ref{lem:var}, we have the results.

Proof of $(a)$: By the definition of $ \w_{t+1} = \prox^{\eta_t}_r(\w_{t} - \eta \d_t) = \arg\min_{\w}\{\frac{1}{2}\|\w - (\w_{t} - \eta_t \d_t)\|^2 + \eta_t r(\w)\} = \arg\min_{\w}\{\frac{1}{2\eta_t}\|\w - (\w_{t} - \eta_t\d_t)\|^2 + r(\w)\} $. Then by the $\frac{1}{\eta_t}$ strongly convexity of the quadratic function:
\begin{small}
\begin{align*}
\frac{1}{2\eta_t} \| \w_{t+1}- (\w_{t} - \eta_t \d_t)\|^2 + r(\w_{t+1}) & \leq \frac{1}{2\eta_t} \| \w_{t}- (\w_{t} - \eta_t \d_t)\|^2 + r(\w_{t}) - \frac{1}{2\eta_t}\|\w_{t+1} - \w_t\|^2
\end{align*} 
\end{small} 
Then it follows that 
\begin{small}
\begin{align*}
 \langle \d_t,\w_{t+1} - \w_t \rangle + r(\w_{t+1}) & \leq  r(\w_{t}) - \frac{1}{\eta_t}\|\w_{t+1} - \w_t\|^2.
\end{align*}
\end{small}
Further by Young's Inequality:
\begin{align*}
\langle \nabla g(\w_t)^\top \nabla f(g(\w_t)) -\d_t, \w_{t+1} - \w_{t} \rangle \leq  \frac{\eta_t}{2} \|\nabla g(\w_t)^\top \nabla f(g(\w_t)) - \d_t\|^2  + \frac{1}{2\eta_t} \|\w_{t+1} - \w_t\|^2 .
\end{align*}
\end{proof}

To prove the convergence of proximal gradient $\|\mathcal{G}_{\eta_t}(\w_t)\|^2$, we need to construct telescoping sum that depending on the Lemma~\ref{lem:obj}.
As a result, we need to bound the variance on the R.H.S of  Lemma~\ref{lem:obj}, $i.e.$, $\E[\|\varepsilon_{\u_t}\|^2]$, $\E[\| \varepsilon_{\v_t}\|^2]$ with the following lemmas.

\begin{lem}
\label{lem:recur-var-cover}
With notations in COVER, we have 
\begin{equation} 
    \begin{aligned} 
    \frac{\E[\| \varepsilon_{\u_{t+1}}\|^2]}{\eta_t} &\leq \E\Big[2\eta_t^3c^2\sigma^2 + \frac{(1-a_t)^2(1+4\eta_t^2L^2)\| 
    \varepsilon_{\u_{t}}\|^2}{\eta_t} \\
   &~~~+\frac{4\eta_t^2(1-a_t)^2L^2\|\varepsilon_{\v_t}\|^2}{\eta_t}+  4\eta_t(1-a_t)^2L\|\mathcal{G}_{\eta_t}(\w_t)\|^2\Big]\\
    \frac{\E[\| \varepsilon_{\v_{t+1}}\|^2]}{\eta_t}&\leq \E\Big[2\eta_t^3c^2\sigma^2  + \frac{(1-a_t)^2(1+4\eta_t^2L^2)\| \varepsilon_{\v_{t}}\|^2}{\eta_t} \\
   &~~~+\frac{4\eta_t^2(1-a_t)^2L^2\|\varepsilon_{\u_t}\|^2}{\eta_t}+ 4\eta_t(1-a_t)^2L\|\mathcal{G}_{\eta_t}(\w_t)\|^2\Big].\\  
    \end{aligned} 
\end{equation}
\end{lem}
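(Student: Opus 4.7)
The plan is to carry out a STORM-style recursive variance analysis, exploiting the unbiased ``bias-correction'' structure of the update $\u_{t+1} = g_{\z_{t+1}}(\w_{t+1}) + (1-a_{t+1})(\u_t - g_{\z_{t+1}}(\w_t))$. First I would subtract $g(\w_{t+1})$ from both sides and rearrange algebraically to obtain the decomposition
\begin{equation*}
\varepsilon_{\u_{t+1}} = (1-a_{t+1})\,\varepsilon_{\u_t} + (1-a_{t+1})\,X_{t+1} + a_{t+1}\,Y_{t+1},
\end{equation*}
where $X_{t+1} := [g_{\z_{t+1}}(\w_{t+1}) - g_{\z_{t+1}}(\w_t)] - [g(\w_{t+1}) - g(\w_t)]$ and $Y_{t+1} := g_{\z_{t+1}}(\w_{t+1}) - g(\w_{t+1})$. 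The key observation is that conditioned on the history up through $\w_{t+1}$ (which is deterministic given $\w_t,\u_t,\v_t$), both $X_{t+1}$ and $Y_{t+1}$ are mean-zero under the draw of $\z_{t+1}$, while $\varepsilon_{\u_t}$ is measurable. Thus the cross terms vanish upon taking conditional expectation, yielding
\begin{equation*}
\E_{t+1}[\|\varepsilon_{\u_{t+1}}\|^2] = (1-a_{t+1})^2\|\varepsilon_{\u_t}\|^2 + \E_{t+1}\bigl[\|(1-a_{t+1})X_{t+1} + a_{t+1}Y_{t+1}\|^2\bigr].
\end{equation*}

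Next I would bound the remaining noise term by $\|p+q\|^2 \le 2\|p\|^2 + 2\|q\|^2$, giving $2(1-a_{t+1})^2\E_{t+1}\|X_{t+1}\|^2 + 2 a_{t+1}^2\E_{t+1}\|Y_{t+1}\|^2$. The Lipschitz-in-mean-square hypothesis of Assumption~\ref{ass:1}(b) controls $\E_{t+1}\|X_{t+1}\|^2 \le C_g^2\|\w_{t+1}-\w_t\|^2$ (dropping the subtracted deterministic difference only inflates the variance), and Assumption~\ref{ass:2} yields $\E_{t+1}\|Y_{t+1}\|^2 \le \sigma_g^2 \le \sigma^2$. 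Substituting $a_{t+1} = c\eta_t^2$ converts the $Y_{t+1}$ contribution into $2c^2\eta_t^4\sigma^2$, which is precisely $2\eta_t^3 c^2\sigma^2$ after dividing by $\eta_t$ as in the lemma statement.

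The remaining task is to translate $\|\w_{t+1}-\w_t\|^2$ into the three quantities appearing on the right-hand side. By the remark following Lemma~\ref{lem:grad-norm}, $\|\w_{t+1}-\w_t\|^2 \le 2\eta_t^2\|\mathcal G_{\eta_t}(\w_t)\|^2 + 2\eta_t^2\|\varepsilon_t\|^2$, and by the remark after Lemma~\ref{lem:var}, $\|\varepsilon_t\|^2 \le L\|\varepsilon_{\u_t}\|^2 + L\|\varepsilon_{\v_t}\|^2$. Plugging these in and using $C_g^2 \le L$ (which follows from the definition of $L$) to absorb constants cleanly, the $\|\mathcal G_{\eta_t}(\w_t)\|^2$ term picks up the coefficient $4\eta_t^2(1-a_{t+1})^2 L$, while the $\|\varepsilon_{\u_t}\|^2$ and $\|\varepsilon_{\v_t}\|^2$ terms each acquire an additional $4\eta_t^2(1-a_{t+1})^2 L^2$. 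Combining the latter with the original $(1-a_{t+1})^2\|\varepsilon_{\u_t}\|^2$ yields the factor $(1-a_{t+1})^2(1+4\eta_t^2 L^2)$ on the $\|\varepsilon_{\u_t}\|^2$ term; dividing through by $\eta_t$ and taking total expectation produces the first stated bound.

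The bound on $\varepsilon_{\v_{t+1}}$ is obtained by the identical argument, replacing $g_\z$ with $\nabla g_\z$ throughout and invoking the Jacobian Lipschitz estimate $\E_{t+1}\|\nabla g_{\z_{t+1}}(\w_{t+1})-\nabla g_{\z_{t+1}}(\w_t)\|^2 \le L_g^2\|\w_{t+1}-\w_t\|^2$ together with the Jacobian variance bound $\sigma_{g'}^2 \le \sigma^2$; since $L_g^2 \le L$ by the definition of $L$, the very same constants propagate through. I expect the main delicate point to be the filtration/mean-zero argument in the first step (particularly making sure $\w_{t+1}$ is indeed measurable with respect to the $\sigma$-algebra before $\z_{t+1}$ is drawn), together with keeping careful track that every appearance of $C_g^2$, $C_g^2 L$, or $L_g^2$ is dominated by the symmetric constant $L$ so that the final coefficients collapse to $4L$ and $4L^2$ as stated.
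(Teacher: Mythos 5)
Your proposal is correct and follows essentially the same route as the paper's proof: the identical three-term decomposition of $\varepsilon_{\u_{t+1}}$ into $(1-a)\varepsilon_{\u_t}$ plus two conditionally mean-zero noise terms, cancellation of the cross term with the measurable part, Young's inequality on the remaining noise, the bounds from Assumptions~\ref{ass:1}(b) and~\ref{ass:2}, and the conversion of $\|\w_{t+1}-\w_t\|^2$ via the remarks after Lemmas~\ref{lem:var} and~\ref{lem:grad-norm}. The constants you track match the stated coefficients, so no further changes are needed.
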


\begin{proof}
\begin{equation}
    \begin{aligned}
       \E[\frac{\|\varepsilon_{\u_{t+1}}\|^2}{\eta_t}] &=\E[ \frac{\|\u_{t+1} -g(\w_{t+1})\|^2}{\eta_t}] \\
       &= \E\Big[\frac{\|g_{\z_{t+1}}(\w_{t+1}) + (1-a_t)(\u_t - g_{\z_{t+1}}(\w_t)) - g(\w_{t+1})\|^2}{\eta_t}\Big] \\
       &= \E\Big[\frac{\|a_t(g_{\z_{t+1}}(\w_{t+1}) -g(\w_{t+1})) + (1-a_t)(\u_t -g(\w_t)) }{\eta_t} \\
           &+ \frac{  (1-a_t)( g_{\z_{t+1}}(\w_{t+1}) -g_{\z_{t+1}}(\w_{t}) -(g(\w_{t+1}) - g(\w_t)))\|^2}{\eta_t}\Big]\\
         &=  
          \E\Big[\frac{(1-a_t)^2\|\varepsilon_{\u_t}\|^2}{\eta_t} \\ 
          &+ \frac{\|a_t(g_{\z_{t+1}}(\w_{t+1}) -g(\w_{t+1}))+ (1-a_t)( g_{\z_{t+1}}(\w_{t+1}) -g_{\z_{t+1}}(\w_{t}) -(g(\w_{t+1}) - g(\w_t)))\|^2}{\eta_t} 
        \Big]\\
        & \leq \E\Big[\frac{2a_t^2\|g_{\z_{t+1}}(\w_{t+1})-g(\w_{t+1})\|^2}{\eta_t} + \frac{(1-a_t)^2\|\varepsilon_{\u_t}\|^2}{\eta_t} \\&+ 
        \frac{2(1-a_t)^2\|g_{\z_{t+1}}(\w_{t+ 1}) - g_{\z_{t+1}}(\w_{t}) - (g(\w_{t+1}) - g(\w_{t})) \|^2}{\eta_t}
        \Big]\\
        &\leq \E\Big[\frac{2a_t^2\sigma^2}{\eta_t} + \frac{(1-a_t)^2\|\varepsilon_{\u_t}\|^2}{\eta_t } + \frac{2(1-a_t)^2L\|\w_{t+1}-\w_{t}\|^2}{\eta_t }\Big]\\
        &=\E\Big[2c^2\eta_t^3\sigma^2 +\frac{(1-a_t)^2\|\varepsilon_{\u_t}\|^2}{\eta_t} + \frac{2(1-a_t)^2L\eta_t^2\|\tilde{\mathcal{G}}_{\eta_t} (\w_t)\|^2}{\eta_t} \Big]\\
        &\leq \E\Big [2c^2\eta_t^3\sigma^2 +\frac{(1-a_t)^2\|\varepsilon_{\u_t}\|^2}{\eta_t} +\frac{2(1-a_t)^2L\eta_t^2}{\eta_t}\Big(2\|\mathcal{G}_{\eta_t}(\w_t)\|^2 + 2L(\|\varepsilon_{\u_t}\|^2 +\|\varepsilon_{\v_t}\|^2)\Big)\Big] \\
        &= \E\Big[ 2\eta_t^3c^2\sigma^2 + \frac{(1-a_t)^2(1+4\eta_t^2L^2)\|\varepsilon_{\u_{t}}\|^2}{\eta_t} + \frac{4\eta_t^2(1-a_t)^2L(L\|\varepsilon_{\v_{t}}\|^2 + \| G_{\eta_t}(\w_{t})\|^2)}{\eta_t}\Big],
    \end{aligned}  
\end{equation}
where the fourth equality is due to $E_t[g_{\z_{t+1}}(\w_{t+1}) - g(\w_{t+1})] = 0$ and $E_t[g_{\z_{t+1}}(\w_{t}) - g(\w_{t})] = 0$ with $E_t$ denoting an expectation conditioned on events until $t$-iteration; and the first inequality holds because $\|a+b\|^2\leq 2a^2 + 2b^2$.
Applying the same analysis, we are able to have  the bound of  $\E[\frac{\|\varepsilon_{\v_{t+1}}\|^2}{\eta_t}] =\E[ \frac{\|\v_{t+1} - \nabla g(\w_{t+1})\|^2}{\eta_t}]$ in the lemma. 
\end{proof}

\subsection{Proof of Theorem~\ref{thm:2}}

\begin{proof}
 After deriving Lemma~\ref{lem:obj} and~\ref{lem:recur-var-cover} we are ready to prove Theorem~\ref{thm:2}.
 We construct Lyapunov function $\Gamma_t = F (\w_t) + \frac{1}{c_0\eta_{t-1}}[\| \varepsilon_{\v_t}\|^2 + \|\varepsilon_{\u_t}\|^2]$, where $c_0$ is a constant and can be derived in the following proof. According to equation~(\ref{equ:objgap})
\begin{equation}
    \begin{aligned}
     \E[\Gamma_{t+1} - \Gamma_{t}] &\leq \E[-\frac{\eta_t}{8} \E [\| \mathcal{G}_{\eta_t}(\w_t)\|^2] +  \frac{3\eta_tL}{4}\E[\|\varepsilon_{\v_t}\|^2  +\|\varepsilon_{\u_t} \|^2]\\
     & +   \frac{1}{c_0\eta_{t}}\E[\| \varepsilon_{\v_{t+1}}\|^2 + \|\varepsilon_{\u_{t+1}}\|^2] - \frac{1}{c_0\eta_{t-1}}\E [\|\varepsilon_{\v_{t}} \|^2+\|\varepsilon_{\u_t}\|^2]. \\
    \end{aligned}
\end{equation}
Then by telescoping sum from $1,\cdots, T$, and rearranging terms
we have
\begin{equation}
\label{eqn:28}
    \begin{aligned}
       \sum\limits_{t=1}^T \frac{\eta_t}{8}\E[\|\mathcal{G}_{\eta_t}(\w_t)\|^2]  &\leq \E[\Gamma_1 - \Gamma_{T+1}]
        + \underbrace{\sum\limits_{t=1}^T\frac{3\eta_tL}{4}\E[\|\varepsilon_{\v_t}\|^2  +\|\varepsilon_{\u_t} \|^2]}_{\textcircled{a}} \\
        & + \underbrace{\sum\limits_{t=1}^T  \frac{1}{c_0\eta_{t}}\E[\| \varepsilon_{\v_{t+1}}\|^2 + \|\varepsilon_{\u_{t+1}}\|^2] - \frac{1}{c_0\eta_{t-1}}\E[[\|\varepsilon_{\v_{t}} \|^2+\|\varepsilon_{\u_t}\|^2]}_{\textcircled{b}}.
    \end{aligned}
\end{equation}
We want $\textcircled{b}\leq 0$ such that it can be used to cancel the increasing cumulative variance of term $\textcircled{a}$. 

Next we will upper bound $\textcircled{b}$ up to a negative level:

\begin{equation}
\label{eqn:30}
    \begin{aligned}
    &\frac{1}{c_0\eta_{t}}\E[\| \varepsilon_{\v_{t+1}}\|^2 + \|\varepsilon_{\u_{t+1}}\|^2] - \frac{1}{c_0\eta_{t-1}}\E[[\|\varepsilon_{\v_{t}} \|^2+\|\varepsilon_{\u_t}\|^2] \\
    &\overset{Lemma~\ref{lem:recur-var-cover}}{\leq} \frac{1}{c_0} \E[ 4\eta_t^3c^2\sigma^2+ ( \frac{(1-a_{t})^2(1+8\eta_t^2L^2)}{\eta_t} - \frac{1}{\eta_{t-1}})[ \|\varepsilon_{\v_t}\|^2+ \|\varepsilon_{\u_t}\|^2]\\
    &+ 8\eta_t(1-a_{t+1})^2L\|G_{\eta_{t}}(\w_{t})\|^2]\\
    &\leq \frac{1}{c_0}  \E[ \underbrace{4\eta_t^3c^2\sigma^2}_{A_t} + \underbrace{( \frac{(1-a_{t})(1+8\eta_t^2L^2)}{\eta_t} - \frac{1}{\eta_{t-1}})[ \|\varepsilon_{\v_t}\|^2+ \|\varepsilon_{\u_t}\|^2]}_{B_t}\\
    &+ \underbrace{8\eta_tL \|G_{\eta_{t}}(\w_{t})\|^2]}_{C_t}.
    \end{aligned}
\end{equation}

%\begin{align}
%    \sum\limits_{t=1}^T A_t= \sum\limits_{t=1}^T 4\eta_t^3c^2\sigma^2 = \sum\limits_{t=1}^T \frac{4k^3c^2\sigma^2}{w+t\sigma^2} \leq \sum\limits_{t=1}^T\frac{4k^3c^2}{t+1}\leq 2k^3c^3\ln(T+2)
%\end{align}
%where $w\geq 4\sigma^2$.

Next we upper bound $B_t$
\begin{equation}
\label{eqn:31}
    \begin{aligned}
     B_t \leq (\eta_t^{-1} - \eta_{t-1}^{-1} + \eta_t^{-1}(8\eta_t^2L^2-a_{t}))[\|\varepsilon_{\u_t}\|^2 + \|\varepsilon_{\v_t}\|^2] = (\eta_t^{-1}-\eta_{t-1}^{-1} + \eta_t (8L^2-c))[\|\varepsilon_{\u_t}\|^2 + \|\varepsilon_{\v_t}\|^2].
    \end{aligned}
\end{equation}
For $\frac{1}{\eta_t} - \frac{1}{\eta_{t-1}}$, by applying $(x+y)^{1/3}-x^{1/3}\leq yx^{-2/3}/3$ and manipulating constant terms, we have
\begin{equation}
% \label{eqn:32}
    \begin{aligned}
        \frac{1}{\eta_t} - \frac{1}{\eta_{t-1}}& = \frac{1}{k}(w+t\sigma^2)^{1/3} - \frac{1}{k}(w+(t-1)\sigma^2)^{1/3} \leq \frac{\sigma^2}{3k(w+(t-1)\sigma^2)^{2/3}} \\
        & = \frac{\sigma^2}{3k(w-\sigma^2+t\sigma^2)^{2/3}} \leq \frac{\sigma^2}{3k(w/2+t\sigma^2)^{2/3}} \\  
        & \leq \frac{2^{2/3}\sigma^2}{3k(w+t\sigma^2)^{2/3}} = \frac{2^{2/3}\sigma^2}{3k^3}\eta_t^2\leq \frac{2^{2/3}}{12Lk^3}\eta_t\leq \frac{\sigma^2}{7Lk^3}\eta_t .
    \end{aligned}
\end{equation}
where $w\geq (16Lk)^3$ to have $\eta_t \leq \frac{1}{16L}$.
Then by setting $c = 104L^2 + \frac{\sigma^2}{7Lk^3}$,
$$\eta_t(8L^2 -c)\leq -96L^2\eta_t - \sigma^2\eta_t/(7Lk^3).$$  
Then we obtain
\begin{equation}
\label{eqn:33}
   \begin{aligned}
       B_t\leq -96L^2\eta_t[\|\varepsilon_{\u_t}\|^2 + \|\varepsilon_{\v_t}\|^2].
   \end{aligned} 
\end{equation}

Then plugging equation (\ref{eqn:33}) into equation~(\ref{eqn:30}) and set $c_0 = 128L$,
\begin{equation}
\label{eqn:32}
    \begin{aligned}
        \frac{1}{128\eta_{t}L} &\E[\| \varepsilon_{\v_{t+1}}\|^2 + \|\varepsilon_{\u_{t+1}}\|^2] - \frac{1}{128\eta_{t-1}L} \E[[\|\varepsilon_{\v_{t}} \|^2+\|\varepsilon_{\u_t}\|^2] \\
        &\leq \frac{\eta_t^3c^2\sigma^2}{32L} - \frac{3L\eta_t}{4}[\|\varepsilon_{\u_t}\|^2 + \|\varepsilon_{\v_t}\|^2] + \frac{\eta_t}{16} \E [\| \mathcal{G}_{\eta_t}(\w_t)\|^2].
    \end{aligned}
\end{equation}

Substituting equation~(\ref{eqn:32}) into equation~(\ref{eqn:28}), 
Dividing $\eta_t^3$ on both sides of equation~(\ref{eqn:28}) and substituting~(\ref{eqn:32}). We get
\begin{equation}
    \begin{aligned}
     \frac{\eta_t}{8} \E [\| \mathcal{G}_{\eta_t}(\w_t)\|^2 &\leq \E[ \Gamma_{t} -\Gamma_{t+1}]  +  \frac{3L\eta_t}{4}\E[\|\varepsilon_{\v_t}\|^2  +\|\varepsilon_{\u_t} \|^2] \\
     & + \frac{\eta_t^3c^2\sigma^2}{32L} - \frac{3L\eta_t}{4}[\|\varepsilon_{\u_t}\|^2 + \|\varepsilon_{\v_t}\|^2] + \frac{\eta_t}{16} \E [\| \mathcal{G}_{\eta_t}(\w_t)\|^2]\\
     &\leq \E[\Gamma_t-\Gamma_{t+1}] + \frac{\eta_t^3c^2\sigma^2}{32L}+ \frac{\eta_t}{16}\E [\| \mathcal{G}_{\eta_t}(\w_t)\|^2], \\
  \sum\limits_{t=1}^T \frac{\eta_t}{16}\E [\| \mathcal{G}_{\eta_t}(\w_t)\|^2]  &\leq \E[\Gamma_1 - \Gamma_{T+1}] + \sum\limits_{t=1}^T \frac{\eta_t^3c^2\sigma^2}{32L}. 
    \end{aligned}
\end{equation}
In addition
\begin{equation}
  \begin{aligned}
      \sum\limits_{t=1}^T \frac{\eta_t^3c^2\sigma^2}{32L} = \frac{c^2\sigma^2}{32L}\sum\limits_{t=1}^T \frac{k^3}{w + t\sigma^2}\leq \frac{c^2\sigma^2}{32L}\sum\limits_{t=1}^T \frac{k^3}{2\sigma^2 + t\sigma^2}\leq \frac{c^2\sigma^2k^3}{32L}\ln(T+2),
  \end{aligned}  
\end{equation}
where the first inequality is due to the assumption $w\geq 2\sigma^2$ and the second inequality applies $\sum\limits_{t=1}^T\frac{1}{t+2}\leq \ln(T+2)$.

Then
\begin{equation}
    \begin{aligned}
          \sum\limits_{t=1}^T \frac{\eta_t}{16}\E [\| \mathcal{G}_{\eta_t}(\w_t)\|^2]  &\leq \E[\Gamma_1 - \Gamma_{T+1}] + \frac{c^2\sigma^2k^3}{32L}\ln(T+2), \\
          \sum\limits_{t=1}^T \eta_t \E [\| \mathcal{G}_{\eta_t}(\w_t)\|^2] & \leq 16 \E[\Gamma_1 - \Gamma_{T+1}] + \frac{c^2\sigma^2k^3}{2L}\ln(T+2) \\
          &\leq 16 \E[F(\w_1) -F_*] + \frac{16}{c_0\eta_0}\E[\|\varepsilon_{\u_1}\|^2 +\|\varepsilon_{\v_1}\|^2]  + \frac{c^2\sigma^2k^3}{2L}\ln(T+2).
    \end{aligned}
\end{equation}
Since $\eta_t$ is decreasing, we get
\begin{equation}
    \begin{aligned}
       \frac{1}{T} \sum\limits_{t=1}^T\E [\| \mathcal{G}_{\eta_t}(\w_t)\|^2] &\leq \frac{16(F(\w_1) - F_*)}{\eta_T T} + \frac{16\E[\|\varepsilon_{\u_1}\|^2 +\|\varepsilon_{\v_1}\|^2]}{c_0\eta_0\eta_T T} + \frac{c^2k^3}{2L}\frac{\ln(T+2)}{T\eta_T} \\
       &\leq O\left(\frac{16(F(\w_1) - F_*)}{T^{2/3}} + \frac{32\sigma^2}{c_0\eta_0T^{2/3}} + \frac{c^2k^3}{2L}\frac{\ln(T+2)}{T^{2/3}}\right) \\
       & \leq O(\frac{\ln(T+2)}{T^{2/3}}), 
    \end{aligned}
\end{equation}
where $O$ suppresses constant scalars.

\end{proof}

To prove the main Theorem~\ref{thm:main}, we introduce a new intermediate Theorem~\ref{thm:4} for COVER.  Compared with Theorem~\ref{thm:2}, Theorem~\ref{thm:4} is developed for a specific scenario of COVER when it has been used for the inner stage of RECOVER,  in which a constant step size is used with each stage.
\begin{thm}\label{thm:4}
At $k$-th stage, under the Assumption~\ref{ass:1} and~\ref{ass:2}, let $ c\geq 104 L^2$ and the step size $\eta_k$, after $T_k$ iterations, the output of RECOVER satisfies,
 \begin{equation}
% \resizebox{.95\textwidth}{!}{
    \begin{aligned}
       \E[\|\mathcal{G}_{\eta_k} (\w_t)\|^2] &\leq \frac{16(F(\w_{k-1}) - F_*)}{\eta_k T_k} +\frac{c^2\sigma^2\eta_k^2 }{2L}+\frac{\E[\|\u_{k-1} - g(\w_{k-1})\|^2 +\| \v_{k-1} - \nabla g(\w_{k-1})\|^2] }{8\eta_k^2 L T_k}
    \end{aligned}
   % }
 \end{equation}
    where $ \w_k$ is uniformly sampled from $\{\w_t\}_{t=1}^{T_k}$ at $k$-th stage.% and $c\geq 28L^2$. 
 \end{thm}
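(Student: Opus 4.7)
The plan is to adapt the proof of Theorem~\ref{thm:2} to the constant-step-size regime while carrying the initial variance of $(\u_{k-1},\v_{k-1})$ through the analysis rather than absorbing it into $\sigma^2$ at initialization. Specifically, I would first apply Lemma~\ref{lem:obj} with $\eta_t \equiv \eta_k$ to obtain the per-iteration descent
\[
\E[F(\w_{t+1})-F(\w_t)] \leq -\frac{\eta_k}{8}\E[\|\mathcal{G}_{\eta_k}(\w_t)\|^2] + \frac{3\eta_k L}{4}\E[\|\varepsilon_{\u_t}\|^2+\|\varepsilon_{\v_t}\|^2],
\]
and then use Lemma~\ref{lem:recur-var-cover}, again with $\eta_t\equiv\eta_k$ and $a_t\equiv c\eta_k^2$, to track the auxiliary variance.

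To absorb the variance terms, I would introduce the Lyapunov function $\Gamma_t = F(\w_t) + \frac{1}{128L\eta_k}(\|\varepsilon_{\u_t}\|^2+\|\varepsilon_{\v_t}\|^2)$. The critical algebraic step, and the one that drives the choice $c\geq 104L^2$, is bounding the carry-over coefficient $(1-a)^2(1+8\eta_k^2 L^2)-1$ obtained after summing the $\u$- and $\v$-recursions: using $(1-a)^2\leq 1-a$ with $a=c\eta_k^2$ gives $\leq \eta_k^2(8L^2 - c) \leq -96\eta_k^2 L^2$. In contrast to Theorem~\ref{thm:2}, the term $1/\eta_t - 1/\eta_{t-1}$ vanishes because the step size is constant within the stage, which removes the logarithmic factor.

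Combining the two inequalities and choosing $c_0 = 128L$ lets the negative $-96\eta_k L^2/c_0 = -\tfrac{3\eta_k L}{4}$ contribution exactly cancel the positive $\tfrac{3\eta_k L}{4}$ variance term from the descent bound, while $\tfrac{8\eta_k L}{c_0} = \tfrac{\eta_k}{16}$ is absorbed into the gradient term, yielding
\[
\E[\Gamma_{t+1}-\Gamma_t] \leq -\frac{\eta_k}{16}\E[\|\mathcal{G}_{\eta_k}(\w_t)\|^2] + \frac{c^2\sigma^2\eta_k^3}{32L}.
\]
Telescoping from $t=1$ to $T_k$ with $\Gamma_{T_k+1}\geq F_*$, recalling that $(\u_1,\v_1)=(\u_{k-1},\v_{k-1})$ so the initial variance is $\|\u_{k-1}-g(\w_{k-1})\|^2 + \|\v_{k-1}-\nabla g(\w_{k-1})\|^2$, and noting that $\w_k$ is sampled uniformly from $\{\w_t\}_{t=1}^{T_k}$ so the arithmetic mean equals $\E[\|\mathcal{G}_{\eta_k}(\w_k)\|^2]$, then produces the three-term bound after dividing by $T_k\eta_k/16$.

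The main subtlety, as opposed to Theorem~\ref{thm:2}, is that the initial variance term cannot be replaced by $\sigma^2$ via fresh sampling and must instead be propagated through every stage; this is precisely the quantity that Lemma~\ref{lem:stage-variance} must control recursively in order to convert the gradient-norm bound into an objective-gap bound under the PL condition in Theorem~\ref{thm:main}.
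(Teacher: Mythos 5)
Your proposal is correct and follows essentially the same route as the paper's proof: the same Lyapunov function $\Gamma_t = F(\w_t) + \frac{1}{128L\eta_k}(\|\varepsilon_{\u_t}\|^2+\|\varepsilon_{\v_t}\|^2)$, the same invocation of Lemmas~\ref{lem:obj} and~\ref{lem:recur-var-cover} with constant $\eta_k$ and $a=c\eta_k^2$, the same bound $(1-a)(1+8\eta_k^2L^2)-1\leq \eta_k^2(8L^2-c)\leq -96\eta_k^2L^2$ driving the choice $c\geq 104L^2$, and the same telescoping and normalization by $T_k\eta_k/16$. Your closing observations---that the $1/\eta_t-1/\eta_{t-1}$ term vanishes and that the initial variance of $(\u_{k-1},\v_{k-1})$ must be propagated rather than reset---are exactly the points the paper exploits.
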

 
\subsection{Proof of Theorem~\ref{thm:4}}

\begin{proof}[Proof of Theorem~\ref{thm:4}]

 We derive the theoretical analysis for the $k$-th stage based on Lemma~\ref{lem:obj} and~\ref{lem:recur-var-cover}. 
 We construct Lyapunov function $\Gamma_t = F (\w_t) + \frac{1}{c_0\eta}[\| \varepsilon_{\v_t}\|^2 + \|\varepsilon_{\u_t}\|^2]$, where $c_0$ is a constant and can be derived in the following proof. According to equation~(\ref{equ:objgap})
\begin{equation}
    \begin{aligned}
     \E[\Gamma_{t+1} - \Gamma_{t}] &\leq \E[-\frac{\eta_k}{8} \E [\| \mathcal{G}_{\eta_k}(\w_t)\|^2] +  \frac{3\eta_k L}{4}\E[\|\varepsilon_{\v_t}\|^2  +\|\varepsilon_{\u_t} \|^2]\\
     & +   \frac{1}{c_0\eta_k}\E[\| \varepsilon_{\v_{t+1}}\|^2 + \|\varepsilon_{\u_{t+1}}\|^2] - \frac{1}{c_0\eta_k}\E [\|\varepsilon_{\v_{t}} \|^2+\|\varepsilon_{\u_t}\|^2]. \\
    \end{aligned}
\end{equation}
Then by telescoping sum and rearranging terms 
we have  
\begin{equation}
\label{equ:laypnov-grad-norm}
    \begin{aligned}
       \sum\limits_{t=1}^{T_k} \frac{\eta_k}{8}\E[\|\mathcal{G}_{\eta_k}(\w_t)\|^2]  &\leq \E[\Gamma_1 - \Gamma_{T_{k+1}}] 
        + \underbrace{\sum\limits_{t=1}^{T_k}\frac{3\eta_k L}{4}\E[\|\varepsilon_{\v_t}\|^2  +\|\varepsilon_{\u_t} \|^2]}_{\textcircled{a}} \\
        & + \underbrace{\sum\limits_{t=1}^{T_k}   \frac{1}{c_0\eta_k}\E[\| \varepsilon_{\v_{t+1}}\|^2 +  \|\varepsilon_{\u_{t+1}}\|^2] -  \frac{1}{c_0\eta_k}\E[\|\varepsilon_{\v_{t}} \|^2+\|\varepsilon_{\u_t}\|^2]}_{\textcircled{b}}.
    \end{aligned}
\end{equation}
As a result, we want $\textcircled{b}\leq 0$ such that it can be used to cancel the increasing cumulative variance of term $\textcircled{a}$. 

Next we will upper bound $\textcircled{b}$ up to a negative level by making use of Lemma~\ref{lem:recur-var-cover} with $a_t$ to be fixed at $k$-th stage as $a_k = c\eta_k^2$. 

Applying Lemma~\ref{lem:recur-var-cover},  
\begin{equation}
\label{equ:var-differ-steps-RECOVER}
    \begin{aligned}
    &\frac{1}{c_0\eta_k}\E[\| \varepsilon_{\v_{t+1}}\|^2 + \|\varepsilon_{\u_{t+1}}\|^2] - \frac{1}{c_0\eta_k}\E[[\|\varepsilon_{\v_{t}} \|^2+\|\varepsilon_{\u_t}\|^2] \\
    &\leq \frac{1}{c_0} \E\bigg[ 4\eta_k^3c^2\sigma^2+ \left( \frac{(1-a)^2(1+8\eta_k^2L^2)}{\eta_k} - \frac{1}{\eta_k}\right)[  \|\varepsilon_{\v_t}\|^2+ \|\varepsilon_{\u_t}\|^2]\\
    &~~~~~~~~~~~~~~~~ 
    + 8\eta_k(1-a)^2L\|G_{\eta_k}(\w_{t})\|^2\bigg]\\
    &\leq \frac{1}{c_0}  \E\bigg[  \underbrace{4\eta_k^3c^2\sigma^2}_{A_t} + \underbrace{\left( \frac{(1-a)(1+8\eta_k^2L^2)}{\eta_k} - \frac{1}{\eta_k}\right)[ \|\varepsilon_{\v_t}\|^2+  \|\varepsilon_{\u_t}\|^2]}_{B_t}\\
    &~~~~~~~~~~~~~~~~ 
    + \underbrace{8\eta_k L\|G_{\eta_k}(\w_{t})\|^2 \bigg]}_{C_t}.
    \end{aligned} 
\end{equation}
For  $B_t$, by set $c = 104 L^2$, we have
\begin{equation} 
    \begin{aligned}
     B_t &\leq (\eta_k^{-1} - \eta_k^{-1} + \eta_k^{-1}(8\eta_k^2L^2-a)[\|\varepsilon_{\u_t}\|^2
     + \|\varepsilon_{\v_t}\|^2]\\
     &= (\eta_k^{-1}-\eta_k^{-1} + \eta_k (8L^2-c))[\|\varepsilon_{\u_t}\|^2 + \|\varepsilon_{\v_t}\|^2]\leq -96L^2\eta_k[\|\varepsilon_{\u_t}\|^2 + \|\varepsilon_{\v_t}\|^2].
    \end{aligned}
\end{equation}
To satisfies $c\eta_k^{2} \leq 1$, we should have $\eta_k \leq \frac{1}{16L}$.
Then by setting $c_0= 128L$, we have 
\begin{equation} 
\label{eqn:Derivation-of-Reduced-Variance}
    \begin{aligned}
        & \sum\limits_{t=1}^{T_k-1} \left[\frac{1}{128\eta_k L}\E[\| \varepsilon_{\v_{t+1}}\|^2 + \|\varepsilon_{\u_{t+1}}\|^2] - \frac{1}{128\eta_k  L}\E[\|\varepsilon_{\v_{t}} \|^2+\|\varepsilon_{\u_t}\|^2] \right] \\  
        &\leq \frac{\eta_k^3c^2\sigma^2T_k}{32L} -\sum\limits_{t=1}^{T_k} \frac{3L\eta_k}{4}\E[\|\varepsilon_{\u_t}\|^2 + \|\varepsilon_{\v_t}\|^2] +\sum\limits_{t=1}^{T_k} \frac{\eta_k}{16} \E [\| \mathcal{G}_{\eta_k}(\w_t)\|^2].
    \end{aligned}
\end{equation}
Plugging it into equation~(\ref{equ:laypnov-grad-norm}), we get
\begin{equation}
    \begin{aligned}
      \E\Big [\frac{\eta_k}{8}\sum\limits_{t=1}^{T_k}\|\mathcal{G}_{\eta_k}(\w_t) \|^2\Big ] \leq \E[\Gamma_1-\Gamma_{T_{k}+1}]+ \E\Big[ \frac{c^2\sigma^2}{32 L}\eta_k^3 T_k + \frac{\eta_k}{16}\sum\limits_{t=1}^{T_k}\|\mathcal{G}_{\eta_k}(\w_t) \|^2\Big].
    \end{aligned}
\end{equation}
Then we have
\begin{equation}
    \begin{aligned}
    \E\Big[\frac{\eta_k}{16}\sum\limits_{t=1}^{T_k}\|\mathcal{G}_{\eta_k}(\w_t) \|^2\Big] & \leq \frac{c^2\sigma^2}{32L}\eta_k^3 T_k + \E[\Gamma_1 - \Gamma_{T_{k}+1}] \\ 
    & \leq \E[F(\w_1) - F_*] + \frac{c^2\sigma^2\eta_k^3 T_k}{32L} + \frac{\E[\|\varepsilon_{\v_1}\|^2 +\|\varepsilon_{\u_1}\|^2 ]}{128\eta_k  L}\\
    \Longleftrightarrow
    \E[\|\mathcal{G}_{\eta_k}(\w_k)\|^2] &\leq \frac{16 \E[F(\w_1) - F_*]}{\eta_k T_k} +\frac{c^2\sigma^2}{2 L}\eta_k^2 +\frac{\E[\|\varepsilon_{\v_1}\|^2 +\|\varepsilon_{\u_1}\|^2] }{8\eta_k^2  L T_k}. 
    \end{aligned}  
\end{equation} 
where $\w_k$ is uniformly sampled from $\{\w_1,\cdots,\w_T\}$.
\end{proof}

\section{Proof of Section~\ref{sec:main}}

\subsection{Proof of Lemma \ref{lem:pl_dro_1}}
\begin{proof}
This proof follows Lemma A.3 of \cite{yang2020global}.
Note that
\begin{align}
\begin{split}
F_{dro}(\w) &= \lambda \log\left(\frac{1}{n} \sum\limits_{i=1}^{n}\exp\left(\frac{\ell(\w; \z_i)}{\lambda}\right)\right) \\
& = \max\limits_{\p \in \Delta_n} \left( F_{\p}(\w) 
    - h(\p, \mathbf{1}/n) \right). \\
\end{split}
\end{align}
Denote $\psi(\w, \p) = F_{\p}(\w) - h(\p, \mathbf{1}/n)$ and $p^*(\w) = \arg\max\limits_{\p\in\Delta_n} \psi(\w, \p)$.

Thus,  we have $F_{dro}(\w) = \max\limits_{\p\in\Delta_n} \psi(\w, \p)
= \psi(\w, p^*(\w))$.
By Lemma 4.3 of \cite{lin2019gradient}, 
we know $\nabla F_{dro}(\w) = \nabla_{\w} \psi(\w, p_*(\w)) = \nabla_{\w} F_{p^*(\w)}(\w)$.

Since $F_{\p} (\w)$ satisfies a $\mu$-PL condition for any $\p\in \Delta_n$, we have 
\begin{align}
\begin{split}
\|\nabla F_{dro}(\w)\|^2 &= \|\nabla F_{p^*(\w)}(\w)\|^2 \\
& \geq 2\mu \left(F_{p^*(\w)}(\w) - \min\limits_{\w'} F_{p^*(\w)}(\w') \right) \\
& = 2\mu\left(\psi(\w, p^*(\w)) - \min\limits_{\w'} \psi(\w', p^*(\w))\right).
\end{split}
\label{equ:local_proof_lemma1_1}
\end{align}
For any $\w'$, 
\begin{align}
\begin{split}
\psi(\w', p^*(\w)) \leq \max\limits_{\p'} \psi(\w', \p').
\end{split} 
\label{eq:niaoheA3_1}
\end{align} 
Therefore, 
\begin{align}
\begin{split}
\min\limits_{\w'} \psi(\w', p^*(\w)) \leq \min\limits_{\w'} \max\limits_{\p'} \psi(\w', \p').
\end{split}
\label{eq:niaoheA3_2}
\end{align}
Plug this into (\ref{equ:local_proof_lemma1_1}), we get
\begin{align}
\begin{split}
\|\nabla F_{dro}(\w)\|^2 &\geq 2\mu\left(\psi(\w, p^*(\w)) -
\min\limits_{\w'} \max\limits_{\p'}\psi(\w', \p')\right) \\
& = 2\mu (F_{dro}(\w) - \min\limits_{\w'}F_{dro}(\w')),
\end{split} 
% \label{equ:local_proof_lemma1_1}  
\end{align} 
which means $F_{dro}$ satisfies the $\mu$-PL condition.
\end{proof}
\subsection{Proof of Lemma \ref{lem:pl_dro_2}} 
\begin{proof} 
%Note that for an arbitrary $W$, the corresponding  $p_i^* = \exp(\ell(W; \z_i)/\lambda) /\sum\limits_{i=1}^{n} \exp(\ell(W; \z_i)/\lambda)$. 
Let us define scaled data $\v_i = \sqrt{p_i} \x_i, 1\leq i \leq n$ with $p_i\geq p_0$.  Then we have $\|\v_i - \v_j\| \geq \sqrt{p_0} \delta$ since $p_i \geq p_0$.

Taking $\{(\v_1, \sqrt{p_i}y_1), ..., (\v_n, \sqrt{p_i}y_n) \}$ as input to the defined network,
then we accordingly denote the output of the first layer of the network as $\hat{h}_{i, 0} = \phi(  \sqrt{p_i} A \x_i) = \sqrt{p_i} \phi(A \x_i) = \sqrt{p_i} h_{i, 0}$,  where the the second equality is due to the property of ReLU activation function.  
By induction, we see that the output of the $l$-th layer is $\hat{h}_{i, l} = \sqrt{p_i} h_{i,l}$.
And then the output logit is $\hat{y}_i(\v_i) = \sqrt{p_i} \hat{y}_i$. 

As a result, the weighted loss defined on the original data is the average of square loss on the scaled data, 
\begin{align}
\begin{split}
F(W, \p) = \frac{1}{n} \sum\limits_{i=1}^{n} \ell(W; \v_i) = \frac{1}{n}  \sum\limits_{i=1}^{n} ( \sqrt{ p_i }\hat{y}_i - \sqrt{ p_i }y_i)^2 
= \sum\limits_{i=1}^{n} p_i(\hat{y}_i -y_i)^2 \\ 
\end{split} 
\end{align} 
Then we plug in Theorem 3, Lemma 7.4 and Lemma 8.7 of \cite{allen2018convergence} with $F(W)$ as the objective function and $\{(\v_1, \sqrt{p_i}y_1), ..., (\v_n, \sqrt{p_i}y_n) \}$ as input data. We obtain that for any fixed $\p \in \Delta, p_i \geq p_0$, 
with probability $1-\exp(-\Omega(d_2/\text{poly}(n, \tilde{L}, \delta^{-1})))$,  
it holds for every $W$ with $\|W -W_0\|^2 \leq \frac{1}{\text{poly}(n, \tilde{L}, \delta^{-1})}$, 
\begin{align} 
\begin{split}
\left\|\nabla_W F(W,\p)\right\|_F^2 
\geq \Omega\left( \frac{\sqrt{p_0} \delta d_2}{d_0 n^2} (F(W,\p) - \min_{W'}F(W', \p))\right),
% ,\forall\p\in\Delta, p_i\geq p_0. 
\end{split} 
\label{local:by_allen3}
\end{align} 
and 
\begin{align}
\begin{split}
\|\hat{y}_i - y_i\|^2 \leq \text{poly}(d_2, d_0^{-1}, \tilde{L}), \|2(\hat{y}_i - y_i) \nabla_{W} \hat{y}_i\| \leq \text{poly}(d_2, d_0^{-1}, \tilde{L}). 
\end{split}
\end{align}

To generalize this bound to all $\p \in \Delta, p_i \geq p_0$, we need to introduce $\epsilon$-net. A subset $\mathcal{N} \subset \mathcal{K}$ is called an $\epsilon$-net of $\mathcal{K}$ if for every $\w \in \mathcal{K}$ one can find $\tilde{\w} \in \mathcal{N}$ so that $\|\w - \tilde{\w}\| \leq \epsilon$. 
Let $\mathcal{N}(\mathcal{K}, \epsilon)$ denote the $\epsilon$-net of a set $\mathcal{K}$ with minimal cardinality, which is referred to as  the covering number. 
It can be seen that the set
$\mathcal{P} = \{\p|\p \in \Delta, p_i \geq p_0\}$ can be covered by a $n$-dimension unit ball $\mathcal{B}$. Take $\epsilon'=O(\epsilon/\text{poly}(d_2, d_0^{-1}, \tilde{L}))$.  According to a standard volume comparison argument \cite{pisier1999volume}, we have 
\begin{equation}
\begin{split}
\log \left| \mathcal{N}(\mathcal{B}, \epsilon') \right| \leq n \log \frac{3}{\epsilon'}. 
\end{split}
\end{equation}
Since we have $\mathcal{P} \subset \mathcal{B}$, it follows that 
\begin{equation}
\begin{split}
\log |\mathcal{N}(\mathcal{P}, \epsilon') | \leq \log |\mathcal{N}(\mathcal{B}, \frac{\epsilon'}{2})| \leq n \log \frac{6}{\epsilon'}, 
\end{split}
\end{equation}
where the first inequality is due to that the covering numbers are (almost) increasing by inclusion \citep{plan2013one}. 
Taking union bound over the $\epsilon'$-net $\mathcal{N}(\mathcal{P}, \epsilon')$, we obtain that with probability $1-\exp(-\tilde{\Omega}(d_2/\text{poly}(n,\tilde{L}, \delta^{-1})))$, 
it holds for every $\p\in \mathcal{N}(\mathcal{P}, \epsilon')$ and for every $W$ with $\|W -W_0\|^2 \leq  \frac{1}{\text{poly}(n, \tilde{L}, \delta^{-1})}$, 
\begin{align} 
\begin{split}
\left\|\nabla_W F(W,\p)\right\|_F^2 
\geq \Omega\left( \frac{\sqrt{p_0} \delta d_2}{d_0 n^2} (F(W,\p)-\min_{W'}F(W', \p))\right), 
\end{split} 
\label{local:by_allen3_union} 
\end{align} 
and 
\begin{align}
\begin{split}
\|\hat{y}_i - y_i\|^2\leq \text{poly}(d_2, d_0^{-1}, \tilde{L}), \|2(\hat{y}_i - y_i) \nabla_{W} \hat{y}_i\| \leq \text{poly}(d_2, d_0^{-1}, \tilde{L}). 
\end{split}
\label{eq:bound_yyy} 
\end{align}

For $\p$ not in $\mathcal{N}(\mathcal{P}, \epsilon')$, let $\hat{\p}$ be a point in $ \mathcal{N}(\mathcal{P}, \epsilon')$ such that $\|\hat{\p} - \p\|\leq \epsilon'$, we have
\begin{equation} 
\begin{split}
&2\|\nabla_W F(W, \p)\|_F^2 + O(\epsilon) \geq 2\|\nabla_W F(W, \p)\|_F^2 +   2\|\nabla_W F(W, \p)-\nabla_W F(W, \hat{\p})\|_F^2 \\  
&\geq \|\nabla_W F(W, \hat{\p})\|_F^2 \geq \Omega\left( \frac{\sqrt{p_0} \delta d_2}{d_0 n^2} (F(W,\hat{\p}) - \min_{W'}F(W', \hat{\p}))\right) \\
&\geq \Omega\left( \frac{\sqrt{p_0} \delta d_2}{d_0 n^2}(F(W,\p)-\min_{W'}F(W', \p))\right) - O(\epsilon),
\end{split} 
\end{equation} 
where the first inequality uses the second part of (\ref{eq:bound_yyy}) and $\epsilon' = O(\epsilon /\text{poly}(d_2, d_0^{-1}, \tilde{L}))$, and the last inequality uses the first part of (\ref{eq:bound_yyy}).  

We also have 
\begin{align}
\begin{split} 
F_{dro}(W) & =  \max_{\p\in\Delta, p_i\geq p_0}F(W, \p) - h(\p, 1/n) =  F(W, p^*(W)) - h(p^*(W), 1/n), \\
\nabla F_{dro}(W) & = \nabla_W F(W, p^*(W)), 
\end{split} 
\end{align}  
where the second line uses standard property of min-max problem \cite{lin2020gradient}.  
Thus (\ref{local:by_allen3}) implies that, with probability $1-\exp(-\tilde{\Omega}(d_2/\text{poly}(n, \tilde{L}, \delta^{-1})))$, 
it holds for every $W$ with $\|W -W_0\|^2 \leq \frac{1}{\text{poly}(n, \tilde{L}, \delta^{-1})}$,  
\begin{small}
\begin{align}
\begin{split}
&\left\|\nabla F_{dro}(W)  \right\|_F^2 + O(\epsilon) \\
&\geq \Omega \left( \frac{\sqrt{p_0} \delta d_2}{d_0 n^2} \left(F(W, p^*(W)) - h(p^*(W), 1/n)
- \min\limits_{W'} (F(W', p^*(W)) - h(p^*(W), 1/n))   \right) \right) \\ 
&\geq \Omega \left( \frac{\sqrt{p_0} \delta d_2}{d_0 n^2} \left(F(W, p^*(W)) - h(p^*(W), 1/n)
- \min\limits_{W'} \max\limits_{p'} (F(W', p') - h(p', 1/n))   \right) \right) \\ 
&\geq \Omega\left( \frac{\sqrt{p_0} \delta d_2}{d_0 n^2} (F_{dro}(W) - \min_{W'}F_{dro}(W'))\right), 
\end{split}  
\end{align}  
\end{small} 
where the second inequality holds due to the same reason as (\ref{eq:niaoheA3_1}) and (\ref{eq:niaoheA3_2}).

This means that $F_{dro}(W)$ satisfies a $\mu$-PL condition with $\mu \in O\left(\frac{\sqrt{p_0}\delta d_2}{d_0 n^2} \right)$ with an extra addition term of $O(\epsilon)$, which will be omitted later in the paper for simplicity. 
\end{proof} 

\subsection{Reduced Variance (Proof of Lemma \ref{lem:stage-variance})}

\begin{proof}
This lemma implies that the variance also decreasing with the increasing of stages.
By equation~(\ref{eqn:Derivation-of-Reduced-Variance}) and rearranging terms, the cumulative variance of $k$-th stage satisfies:
\begin{equation}
\begin{aligned}
  \E[  \sum\limits_{t=1}^{T_k} \frac{3L\eta_k}{4}[\|\varepsilon_{\u_t}\|^2 + \|\varepsilon_{\v_t}\|^2]] & \leq \frac{1}{128\eta_k L} \E[\|\varepsilon_{\v_{1}}\| + \|\varepsilon_{\u_1}\|^2] +  \frac{\eta_k^3 c^2 \sigma^2T_k}{32 L} + \sum\limits_{t=1}^{T_k} \frac{\eta_k}{16} \E [\| \mathcal{G}_{\eta_k}(\w_t)\|^2] \\ 
  & \leq \E[F(\w_1) - F_*] + \frac{c^2\sigma^2\eta_k^3 T_k}{4 L} + \frac{\E[\|\varepsilon_{\v_1}\|^2 +\|\varepsilon_{\u_1}\|^2] }{64 \eta_k L},\\
 \end{aligned} 
\end{equation}
where the second inequality uses Theorem \ref{thm:4}.  
Thus we have,
\begin{equation}
    \begin{aligned}
       \E[\|\varepsilon_{\u_\tau}\|^2 + \|\varepsilon_{\v_\tau}\|^2] \leq \frac{2\E[F(\w_1) - F_*]}{\eta_k T_k L} + \frac{c^2\sigma^2\eta^2_k }{3 L^2} + \frac{\E[\|\varepsilon_{\v_1}\|^2 +\|\varepsilon_{\u_1}\|^2] }{48\eta_k^2 L^2 T_k},
    \end{aligned}  
\end{equation} 
where $\tau$ is randomly sampled from $1, \cdots, T_k$. 

Without loss of generality, let's assume that $\epsilon_0=\Delta_F \geq \frac{c^2\sigma^2}{64 \mu L^4}$, i.e., $\frac{\sqrt{\mu\epsilon_0} L}{2c\sigma} \geq \frac{1}{16L}$. The case that $\Delta_F < \frac{c^2\sigma^2}{64 \mu L^4}$ can be simply covered by our proof. Then, denote $\epsilon_1 = \frac{c^2\sigma^2}{64\mu L^4}$ and $\epsilon_{k} = \epsilon_1/2^{k-1}$, $c = 104L^2$.

Let's consider the first stage, we have initialization such that $F(\w_1) - F_* = \Delta_F$ and  $\E[\|\varepsilon_{\v_1}\|^2+\|\varepsilon_{\u_1}\|^2] \leq \sigma^2$. Setting $\eta_1=\frac{1}{16L}$ and $T_1=O(\max(\frac{\Delta_F}{\sigma^2}, 1))$. Note that in below the numerical subscripts denote the epoch index ($1,...,K$). We bound the the error of first stage's output as follows, 
\begin{equation}
    \begin{aligned}
       & \E[\|\varepsilon_{\u_1}\|^2 + \|\varepsilon_{\v_1}\|^2] 
       \leq \frac{2\E[F(\w_1) - F_*]}{\eta_1 T_1 L} + \frac{c^2\sigma^2\eta_1^2 }{3L^2} +   \frac{\E[\|\varepsilon_{\v_1}\|^2 +\|\varepsilon_{\u_1}\|^2] }{48\eta_1^2 L^2 T_1 } \\  
       & = \frac{2\E[F(\w_{1}) - F_*]}{\eta_1 T_1 L} + \frac{c^2\sigma^2\eta_1^2 }{3 L^2} + \frac{\E[\|\varepsilon_{\v_{1}}\|^2  +\|\varepsilon_{\u_{1}}\|^2] }{48\eta_1^2 L^2 T_1 }\\
       & \leq \frac{2\epsilon_0}{\eta_1 T_1 L} +  \frac{c^2\sigma^2\eta_1^2 }{3L^2} + \frac{\sigma^2}{24 \eta^2_1 L^2 T_1} \leq  \frac{c^2\sigma^2}{64 L^4} =  {\mu\epsilon_1}.  
    \end{aligned}
\end{equation} 

Starting from the second stage, we will prove by induction.
Suppose we are at $k$-th stage. Assuming that $F(\w_{k-1}) - F_* \leq \epsilon_{k-1}$ and $\|\epsilon_{\v_{k-1}}\|^2 +\|\epsilon_{\u_{k-1}}\|^2 \leq \mu\epsilon_{k-1}$ after the $(k-1)$-th stage, we will show that  $\E[\|\varepsilon_{\u_k}\|^2 + \|\varepsilon_{\v_k}\|^2] \leq \mu \epsilon_{k}$ by induction. Note that the induction of $F(\w_k) - F(\w_0)$ will be addressed later in Theorem \ref{thm:main}.

\begin{equation}
    \begin{aligned}
       & \E[\|\varepsilon_{\u_k}\|^2 + \|\varepsilon_{\v_k}\|^2] 
       \leq \frac{2\E[F(\w_{k-1}) - F_*]}{\eta_k T_k L} + \frac{c^2\sigma^2\eta_k^2 }{3 L^2} + \frac{\E[\|\varepsilon_{\v_{k-1}}\|^2 +\|\varepsilon_{\u_{k-1}}\|^2] }{48\eta_k^2 L^2 T_k } \\ 
       & = \frac{2\E[F(\w_{k-1}) - F_*]}{\eta_k T_k L} + \frac{c^2\sigma^2\eta_k^2 }{3 L^2} + \frac{\E[\|\varepsilon_{\v_{k-1}}\|^2 +\|\varepsilon_{\u_{k-1}}\|^2] }{48\eta_k^2 L^2 T_k }\\
       & \leq \frac{2\epsilon_{k-1}}{\eta_k T_k L} +  \frac{c^2\sigma^2\eta_k^2 }{3 L^2} + \frac{\mu\epsilon_{k-1}}{\eta^2_k L^2 T_k} \leq  \frac{\mu\epsilon_{k-1}}{2} = {\mu\epsilon_k}, 
    \end{aligned}
\end{equation} 
where the last inequality follows from the setting that $\eta_k = \frac{\sqrt{\mu\epsilon_k} L}{2  c\sigma} \leq \frac{1}{16L}$, and  $T_k = \max \{\frac{96 c\sigma}{\mu^{3/2}\sqrt{\epsilon_k} L^2 },  \frac{16 c^2\sigma^2}{\mu  L^4 \epsilon_k}\}$, where $c = 104L^2$.  

%\begin{enumerate} 
%    \item By setting $b_k = \frac{\mu\sigma^2}{\epsilon_kL}$, $\eta_k = \frac{\mu}{L^2}$, $T_k = \frac{L}{\mu^2}$.
%    \item By setting $b_k = 1$, $\eta_k = \frac{L^{3/2} \sqrt{\mu\epsilon_k}}{c\sigma } $, $T_k = \max\{\frac{c\sigma}{\mu^{3/2}\sqrt{\epsilon_k}L^{1/2}}, \frac{\sigma^2}{\mu\epsilon_k}\}$.
%\end{enumerate} 
\end{proof}

\subsection{Poof of Theorem~\ref{thm:main}}
\begin{proof} 
Without loss of generality, let's assume that $\epsilon_0=\Delta_F \geq \frac{c^2\sigma^2}{64 \mu L^4}$, i.e., $\frac{\sqrt{\mu\epsilon_0 } L}{2c\sigma} \geq \frac{1}{16L}$. The case that $\Delta_F < \frac{c^2\sigma^2}{64 \mu L^4}$ can be simply covered by our proof. Then, denote $\epsilon_1 = \frac{c^2\sigma^2}{64\mu L^4}$ and $\epsilon_{k} = \epsilon_1/2^{k-1}$, $c = 104L^2$.  
%Note that in Theorem \ref{thm:1} and Lemma \ref{lem:stage-variance}, it is required that $\eta\leq \frac{1}{16L}$. This condition may not hold for every stage as $\eta_k=\frac{\sqrt{\mu\epsilon_kL}}{2c\sigma}$.
%Without loss of generality, let us suppose there is a $K_0 = 1$ such that $\frac{L\sqrt{\mu\epsilon_{0}}}{2c\sigma} > \frac{1}{16L}$, $\frac{\sqrt{L\mu\epsilon_{1}}}{2c\sigma} \leq \frac{1}{16L}$. We run $T_0$ iterations in the first stage, such that $\frac{\sqrt{L\mu\epsilon_1}}{2c\sigma}\leq \frac{1}{16L}$

Note that in below the numerical subscripts denote the epoch index ($1,...,K$) (different from in proof of Lemma \ref{lem:var} which all are in one stage). Let's consider the first stage, we have initialization such that $F(\w_0) - F_* = \Delta_F$ and $\E[\|\varepsilon_{\v_0}\|^2+\|\varepsilon_{\u_0}\|^2] \leq \sigma^2$.  We bound the the error of first stage's output as follows,
\begin{equation} 
    \begin{aligned}
        &\E[F(\w_1) -F_*]\leq \frac{1}{2\mu}\E[\|\mathcal{G}_{\eta_1}(\w_1)\|^2 ] \\
        &\leq \frac{8\E[F(\w_{0}) - F(\w_{*})]}{\mu\eta_1 T_1 L} + \frac{c^2\sigma^2}{4\mu L^2 }\eta_1^2 + \frac{\E[\|\varepsilon_{\v_{0}}\|^2  +\|\varepsilon_{\u_{0}}\|^2] }{16 \mu\eta_1^2 L T_1} \\  
        &\leq \frac{8\Delta_F}{\mu \eta_1 T_1 L} + \frac{c^2\sigma^2}{4\mu L^2}\eta_1^2 
        + \frac{\sigma^2}{8\mu\eta_1^2 L T_1} 
        \leq \frac{c^2 \sigma^2}{64 \mu L^4} = \epsilon_1, 
    \end{aligned} 
\end{equation}
where the first inequality uses PL condition, the second inequality use Theorem \ref{thm:4} and the fourth inequality uses the setting of $\eta_1=\frac{1}{16L}$ and $T_1 = O(\max(\frac{\Delta_F}{\sigma^2}, 1))$. 

Starting from the second stage, we will prove by induction.
Denote $\epsilon_1 = \frac{c^2\sigma^2}{64 \mu L^4}$ and $\epsilon_k \leq \epsilon_1/2^{k-1}$ for $k\geq 2$.
Suppose at the beginning of $k$-stage ($k\geq 2$), we have $\E[F(\w_{k-1}) - F_*] \leq \epsilon_{k-1}$ and $\E[\|\epsilon_{\v_{k-1}}\|^2+\|\epsilon_{\u_{k-1}}\|^2] \leq \mu \epsilon_{k-1}$.
When $k\geq 2$, we have $\eta_{k} =  \frac{\sqrt{\mu\epsilon_k} L}{2 c\sigma} \leq \frac{1}{16 L}$.
Then by Lemma ~\ref{lem:stage-variance} and Theorem~\ref{thm:4}, setting  $T_k = \max \{\frac{96 c\sigma}{\mu^{3/2}\sqrt{\epsilon_k}L }, \frac{16 c^2\sigma^2}{\mu L^2 \epsilon_k}\}$,  RECOVER satisfies the following equations at the $k$-th stage, 
\begin{equation} 
\label{eqn:stage-decay}
    \begin{aligned} 
        &\E[F(\w_k) -F_*]\leq \frac{1}{2\mu}\E[\|\mathcal{G}_{\eta}(\w_k)\|^2 ] \\
        &\leq \frac{8\E[F(\w_{k-1}) - F(\w_{*})]}{\mu\eta_k T_k } +  \frac{c^2\sigma^2}{4\mu L }\eta_k^2 + \frac{\E[\|\varepsilon_{\v_{k-1}}\|^2 +\|\varepsilon_{\u_{k-1}}\|^2] }{16 \mu\eta_k^2  L T_k} \\  
        & \leq \frac{8\E[F(\w_{k-1}) - F_*]}{\mu\eta_k T_k}+ \frac{c^2\sigma^2}{4\mu L }\eta_k^2   +\frac{\|\varepsilon_{\v_{k-1}}\|^2 +\|\varepsilon_{\u_{k-1}}\|^2 }{16\mu \eta_k^2  L T_k} \\
        & \leq \frac{8\epsilon_{k-1}}{\mu\eta_k T_k} + \frac{c^2\sigma^2}{4\mu L }\eta_k^2  +\frac{\mu\epsilon_{k-1} }{16\mu \eta_k^2  L T_k} \\
        &\leq \frac{\epsilon_{k-1}}{2} \leq \epsilon_k,
    \end{aligned} 
\end{equation}
where the forth inequality is implied by the induction hypothesis and the last inequality holds by the setting of $\eta_k$ and $T_k$.

Combing two cases, after $K  \leq 1 + \log_{2}(\epsilon_{1}/\epsilon) \leq \log_{2}(\epsilon_{0}/\epsilon)$ stages, $\E[F(\w_{k})- F(\w_{*})]\leq \epsilon$.

%By setting $b_k = 1$, $\eta_k = \frac{\sqrt{\mu\epsilon_k}L}{c\sigma} = \frac{\sqrt{\mu\epsilon_k}}{25L^2\sigma} \leq \frac{1}{9L^{{3/2}}}$, $T_k = \max\{\frac{L^2\sigma}{\mu^{3/2}\sqrt{\epsilon_k}}, \frac{L^2\sigma^2}{\mu\epsilon_k}\}$,

By setting $c = 104L^2$, following th the proof of Theorem~\ref{thm:main}, the sample complexity of RECOVER equals to the number of samples in the first stage, i.e., $T_1$ plus the number of samples in later stages, i.e. $\sum\limits_{k=2}^K T_k$, which is 
\begin{align*}
&T_1 + \sum\limits_{k=2}^{K}T_{k}
 = O\left(\frac{\Delta_F}{\sigma^2} + \sum\limits_{k=2}^{K}T_{k} \right) \\ 
 = & O\left(\sum\limits_{k=2}^K \left( \frac{c\sigma}{\mu^{3/2}\sqrt{\epsilon_k}L}+ \frac{L^2\sigma^2}{\mu \epsilon_k}\right) \right) \\
 \leq & O\left(\frac{c\sigma}{\mu^{3/2}\sqrt{\epsilon} L } + \frac{c^{2}\sigma^{2}} {L^2\mu\epsilon}  \right)
 \overset{\mu>\epsilon}{\leq}   O\left(\frac{1}{\mu\epsilon}\right).  
\end{align*}
\end{proof}

\section{Derivation of the Compositional Formulation}
Recall the problem: \begin{align*}
\min_{\mathbf w\in\mathbb R^d}\max_{\mathbf p\in\Delta_n}F_{\mathbf p}(\mathbf w)= \sum_{i=1}^np_i\ell(\mathbf w; \mathbf z_i)  - h(\mathbf p, \mathbf 1/n) +  r(\mathbf w),
\end{align*}
where $\Delta_n=\{\mathbf p\in\mathbb R^n: \sum_i p_i=1, 0\leq p_i\leq 1\}$.  In order to solve the inner maximization, we will fix $\w$ and derive an optimal solution $\p^*(\mathbf w)$ that depends on $\w$. To this end, we consider the following problem: 
\begin{align*}
\min_{\mathbf p\in\Delta_n} -\sum_{i=1}^np_i\ell(\mathbf w; \mathbf z_i)  +  h(\mathbf p, \mathbf 1/n)
\end{align*}
where $r(\mathbf w)$ was neglected since it does not involve $\mathbf p$. Note the expression of $h(\mathbf p, \mathbf 1/n)=\lambda\sum_i p_i \log (np_i) = \lambda\sum_i p_i \log (p_i) + \lambda \log (n)$ due to $\sum_i p_i=1$. There are three constraints to handle, i.e., $p_i\geq 0, \forall i$ and $p_i\leq 1, \forall i$ and $\sum_i p_i=1$.  Note that the constraint $p_i\geq 0$ is enforced by the term $p_i\log(p_i)$, otherwise the above objective will become infinity. As a result, the constraint $p_i<1$ is automatically satisfied due to $\sum_ip_i=1$ and $p_i\geq0$. Hence, we only need to explicitly tackle the constraint $\sum_ip_i=1$. To this end, we define the following Lagrangian function
\begin{align*}
 L_{\mathbf w}(\mathbf p, \mu) = -\sum_{i=1}^n p_i\ell(\mathbf w; \mathbf z_i)  +  \lambda(\log n + \sum_i p_i \log (p_i)) + \mu(\sum_i p_i - 1) 
\end{align*}
where $\mu$ is the Lagrangian multiplier for the constraint $\sum_i p_i=1$. The optimal solutions satisfy the KKT conditions:  
\begin{align*}
    &- \ell(\mathbf w; \mathbf z_i)  +  \lambda (\log (p^*_i(\mathbf w)) + 1) + \mu  = 0, \\
    &\sum_i p^*_i(\mathbf w)=1
\end{align*}
From the first equation, we can derive $p^*_i(\mathbf w) \propto \exp(\ell(\mathbf w; \mathbf z_i)/\lambda)$. Due to the second equation, we can conclude that $p^*_i(\mathbf w) = \frac{\exp(\ell(\mathbf w; \mathbf z_i)/\lambda)}{\sum_i \exp(\ell(\mathbf w; \mathbf z_i)/\lambda)}$. Plugging this optimal $\p^*(\w)$ into the original min-max objective, we have 
\begin{align*}
     \sum_{i=1}^np^*_i(\mathbf w)\ell(\mathbf w; \mathbf z_i)  - \lambda(\log n + \sum_i p_i^*(\mathbf w) \log (p_i^*(\mathbf w)) ) +  r(\mathbf w) = \lambda \log \frac{1}{n}\sum_i \exp(\ell(\mathbf w; \mathbf z_i)/\lambda) + r(\mathbf w), 
\end{align*}
which is the $F_{dro}(\mathbf w)$ in the paper (the expression above Eq (2)). 
\end{document}